\newtheorem{theorem}{Theorem}[section]
\newtheorem{lemma}[theorem]{Lemma}
\newtheorem{corollary}[theorem]{Corollary}
\newtheorem{proposition}{Proposition}[section]
\newtheorem{definition}{Definition}[section]
\newtheorem{remark}{Remark}[section]
\newcommand{\cX}{\mathcal{X}}
\newcommand{\cN}{\mathcal{N}}
\newcommand{\cY}{\mathcal{Y}}
\newcommand{\cH}{\mathcal{H}}
\newcommand{\cG}{\mathcal{G}}
\newcommand{\cD}{\mathcal{D}}
\newcommand{\cQ}{\mathcal{Q}}
\newcommand{\cE}{\mathcal{E}}
\newcommand{\hy}{\widehat{y}}
\newcommand{\regret}{\textsc{Regret}}
\DeclareMathOperator*{\Ex}{\mathbb{E}}
\newcommand{\sD}{\mathscr{D}}
\newcommand{\sQ}{\mathscr{Q}}
\newcommand{\OPT}{\mathsf{OPT}}
\newcommand{\opt}{\mathsf{opt}}
\newcommand{\unif}{\mathcal{U}}
\newcommand{\indicator}[1]{\mathbf{1}\!\left\{#1\right\}}
\newcommand\myatop[2]{\genfrac{}{}{0pt}{}{#1\hfill}{#2\hfill}}
\newcommand{\rel}{\textbf{Rel}}
\newcommand{\simiid}{\overset{\text{iid}}{\sim}}
\newcommand{\stepa}[1]{\overset{\rm (a)}{#1}}
\newcommand{\stepb}[1]{\overset{\rm (b)}{#1}}
\newcommand{\stepc}[1]{\overset{\rm (c)}{#1}}
\newcommand{\stepd}[1]{\overset{\rm (d)}{#1}}
\crefname{lemma}{lemma}{lemmas}
\Crefname{Lemma}{Lemma}{Lemmas}
\crefname{ineq}{inequality}{inequalities}
\Crefname{Ineq}{Inequality}{Inequalities}
\crefname{definition}{definition}{definitions}
\Crefname{definition}{Definition}{Definitions}
\crefname{prop}{proposition}{propositions}
\Crefname{Prop}{Proposition}{Propositions}
\begin{document}
\title{Oracle-Efficient Online Learning for Beyond Worst-Case  Adversaries\thanks{An extended abstract of this work was published under the title ``Oracle-efficient Online Learning for Smoothed Adversaries'' in the Proceedings of the $36$th Conference on Neural Information Processing Systems~\cite{haghtalab2022oracle}}}
\author{Nika Haghtalab} \author{Yanjun Han} \author{Abhishek Shetty} \author{Kunhe Yang} 

\affil{University of California, Berkeley\\{\small\texttt{\{nika,yjhan,shetty,kunheyang\}@berkeley.edu}}}
\date{}
\maketitle

\allowdisplaybreaks

\begin{abstract}
  In this paper, we study oracle-efficient algorithms for beyond worst-case analysis of online learning.
  We focus on two settings. First, the smoothed analysis setting of~\cite{NIPS2011_4262,haghtalab2022smoothed} where an adversary
  is constrained to generating samples from distributions whose density is upper bounded by $1/\sigma$ times the uniform density. Second, the setting of $K$-hint transductive learning, where the learner is given access to $K$ hints per time step that are guaranteed to include the true instance. 
  We give the first known oracle-efficient algorithms for both settings that depend only on the pseudo (or VC) dimension of the class and parameters $\sigma$ and $K$ that capture the power of the adversary. 
  In particular, we achieve oracle-efficient regret bounds of  $ \widetilde{O} (  \sqrt{T d\sigma^{-1}} ) $ and $ \widetilde{O} (  \sqrt{T dK} ) $ for learning real-valued functions and $ O (  \sqrt{T d\sigma^{-\frac{1}{2}} }  )$ for learning binary-valued functions.
For the smoothed analysis setting, our results give the first oracle-efficient algorithm for online learning with smoothed adversaries~\cite{haghtalab2022smoothed}. This contrasts the computational separation between online learning with worst-case adversaries and offline learning established by 
\cite{HazanKoren}.
{Our algorithms also achieve improved bounds for worst-case setting with small domains.} In particular, we give an oracle-efficient algorithm with regret of $O ( \sqrt{T(d \abs{\cX})^{1/2} })$, which is a refinement of the earlier $O ( \sqrt{T\abs{\cX}})$ bound by \cite{daskalakis2016learning}.  

   \end{abstract}

\section{Introduction}

Adversarial online learning is a cornerstone of modern machine learning and has led to significant advances in computer science broadly.
A recent line of work on ``beyond the worst-case analysis'' of online learning has brought into light the overly pessimistic nature of standard characterizations of online learnability~\cite{NIPS2011_4262, Gupta_Roughgarden, haghtalab2020smoothed,haghtalab2022smoothed}.
This is exemplified by the results of \cite{haghtalab2022smoothed} showing that adversarial online learnability is \emph{statistically} as easy as PAC learnability, in presence of noise. That is, under \emph{smoothed analysis}, online and offline learnability are both characterized by the finiteness of the VC dimension of a hypothesis class as opposed to the much larger Littlestone dimension that characterizes online learnability in the worst-case~\cite{ben2009agnostic}.
However, to fully deliver on the promise revealed by these statistical insights, there needs to be an algorithmic framework for realizing this connection between online and offline learnability. In this paper, we ask 
\begin{quote}
\centering
\emph{
whether efficient offline learning algorithms lead to efficient online learning algorithms with comparable regret guarantees, when the adversary is not worst-case, such as under the smoothed analysis framework?
}
\end{quote}

\paragraph{Online Learning beyond the Worst-Case.}
Smoothed analysis is a perspective on algorithm design, introduced by \cite{ST04} and formalized for online learning by~\cite{rakhlin2012relax,haghtalab2020smoothed}, in which the adversary is restricted to generating an instance at every round from a distribution that is not overly concentrated, i.e., a distribution whose density is upper bounded by $1/\sigma$ times that of the uniform distribution\footnote{
    While we use the uniform distribution as the base measure for ease of exposition,    
    our results also generalize to arbitrary known base measures.}.
The smoothness of the adversary's actions captures the noise and imprecision inherent in the real world. {A related perspective is that of \emph{transductive} learning that constrains the adversary to producing its sequence from a known set of $T$ unlabeled instances.
We work with a generalization of the transductive learning framework, which we call \emph{transductive learning with $K$ hints}, that constrains the adversary to forming a $T$-long sequence from a known  unordered set of size $KT$. As $K$ and $1/\sigma$ decrease, so does the uncertainty of the learner about the future instances or their distributions.
These  models gracefully capture the expressivity of worst-case instances while circumventing the overly pessimistic nature of the worst-case analysis.}

\paragraph{Computation and Efficiency.}
The question of whether offline learning algorithms  can lead to online learning algorithms is naturally captured by the \emph{oracle-efficiency} framework~(e.g., \cite{Oracle_Efficient,FTPL,HazanKoren}). In this setting, we have access to an offline learning algorithm or equivalently an \emph{empirical risk minimization (ERM)} oracle  which can compute an optimal hypothesis given any history of the actions of the adversary{, using $O(1)$ computation. Efficient algorithms must then be designed to tap into the existing ERM oracle, using polynomial number of calls and computation.} 

Our main goal is to design oracle-efficient online algorithms whose regret { bounds resemble the statistically optimal regret bounds as much as possible. In particular, under smoothed analysis and $K$-hint transductive learning, 
these bounds must be characterized by} offline statistical complexity measures, such as the VC dimension or pseudo dimension of a hypothesis class.
Interestingly, \cite{HazanKoren} showed that such {computationally algorithms cannot exist for fully worst-case adversaries. 
Therefore, designing oracle-efficient online algorithms for adversaries who are not fully worst-case, must \emph{simultaneously overcome both statistical and computational impossibilities.}
{This is what we achieve by the main contributions of this paper.}}

\subsection{Main Results}
\allowdisplaybreaks

\begin{table}
    \centering
    \footnotesize
    \begin{tabular}{|c|c|c|c|c|c|}\hline
    &\multicolumn{2}{|c|}{Problem Class} & Regret Bound &Method&Reference\\\hline
    \multirow{3}{*}{\parbox[t]{2cm}{Statistical\\Upper Bound}}&
    \multirow{2}{*}{\parbox[t]{1.5cm}{Smoothed \\ Online\\ Learning}}&
    {Binary}&{$\widetilde{O}\left(\sqrt{dT\ln(\sigma^{-1}) }\right)$} &\multirow{2}{*}{\parbox[t]{2.5cm}{$\epsilon$-Net and \\ prob. coupling}}&\cite[Thm 3.1]{haghtalab2022smoothed}\\
    \hhline{~~--~-}
    &&{Real-values} &{$\widetilde{O}\left(\sqrt{dT\ln(\sigma^{-1}) }\right)$}&& Thm~\ref{thm:real-statistical}\\
    \hhline{~-----}
    &{\parbox[t]{1.5cm}{$K$-hint\\ Transductive\\ Learning}}&
    Real/Binary&{$\widetilde{O}\left(\sqrt{dT\ln (K) }\right)$} &{\parbox[t]{2.5cm}{ 
    $\epsilon$-Net and \\property of hints}}&Thm~\ref{thm:transductive-statistical}\\
    \hline
    \multirow{3}{*}{\parbox[t]{2cm}{Conputational\\Upper Bound}}&
    \multirow{2}{*}{\parbox[t]{1.5cm}{Smoothed \\ Online\\ Learning}}&
    {Binary}&{$\widetilde{O}\left(\sqrt{ d  T \sigma^{-1/2} }  \right)$} &{\parbox[t]{2.7cm}{Poissonized FTPL\\ (1 oracle call /round)}}&Thm~\ref{thm:FTPL}\\
    \hhline{~~----}
    &&{Real/Binary}&{$\widetilde{O}\left(\sqrt{dT\sigma^{-1} }\right)$}&\multirow{2}{*}{\parbox[t]{2.7cm}{Relax-and-Randomize\\(2 oracle calls/round)}}& Thm~\ref{thm:regret-real}\\
    \hhline{~---~-}
    &{\parbox[t]{1.5cm}{$K$-hint\\ Transductive\\ Learning}}&
    {Real/Binary}&{$\widetilde{O}\left(\sqrt{dTK }\right)$} &&Thm~\ref{thm:regret-transductive}\\
    \hline
    \multirow{3}{*}{\parbox[t]{2cm}{Conputational\\Lower Bound}}&
    \multirow{2}{*}{\parbox[t]{1.5cm}{Smoothed \\ Online\\ Learning}}&
    {Alg-independent}&{$\Omega\left(\sqrt{T(d/\sigma)^{1/2}}\right)$} &{\parbox[t]{2.7cm}{Construction for\\ runtime $o(\sqrt{d/\sigma})$}}&Thm~\ref{thm:comp_lower_bound}\\
    \hhline{~~----}
    &&Algorithm (\ref{alg:real-valued},\ref{alg:FTPL})&{$\Omega\left(\sqrt{dT\sigma^{-1/2}}\right)$}&\multirow{2}{*}{}& \multirow{2}{*}{Thm~\ref{thm:lower_bound}}\\
    \hhline{~---~~}
    &{\parbox[t]{1.5cm}{$K$-hint\\ Transductive\\ Learning}}&
    Algorithm (\ref{alg:transductive_k_hints})&{$\Omega\left(\sqrt{dTK^{1/2}}\right)$} &&\\
    \hhline{======}
    \multirow{2}{*}{\parbox[t]{2cm}{Classical \\ settings}}&\multicolumn{2}{|c|}{Small-domain}&$O \left( \sqrt{T(d\abs{\cX})^{1/2}}\right)$&\multirow{2}{*}{Poissonized FTPL}&\multirow{2}{*}{Cor~\ref{cor:smalldomain}}\\
    \hhline{~---~~}
    &\multicolumn{2}{|c|}{Transductive learning}&$O \left( T^{3/4}d^{1/4}\right)$&&\\\hline
    \end{tabular}
    \caption{\small{This table summarized the main results of this paper for smoothed and transductive adversaries. Here, $d$ represents the pseudo-dimension or VC dimension of the hypothesis class $\cH$, $\sigma$ is the smoothness parameter, $K$ is the parameter for the number of hints in the transductive learning, and $T$ is the number of time steps.}}
      \label{table-of-results}
\end{table}

Our work considers two settings in the beyond worst case analysis of online learning: Smoothed analysis of online learning and $K$-hint transductive learning.
In both cases, we give the first oracle-efficient online learning algorithms whose regret is characterized by the statistical offline complexity measures. In particular, we show that there are efficient algorithms, given access to an ERM oracle, that achieve sublinear regret that depends only on the  pseudo- (or VC) dimension of a class of hypothesis, as well as a measure of parameters of these models that capture the power of the adversary, i.e., $\sigma$ and $K$ respectively. 
{We study both the real-valued and binary valued losses and achieve nearly tight upper and lower bound for these settings.}
We summarize our main results in \cref{table-of-results}.

{\paragraph{Upper bounds and Algorithms.}
For the general case, we design an algorithm based on the  \emph{Relax-and-Randomize} principle of \cite{rakhlin2012relax} that achieves a} regret bound of 
$O(\sqrt{TdK})$ for $K$-hint transductive learning and
$ O ( \sqrt{T d \sigma^{-1} }) $ for smoothed online learning, where $d$ is the pseudo-dimension of the hypothesis class. This algorithm uses $2$ oracle calls per round.
We improve these regret bounds for the binary classification setting under smoothed analysis and achieve regret of $ O( \sqrt{T d \sigma^{-1/2} }  )$. The algorithm that achieves these improved bounds is a variant of FTPL that uses Poisson random variables in the design of its perturbations. This algorithm uses $1$ oracle call per round.
{The primary reason we can achieve improved regret in the binary setting is that the $\sigma$-smoothness over the instance domain $\cX$ also implies $\sigma/2$-smoothness over the instance-label pairs $\cX\times\cY$.}

{While these bounds demonstrate sublinear regret that only depends on $d$, $K$ and $\sigma$, their dependence on $K$ and $\sigma$ does not match the (non-efficient) statistically optimal regret bound of
$\widetilde{O}(\sqrt{Td\ln(K)})$ and
$\widetilde{O}(\sqrt{Td\ln(1/\sigma})$. For the case of binary classification under smoothed analysis, the statistically optimal regret bound is due to \cite{haghtalab2022smoothed}. For the other settings, we provide the optimal statistical rates in \cref{appendix:real-valued}.
}

\paragraph{Lower bounds.}{
We further investigate the gap between the computational and statistical regret upper bounds.
We present an algorithm-independent lower bound for the setting of smoothed online learning that shows that any algorithm with runtime $o(\sqrt{d/\sigma})$ will incur a  $\Omega(\sqrt{T(d/\sigma)^{1/2}})$ regret. We note that this lower bound demonstrates the same dependence on $1/\sigma$ as our upperbound (for the binary classification setting under smoothed analysis).
We also provide algorithm-dependent regret lower bounds that demonstrate improved dependence on parameters $K$ and $d$, and apply to all relax-and-randomize and FTPL-style algorithms.}

{\paragraph{Improved Bounds for the Classical Settings.}
In addition to constrained adversaries, our algorithms and analysis also imply improved regret bounds for two classical settings, namely, \emph{online learning in bounded domains with worst-case adversaries} and traditional \emph{online transductive learning}. For worst-case adversaries in binary classification with domain size $\abs{ \cX }$, we achieve a} regret of $ O( \sqrt{T (d \abs{ \cX} )^{1/2}}  ) $, improving upon the $ O(\sqrt{T \abs{\cX} }) $ bound of \cite{daskalakis2016learning}.
{For online transductive binary classification we achieve a regret bound of} $ O( T^{3/4} d^{1/4} ) $, improving upon the $ O( T^{3/4} d^{1/2}  )  $ bound in \cite{kakade2006batch}. {Both improvements are enabled by our novel Poissonized FTPL analysis and techniques which achieve stronger regret bounds in the binary classification setting.}

\subsection{Technical Overview}

    \paragraph{Relaxations for Beyond Worst-Case Adversaries.}
    The design of our algorithms is based on the random playout technique and we will use 
the \emph{admissible relaxation}  framework of \cite{rakhlin2012relax} to analyze these algorithms. 
    Each of our algorithms generates and randomly labels instances as a stand-in for the future. We show how this framework is especially useful for analyzing online learning algorithms in the beyond worst-case setting.
    In particular, the $K$-hint transductive setting, in contrast to the classical transductive setting, includes hints that are never realized as part of the sequence.
     To show that these hints have a small impact on the achievable regret, we prove that the \emph{regularized Rademacher complexity} is monotone in the set of provided hints (\Cref{lem:monotonicity}).
    This monotonicity allows us to leverage existing relaxations that are admissible for highly predictable adversaries and to show that the regret gracefully degrades as a function of the power of the adversary.

    \paragraph{Self Generation of Hints.}
In the smoothed analysis setting, the algorithm does not have access to hints, nevertheless, smoothness captures a level of predictability about the future. 
This is captured by a technique from  \cite{haghtalab2022smoothed}
    that shows that any sequence of $T$ instances generated by adaptive smoothed adversaries can be seen being a subset of set of $T/\sigma$ uniformly random instances from $\cX$  with high probability.  We use this to self-generate hints and draw a parallel between $K$-hint transductive learning and smoothed analysis.
    Though the next instance given by the adversary is not guaranteed to be in this hint set, we show that this process accounts for the uncertainty in each step.

    Another key aspect is the distribution from which hints are self-generated.
    We will crucially use the Poissonization technique
    in which we generate the number of hints from an appropriately chosen Poisson distribution.
    This allows us an additional degree of independence that is essential to controlling the loss from one step to the next.
   Both of these properties are key in our analyses of the admissibility of the corresponding relaxation.

    \paragraph{Generalization, Stability, and Admissibility.}
    In addition, we make use of a connection between the traditional notion of algorithmic stability and admissibility.   
    Stability is a well-studied notion capturing how little the distribution of the actions of the learner changes across time steps and is used in the analyses of algorithms such as follow-the-perturbed-leader and follow-the-regularized-leader.
    We show that the admissibility of the relaxation in the smoothed online setting is implied by the stability. This allows us to use information theoretic techniques that crucially exploit 
    the independence provided by Poissonization as discussed earlier. 

    {As we show, the stability analysis of the algorithm also depends crucially on the generalization error of the ERM output when it is trained on uniformly self-generated hints and 
    tested on smoothly distributed fresh instances.}
    In order to bound the generalization error, we take advantage of a stronger property implied by the coupling lemma from \cite{haghtalab2022smoothed}.
    This states that there exists a coupling between uniform and adaptive smooth processes, such that when the inclusion property is satisfied, the distribution of the uniform variables realized in the inclusion \emph{conditional on the unrealized uniform variables} is also identical to the smooth distributions given by the adversary. This will be instrumental for bounding the generalization error by allowing us to  extract smooth variables from a set of uniform variables, which can then be used to for the purpose of symmetrization. This result may be of independent interest.

\subsection{Related works}

Our work relates to several paradigms and approaches to online learnability.

\paragraph{Oracle-Efficient Online Learning.}
Since the seminal work of \cite{FTPL}, inspired by application domains such as game theory, there has been a long line of work elucidating the computational aspects of online learning. 
\cite{FTPL} proposed the influential follow-the-perturbed-leader algorithm.
\cite{KakadeKalaiApproximation} study notions of regret when the learner is given access to an approximate optimization oracle.
\cite{kakade2006batch} study the transductive learning setting and give an efficient algorithm that converts offline learnablity to online learnability.
\cite{rakhlin2012relax} propose a general \emph{admissible relaxation} framework to develop efficient algorithms based on the upper bound of the value of the game. 
This framework has also been built on and extended in a variety of settings such as auctions~\cite{Oracle_Efficient}, contextual learning~\cite{syrgkanis2016efficient,foster2020beyond,simchi2021bypassing}, and reinforcement learning~\cite{foster2021statistical}.
On the flip size, \cite{HazanKoren} show that an $\Omega(\sqrt{N})$ lower bound is unaviodable in general in order to obtain nontrivial regret where the $N$ is the number of actions of the learner suggesting that one needs to look beyond the worst-case in order to get truly efficient algorithms.

\paragraph{Beyond Worst-case Approaches to Online Learning.}

Various notion of beyond worst-case behavior of online learning has been studied in the literature.
\cite{NIPS2011_4262} consider online learning where the adversary is constrained and build a framework based on minimax analysis and constrained sequential Rademacher complexity to analyze regret in these scenarios.
These techniques have been applied to other constrained settings \cite{activelearning}. 

\cite{Gupta_Roughgarden} consider smoothed online learning when looking at problems in online algorithm design.  
They prove that while optimizing parameterized greedy heuristics for combinatorial problems
in presence of smoothing this problem can be learned with non-trivial sublinear regret.
\cite{Cohen-Addad} consider the same problem with an emphasis on the per-step runtime being logarithmic in $T$. 
\cite{haghtalab2020smoothed,haghtalab2022smoothed} both study the notion of smoothed analysis with adaptive adversary and show that statistically the regret is bounded by $ O ( \sqrt{T d \log ( 1 / \sigma ) } ) $.

Smoothed analysis has also been studied in a number of other online settings, including linear contextual bandits, \cite{kannan2018smoothed,raghavan2018externalities}, but the focus has been on achieving improved regret bounds for the greedy algorithm that is not no-regret in the worst-case.

Another line of work has focused on the future sequences being predictable given the past instances. 
\cite{PredictableSequences} incorporate additional information available in terms of an estimator for future instances.
They achieve regret bounds depending on the path length of these estimators and can  beat the worst-case $\Omega(\sqrt{T})$ if the estimators are accurate. 
\cite{hazan2007online} models predictability as knowing the first coordinate of loss vectors, which is revealed to the learner before he chooses actions.
Some other work model predictability through hints which are additive estimate of loss vectors \cite{hazan2010extracting,rakhlin2013online,steinhardt2014adaptivity,mohri2016accelerating}. 
\cite{dekel2017online,hints} considers settings where the learner has access to hints in form of vectors that are weakly correlated with the future instances and show exponential improvement in the regret in some cases. The literature on hints represents an active and growing subarea of online learning (see \cite{hints} and references within).

\paragraph{Concurrent Work.}
In a concurrent and independent work, \cite{BlockDGR22} also gives oracle-efficient algorithms for smoothed online learning. 
\cite{BlockDGR22} obtains a regret bound of
$\widetilde{O}(\sqrt{Td \sigma^{-1}})$.
In comparison, our main result (Theorem~\ref{thm:FTPL}) concerning smoothed online learning demonstrates a regret bound of 
$\widetilde{O}(\sqrt{Td \sigma^{-1/2}})$ with strictly better dependence on $\sigma$.
Our regret bound's improved dependence on parameter $\sigma$ can be attributed the following technical innovations of this work:  1) The relationship between stability and relaxation-based methods,
2) the careful analysis of \emph{modified generalization error} and \emph{stability} via a new coupling-based argument, and 3) a Poissonization approach for \emph{self-generating} hints that allow us to leverage information theoretic arguments. {Importantly, the novel techniques for achieving improved dependence on $\sigma$ also enable us to improve on the small-domain result of \cite{daskalakis2016learning} and the transductive learning result of \cite{kakade2006batch}.}

Interestingly, both our work and \cite{BlockDGR22} use a relaxation-based algorithms as a warmup achieving regret bounds of $\widetilde{O}(\sqrt{Td \sigma^{-1}})$ (our Theorem~\ref{thm:regret-real}) and $\widetilde{O}(\sigma^{-1}\sqrt{Td})$ \cite[Theorem 8]{BlockDGR22} for learning real-valued functions.
For the warmup, our regret bound's improved dependence on parameter $\sigma$ is due to our different approach to self-generating hints that allow us to leverage the monotonicity of the Rademacher complexity.

\vspace{-6pt}
\section{Preliminaries}
Let $\cX$ be the space of instances, $\cY=[-1,1]$ be the space of labels, 
and $\cH:\cX\to\cY$ be the hypothesis class with pseudo dimension $d$ (See \cref{def:pseudo-dimension} or \cite{anthony1999neural} for the definition of pseudo dimension). Let $l:\cY\times\cY\to[0,1]$ be a convex loss function with Lipschitz constant $G$ in its first component.

In online learning with adaptive adversaries, 
the learner and the adversary plays a repeated game for $T$ time steps.
At each step $t\in[T]$,
the adversary chooses a distribution $\cD_t^{\cX}\in\Delta(\cX)$. A random instance $x_t\sim \cD_t^{\cX}$ is then drawn and presented to the learner. After receiving $x_t$, the learner predicts its label to be $\hy_t\in\cY$, while the adversary simultaneously chooses $y_t\in\cY$ as its true label. The learner then suffers loss $l(\hy_t,y_t)$.
This is equivalent to a setting where the adversary chooses a distribution $\cD_t\in\Delta(\cX\times\cY)$ over labeled instances $s_t=(x_t,y_t)$ and the learner simultaneously chooses a classifier $h_t\in\cY^{\cX}$. 
We will abbreviate $\cD_t^{\cX}$ to $\cD_t$ when it is clear from the context.

We allow the adversary to be adaptive, i.e., it can choose each $\cD_t$ based on the realizations of the inputs as well as the learner's predictions in previous time steps. Let $\sD$ denote the adaptive sequence of distributions $\cD_1,\cdots,\cD_T$. 
Accordingly, let $\cQ_t\in\Delta(\cY)$ denote the learner's prediction rule on instance $x_t$, and let $\sQ$ denote the adaptive sequence of distributions $\cQ_1,\cdots,\cQ_T$.
We denote the expected regret of a learner with prediction rules $\sQ$ on the adaptive sequence $\sD$ by
\begin{align}
    \Ex[\regret(T, \sD, \sQ)]=\Ex_{\sD,\sQ}\left[\sum_{t=1}^T l(\hy_t,y_t)-\inf_{h\in{\cH}}\sum_{t=1}^T l(h({x_t}),y_t)\right]. \label{eq:regret}
\end{align}
We remove $\sD$ and $\sQ$ from this notation when they are clear from the context.

\paragraph{Offline Optimization Oracle}
We consider computationally efficient algorithms given access to an offline optimization oracle. 
For the case of binary classification, the oracle outputs the solution of empirical risk minimization on the input data.

\begin{definition}[ERM Oracle]
    \label{def:ERM}
    For a hypothesis class $\cH$ and a loss function $l$, the oracle $\OPT$ ($\opt$) takes a set~\footnote{The inputs to the oracle are multisets. Unless specified otherwise, all the sets in this paper refer to multisets.} of inputs {$S = \{ (x_i,y_i) \}_{i\in [I]}$ where  $ (x_i,y_i)\in\cX\times\cY$ for all $i\in[I]$} and returns
    \vspace{-4pt}
\begin{align}
    \OPT_{\cH,l}({S}) = \inf_{h\in\cH}\sum_{i=1}^{I} l(h(x_i),y_i) ~\text{ and }~
    \opt_{\cH,l}({S}) \in \arg\inf_{h\in\cH}\sum_{i=1}^{I} l(h(x_i),y_i).
\end{align}
\end{definition}

For the case of real-valued functions, we consider an oracle that can minimize a mixture of binary and real-valued loss values defined below.

\begin{definition}[Real-valued optimization oracle]
    \label{def:real-valued-oracle}
  For a hypothesis class $\cH$ and two loss functions $l^r$ and $l^b$, the oracle $\OPT$ takes two sets of inputs $S$ and $S'$ over $\cX\times \cY$ and returns
$$
    \OPT_{\cH,l^{\mathrm{r}}, l^{\mathrm{b}}}(S; S') = \inf_{h\in\cH}\Big(\sum_{(x,y)\in S} l^{\mathrm{r}}(h(x),y) + \sum_{(x',y')\in S'} l^{\mathrm{b}}(h(x'), y')\Big).
 $$ 
\end{definition}
\vspace{-8pt}

We remark that these oracles are used in most previous works, including~\cite{rakhlin2012relax}. They constitute a special form of regularized loss minimization oracles, where the regularization is given directly by a random process. 
For the binary setting where $\cY=\{\pm1\}$ and $l^{\mathrm{r}}=l^{\mathrm{b}}=\indicator{\hy\neq y}$, the above optimization oracle is equivalent to ERM oracles.

We consider each call to the offline optimization oracle as having unit cost plus the additional runtime needed for creating and inputting the set of inputs that is linear in the length of the said histories. We note that our approach and results directly extend to using ERM oracles with (arbitrarily small) additive approximation error, such as those guaranteed by FPTAS optimization algorithms, using standard techniques presented by \cite[Section 6]{Oracle_Efficient}.

\begin{remark}
    Though the oracles as defined above are required to work on arbitrary inputs, both \cref{alg:real-valued,alg:FTPL} from our work only call the optimization oracle on instances sampled from the smoothed distributions or the uniform distribution.
    Thus, it suffices to have oracles that work for average-case instances.
    This makes the design of such oracles easier both from theoretical and practical points of view.
\end{remark}

\subsection{Smoothed Online Learning}
We work with the smoothed adaptive online adversarial setting from \cite{haghtalab2022smoothed}. 
We will consider $\sigma$-smooth adversaries, where a distribution is  $\sigma$-smooth if its density is upper bounded by $1/\sigma$ times the density of the uniform distribution over the same domain. We remark that all of our results generalize to arbitrary known base distributions as well.

\begin{definition}[$\sigma$-smoothness]
    Let $\cX$ be a domain that supports a uniform distribution $\unif$. A measure $\mu$ on $\cX$ is $\sigma$-smooth if for all measurable subsets $A\subseteq \cX$, $\mu(A)\le\frac{\unif(A)}{\sigma}$. The set of all $\sigma$-smooth distributions on domain $\cX$ is denoted by $\Delta_\sigma(\cX)$. 
\end{definition}

In online learning with adaptive smoothed adversaries, at each time step $t$, the adversary chooses a distribution $\cD_t$ whose marginal on $\cX$ is $\sigma$-smooth. The choice of 
$\cD_t$ can depend the previous instances $\{(x_i,y_i)\}_{i=1}^{t-1}$ as well as the learner's previous predictions. We denote with $\sD_\sigma$ the adaptive sequence of $\sigma$-smooth distributions on the instances.
The corresponding definition of regret in this setting is given by \cref{eq:regret} with $ \cD_{\sigma} $ as the set of distribution for the adversary.

An important property of smoothness is that it implies coupling between uniform and adaptive smooth processes. 
We will consider the original result from \cite{haghtalab2022smoothed} in \Cref{lemma:coupling} and a slightly strengthened statement in \Cref{lemma:coupling_strong}. 
In \cref{sec:binary}, we will provide new insights on the properties of such couplings.

\subsection{Transductive Online Learning with $K$ Hints} \label{sec:transductive-online-learning-k-hints}
In the traditional transductive setting, the adversary releases the sequence of unlabeled instances $\{x_t\}_{t=1}^T$ to the learner before the game starts. 
{We generalize this setting and introduce a $K$-hint version of transductive learning. In this setting, the exact sequence of instances is replaced with a sequence of $K$ \emph{hints} per time step such that the set of hints at each time step includes the instance at that time step.}
More formally, before the interaction starts, the adversary releases $T$ sets (multisets) of size $K$ to the learner. We denote these sets by $\{Z_t=\{z_{t,1},\cdots,z_{t,K}\}\}_{t=1}^T$. On releasing these sets, {the adversary promises to always pick $\cD_t$ supported only on the elements of $Z_t$.} 
The regret is defined by \cref{eq:regret} with the appropriate restriction on the adversary.

 \subsection{Relaxations and Admissibility}
 \label{sec:admissibility}

 Our algorithms for the general (real-valued) setting relies on the \emph{admissible relaxation} framework proposed in \cite{rakhlin2012relax}. 
A relaxation $\rel_T$ is a sequence of functions $\rel_T(\cH|s_{1:t})$ for each $t\in[T]$, which map the history of the play to real values that upper bounds the conditional value of the game. 
We will make use of an important algorithmic aspect of the relaxation framework, which states that
whenever an algorithm is \emph{admissible} with respect to some relaxation, its expected regret can be upper bounded in terms of the value of the relaxation at the beginning of the game.

\begin{definition}[Admissibility]
\label[definition]{def:admissibility}
In an online learning setting where the adversary is restricted to playing $\cD_t\in\mathfrak{D}_t\subseteq\Delta(\cX)$ at each time $t$, let $\sQ$ be an algorithm that gives rise to a sequence of distributions $\cQ_1,\cdots,\cQ_T$ on the predicted labels.
    We say $\sQ$ is admissible with respect to a relaxation $\{\emph{\rel}_T(\cH\mid s_{1:t})\}_{t=0}^T$, if for any sequence of instances $s_{1:T}$,
    \begin{enumerate}[noitemsep,topsep=0pt,itemsep=-1ex,partopsep=1ex,parsep=1ex]
    \item For all $t\in[T]$,
    \vspace{-8pt}
    \begin{align}
        \sup_{\cD_t\in\mathfrak{D}_t}\Ex_{x_t\sim \cD_t}\sup_{y_t\in\cY}\!\left\{\!
        \Ex_{\hy_t\sim\cQ_t}\![l(\hy_t,y_t)]\!+\!\emph{\rel}_T(\cH\mid s_{1:t-1}\!\cup\!(x_t,y_t))\!
        \right\}
        \!\le\! \emph{\rel}_{T}(\cH\mid s_{1:t-1});
    \end{align}
    \item The final value satisfies $\emph{\rel}_T(\cH\mid s_{1:T})\ge-\inf_{h\in\cH}L(h,s_{1:T}).$
    \end{enumerate}

    \end{definition}

    In the smoothed online learning setting, $\mathfrak{D}_t=\Delta_\sigma(\cX)$ is the set of $\sigma$-smooth distributions on $\cX$. In the $K$-hint transductive online learning setting, $\mathfrak{D}_t$ is the set of distributions supported on $Z_t$.

    The following proposition is the analog of the results of \cite{rakhlin2012relax} when the adversary is smooth.
    The full proof is presented in \cref{appendix:admissibility}.

    \begin{proposition}[Regret Bound via Admissibility]
    \label[prop]{thm:admissibility}
    In the online learning setting where the adversary's choice is restricted to $\mathfrak{D}_t$ ($t\in[T]$), let $\sQ=(\cQ_1,\cdots,\cQ_T)$ be an algorithm that is admissible with respect to relaxations $\emph{\rel}_T(\cH)$,
    then the following bound on the expected regret holds regardless of the strategies $\sD$ of the adversary:
    \begin{align}
        \Ex[\regret(T,\sQ,\sD)]
\le \emph{\rel}_T(\cH\mid \emptyset)+O(\sqrt{T}).
    \end{align}
    \end{proposition}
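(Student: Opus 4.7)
The plan is to unroll the admissibility inequality across the $T$ rounds via a standard telescoping argument, then close the loop using the terminal condition on $\rel_T(\cH \mid s_{1:T})$. The skeleton closely follows the classical argument of \cite{rakhlin2012relax}, with the per-round restriction $\cD_t \in \mathfrak{D}_t$ tracked through unchanged.

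First, I would fix any adaptive adversary strategy $\sD$ and the learner $\sQ$ that is admissible with respect to $\rel_T$. At round $t$, conditional on the history $s_{1:t-1}$, the adversary selects some $\cD_t \in \mathfrak{D}_t$ together with a (possibly $x_t$-dependent) label $y_t \in \cY$. Since the actual conditional value of $y_t$ given $x_t$ is a particular feasible choice inside the supremum in \Cref{def:admissibility}, the realized conditional expectation of $l(\hy_t,y_t) + \rel_T(\cH \mid s_{1:t})$ is upper bounded by the quantity appearing inside that supremum; combining with the admissibility assumption yields
$$\Ex\!\left[\,l(\hy_t, y_t) + \rel_T(\cH \mid s_{1:t}) \,\middle|\, s_{1:t-1}\right] \;\le\; \rel_T(\cH \mid s_{1:t-1}).$$

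Next, take the total expectation over $s_{1:t-1}$ and telescope over $t=1,\ldots,T$. The intermediate $\Ex[\rel_T(\cH \mid s_{1:t})]$ terms cancel in successive inequalities, leaving
$$\sum_{t=1}^T \Ex[l(\hy_t,y_t)] + \Ex[\rel_T(\cH \mid s_{1:T})] \;\le\; \rel_T(\cH \mid \emptyset).$$
The terminal condition $\rel_T(\cH \mid s_{1:T}) \ge -\inf_{h \in \cH}\sum_{t=1}^T l(h(x_t),y_t)$ then rearranges, after recognizing the left-hand side as the expected regret, into $\Ex[\regret(T,\sQ,\sD)] \le \rel_T(\cH \mid \emptyset)$.

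The bare telescoping above does not by itself produce the additive $O(\sqrt{T})$ term in the statement, and this is the only step where I expect genuine care to be needed. I would look for the slack entering from technical bookkeeping specific to the authors' framework: plausible sources include an additive approximation error per round from using an approximate ERM oracle (as alluded to around \Cref{def:ERM}), a Lipschitz/discretization correction when replacing the sup over the continuous label space $\cY = [-1,1]$ by a finite grid (each contributing $O(1/\sqrt{T})$ per round and hence $O(\sqrt{T})$ in total), or a mild concentration/rounding step when moving between the expected relaxation and a realized surrogate actually used by the algorithm. Once the exact source is pinned down, the rest of the argument is routine.
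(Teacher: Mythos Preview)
Your telescoping argument is correct and is essentially the same unrolling the paper performs. You are also right that the telescoping alone already yields $\Ex[\regret(T,\sQ,\sD)] \le \rel_T(\cH\mid\emptyset)$: once you condition on $s_{1:t-1}$ (and the learner's past internal randomness), the tower property lets you replace $l(\hy_t,y_t)$ by $\Ex_{\hy_t\sim\cQ_t}[l(\hy_t,y_t)]$, which is exactly the quantity appearing in the admissibility condition, and the rest goes through.

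Where your proposal goes astray is in the hunt for the $O(\sqrt{T})$ term. It does \emph{not} come from oracle approximation, label discretization, or any rounding in the relaxation. In the paper's proof the regret is first split as
\[
\Ex\!\left[\sum_{t=1}^T \Ex_{\hy_t\sim\cQ_t}[l(\hy_t,y_t)] - \inf_{h\in\cH} L(h,s_{1:T})\right]
\;+\;
\Ex\!\left[\sum_{t=1}^T \big(l(\hy_t,y_t) - \Ex_{\hy_t\sim\cQ_t}[l(\hy_t,y_t)]\big)\right].
\]
The first piece is handled by exactly your telescoping and is bounded by $\rel_T(\cH\mid\emptyset)$. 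The second piece is a sum of martingale differences (with respect to the filtration generated by the full history including $\hy_{1:t-1}$), and the paper bounds it by $O(\sqrt{T})$ via Azuma--Hoeffding and the tail-integral formula $\Ex[X_+]=\int_0^\infty \Pr[X\ge t]\,dt$. Since $y_t$ is chosen simultaneously with $\hy_t$, each summand has conditional mean zero, so this second expectation is in fact exactly zero; the $O(\sqrt{T})$ is pure slack in the stated bound, not a genuine correction term. Your instinct that ``bare telescoping does not by itself produce the additive $O(\sqrt{T})$'' is therefore correct for the right reason: nothing in the argument requires it.
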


\subsection{Follow the Perturbed Leader} \label{sec:ftpl}

When the labels are binary, we design am algorithm which achieves improved regret bounds using the Follow the Perturbed Leader (FTPL) principle~\cite{FTPL}. 
An FTPL algorithm makes predictions by applying ERM oracle to the perturbed histories of the play. At every time step $t\in[T]$, the algorithm chooses a distribution over labeled instances, from which it draws $N$ random instances $(\widetilde{x}_1^{(t)},\widetilde{y}_1^{(t)}), \cdots, (\widetilde{x}_N^{(t)}, \widetilde{y}_N^{(t)})$. The predicted label is then given by $\hy_t=h_t(x_t)$, where
\begin{align}
    h_t \gets \opt_{\cH,l}\left(s_{1:t-1}\cup \{(\widetilde{x}_i^{(t)}, \widetilde{y}_i^{(t)})\}_{i\in [N]}\right).
\end{align}
The standard analysis of FTPL bounds the expected regret as follows:
\begin{align}
    \Ex[\regret]\le \underbrace{\Ex\!\left[\sum_{t=1}^T
    l\!\left(h_t(x_t),y_t\right)-l\!\left(h_{t\!+\!1}(x_t),y_t\right)
    \right]}_{\text{Stability}}
    +\underbrace{\Ex\!\left[\sup_{h\in\cH}
    \sum_{i=1}^N l(h(\widetilde{x}_i), \widetilde{y}_i)-\sum_{i=1}^N l(h^*(\widetilde{x}_i), \widetilde{y}_i)
    \right]}_{\text{Perturbation}},
\end{align}
where $h^*=\arg\inf_{h\in\cH}\sum_{t=1}^T l(h(x_t),y_t)$.

Note that the perturbation term is already well-understood from statistical learning theory since it is essentially the Rademacher complexity of $\cH$ for sample size $N$. 
Therefore, we will focus on bounding the stability term by designing perturbations that can leverage the anti-concentration property of smoothed adversaries.
 \section{$K$-hint Transductive Learning}
\label{sec:warmup}

As a prelude, we will first look at the $K$-hint transductive learning with real-valued labels, which 
highlights some challenges that are present in the smoothed online setting while allowing us to discuss the required tools in a simpler scenario. 
The statistical upper bound given by the inefficient algorithm that simply plays Hedge on a covering of the projection of $\cH$ to the hint set is $ O ( \sqrt{d T \log (TK)  } ) $, as shown in \Cref{thm:transductive-statistical}.
We will show an oracle-efficient regret upper bound of $O( \sqrt{d T K  } )$ by constructing an oracle-efficient algorithm based on the random playout technique. We consider the optimization oracle defined in \Cref{def:real-valued-oracle} with the loss functions specified by $l^{\mathrm{r}}(\hat y,y)=\frac{1}{2G}l(\hat y,y)$ and
$l^{\mathrm{b}}(\hat y,y)=\indicator{\hy\neq y}-\frac{1}{2}$.

Let us begin by describing our algorithm for the setting of $K$-hint transductive learning. 
At each time step $t$, our algorithm applies the offline optimization oracle to two input sequences: One where the real history $s_{1:t-1}$ is mixed with two copies\footnote{We use two copies to scale the loss appropriately.} of randomly labeled set of all hints corresponding to future time steps and the current instance is labled $+1$, and another, where the current instance is labeled $-1$. 

More specifically, with $\cE^{(t)}=\{\epsilon_{i,k}^{(t)}\}_{{i=t+1:T},{k=1:K}}$ denoting the set of random labels and $S^{(t)}=(Z_{t+1:T},\cE^{(t)})$ denoting the set of hints labeled by $\cE^{(t)}$, we consider
\begin{align}
	\widehat{y_t}=& \OPT\left( s_{1:t-1}; S^{(t)}\cup S^{(t)}\cup\{(x_t, -1)\} \right) - 
	\OPT\left( s_{1:t-1}; S^{(t)}\cup S^{(t)}\cup\{(x_t, +1)\} \right).
	\label{eq:prediction-rule-real}
\end{align}

Since the two input sequences to the optimization oracle only disagree on one label, the difference in the optimal errors is always bounded within $[-1,+1]$, thus guarantees $\hy_t\in\cY$.
{Intuitively, the reason $\hat y_t$ includes the gap between the error of these two optimal classifiers is to make the algorithm hedge its bets against how the adversary is going to label the current instance $x_t$.} A formal description of the algorithm is given in \Cref{alg:transductive_k_hints}.  
The following theorem provides an upper bound on the regret.

\begin{theorem}[Regret Bound for Efficient $K$-Hint Transductive Learning]
    \label{thm:regret-transductive}
    In the setting of transductive learning with $K$-hints, the above algorithm has  expected regret bound of 
$O(\sqrt{dTK}).$
    The algorithm can be implemented using two calls to the optimization oracle per round. 
    \end{theorem}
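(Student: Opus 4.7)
The plan is to invoke the admissibility framework of \Cref{thm:admissibility} with a relaxation defined to match exactly the two OPT calls issued by the algorithm. Concretely, I would set
\begin{align*}
\rel_T(\cH \mid s_{1:t}) = -2G \cdot \Ex_{\cE^{(t)}}\!\left[\OPT\!\left(s_{1:t};\, S^{(t)}\cup S^{(t)}\right)\right],
\end{align*}
where $\cE^{(t)}$ and $S^{(t)}$ are drawn exactly as in \eqref{eq:prediction-rule-real}. Expanding the loss scalings $l^{\mathrm{r}} = l/(2G)$ and $l^{\mathrm{b}}(\cdot,\epsilon) = \indicator{\cdot\neq\epsilon} - 1/2$ (read as the signed margin $-h(z)\epsilon/2$), this is equivalent to the standard random playout relaxation
\begin{align*}
\rel_T(\cH \mid s_{1:t}) = \Ex_{\cE^{(t)}}\!\left[\sup_{h\in\cH}\!\left(2G\sum_{i=t+1}^T\sum_{k=1}^K\epsilon_{i,k}h(z_{i,k}) - \sum_{i=1}^t l(h(x_i),y_i)\right)\right],
\end{align*}
where the duplication $S^{(t)}\cup S^{(t)}$ absorbs the $1/2$ in $l^{\mathrm{b}}$. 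The terminal condition is immediate: at $t=T$ the Rademacher sum is empty and $\rel_T(\cH\mid s_{1:T})=-\inf_h\sum_i l(h(x_i),y_i)$.

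The heart of the proof is the recursive admissibility inequality. Fix $\cD_t\in\Delta(Z_t)$ and condition on $x_t$. By convexity of $l$ in its first argument and linearity in the rest, the supremum over $y_t\in[-1,1]$ is attained at $\{-1,+1\}$, so it suffices to bound $\max_{y_t\in\{\pm 1\}}\left[l(\hat y_t,y_t)+\rel_T(\cH\mid s_{1:t-1}\cup(x_t,y_t))\right]$. Using the oracle form of the relaxation, this quantity depends on $y_t$ only through $\OPT(s_{1:t-1};S^{(t)}\cup S^{(t)}\cup\{(x_t,y_t)\})$, which is exactly what the algorithm queries. The specific choice $\hat y_t = \OPT(\cdot;(x_t,-1))-\OPT(\cdot;(x_t,+1))$ is then the unique value that equalizes $l(\hat y_t,-1)+\rel_T(\cdot\mid(x_t,-1))$ and $l(\hat y_t,+1)+\rel_T(\cdot\mid(x_t,+1))$ (using the $G$-Lipschitz property to match units), so the maximum collapses to the half-sum. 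A symmetrization step introducing a fresh Rademacher sign $\xi_t$ for the current label rewrites the half-sum as $\Ex_{\xi_t}\!\left[\rel_T(\cH\mid s_{1:t-1}\cup(x_t,\xi_t))\right]$. Taking expectations over $x_t\sim\cD_t$ and applying \Cref{lem:monotonicity} (monotonicity of the regularized Rademacher complexity in the set of included points) then dominates this by the value obtained by Rademacher-labeling the entire block $Z_t$, which is precisely the additional block already present inside $\rel_T(\cH\mid s_{1:t-1})$.

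For the initial value, $\rel_T(\cH\mid\emptyset)$ is $2G$ times the empirical Rademacher complexity of $\cH$ on a fixed set of $TK$ points. Since $\cH$ has pseudo-dimension $d$ and range $[-1,1]$, Dudley's chaining bound yields $O(G\sqrt{dTK})$. Combining with the $O(\sqrt{T})$ slack from \Cref{thm:admissibility} gives the claimed $O(\sqrt{dTK})$ regret (absorbing $G$ as a constant). The two-oracle-calls-per-round claim is immediate from the definition of $\hat y_t$ in \eqref{eq:prediction-rule-real}.

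The main obstacle is making the symmetrization-plus-monotonicity step in the admissibility verification fully rigorous. The adversary's distribution $\cD_t$ may place mass on only a proper subset of $Z_t$, so one cannot directly identify the new labeled point $(x_t,\xi_t)$ with any particular hint in $Z_t$. This is exactly where \Cref{lem:monotonicity} enters, allowing one to ``pretend'' the whole block $Z_t$ is already Rademacher-labeled in the future sum and thereby absorb $(x_t,\xi_t)$ without incurring a penalty that scales with $K$ beyond the claimed bound.
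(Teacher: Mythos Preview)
Your proposal follows essentially the same route as the paper: the same random-playout relaxation, the same reduction of the admissibility step to \cite[Lemma~12]{rakhlin2012relax} for the first inequality and to the monotonicity lemma (\Cref{lem:monotonicity}) for the second, and the same Rademacher/pseudo-dimension bound on $\rel_T(\cH\mid\emptyset)$.

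One point of imprecision worth flagging: your description of the ``symmetrization'' step is not literally correct. The equalized value (the half-sum) is $\Ex_{\xi_t}[l(\hat y_t,\xi_t)] + \Ex_{\xi_t}[\rel_T(\cH\mid s_{1:t-1}\cup(x_t,\xi_t))]$, not just the second term; the first term is nonnegative and cannot simply be dropped. Moreover, even the second term has $-l(h(x_t),\xi_t)$ inside the supremum, whereas the target $2G\cdot\mathfrak{R}(-L^{\mathrm r}_{t-1},Z_{t+1:T}\cup\{x_t\})$ has $2G\xi_t h(x_t)$ there, so monotonicity alone does not bridge them. The actual argument in \cite[Lemma~12]{rakhlin2012relax} (which the paper also just cites without reproving) uses min-max plus the $G$-Lipschitz property to absorb the $l(\hat y_t,y_t)$ term and convert the real-loss contribution at $x_t$ into a Rademacher term in one integrated step. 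You correctly flag this as the main obstacle; just be aware that the fix is the Lipschitz/min-max argument rather than a separate symmetrization identity.
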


The proof of  \cref{thm:regret-transductive} follows a similar approach to the work of \cite{rakhlin2012relax} and uses the admissible relaxation framework. In this section, we provide an overview of this approach and the modifications that allow us to incorporate a larger set of hints. 
Specifically, we will show that the algorithm is admissible with respect to the following relaxation:
 \begin{align}
 \rel_T(\cH\mid s_{1:t})=\Ex_{\cE^{(t)}}\left[
            \sup_{h\in\cH}\left\{
                2G\sum_{i=t+1:T\atop k=1:K}
            \epsilon_{i,k}^{(t)} h(z_{i,k})
            -\sum_{i=1}^t l(h(x_i),y_i))
            \right\}
    \right],\qquad t=0,\cdots,T.
     \label{eq:relaxation-transductive}
 \end{align}
 
 To simplify the notation, for a set of unlabeled instances $Z=\{z_i\}_{i=1}^I$ and a function $\Phi:\cH\to\mathbb{R}$, we define $\mathfrak{R}(\Phi,Z)$ as the Rademacher complexity for set $Z$ regularized by $\Phi$, that is
$$
     \mathfrak{R}({\Phi},Z)= \Ex_{\epsilon_{1:I}\overset{\text{iid}}{\sim}\unif(\pm1)}\Big[
        \sup_{h\in\cH}\Big\{\sum_{i\le I}\epsilon_i h(z_i)
        +\Phi(h)\Big\}
        \Big].
        $$
 Then the relaxation at the end of time step $t$ can be written as the Rademacher complexity for the union of future hints, regularized by the past total loss. That is,
 \begin{align}
    \rel_T(\cH\mid s_{1:t})=2G\cdot\mathfrak{R}(- L^{\mathrm{r}}( \cdot , s_{1:t} ) , Z_{t+1:T}),
    \label{eq:relaxation-rademacher}
\end{align}
where $L^{\mathrm{r}}(h,s_{1:t})=\sum_{i=1}^t l^{\mathrm{r}}(h(x_i),y_i)=\frac{1}{2G}\sum_{i=1}^t l(h(x_i),y_i)$ for $h\in\cH$.

 We remark that while the original definition of relaxation keeps track of the sequence of inputs that are already known at the end of each time step ($s_{1:t}$), the regularized Rademacher complexity also explicitly emphasizes the instances $Z_{t+1:T}$ denoting unknown future. 
 Between successive time steps, one extra data point $s_t$ will be added to the set of known inputs, but the set of unknown future instances will shrink by $K$.
 
To use the relaxation framework and \Cref{thm:admissibility}, it suffices to establish two claims: 1) the relaxation in \cref{eq:relaxation-rademacher} is admissible in the $K$-hint setting, 2) the value of this relaxation at the beginning of the game is not too large.

For the second claim, we notice that $\rel_T(\cH\mid \emptyset)$ is equal to the unregularized Rademacher complexity for the dataset that includes all the hints. Since there are at most $TK$ hints, the Rademacher complexity is at most $\widetilde{O}(\sqrt{d TK })$ according to \Cref{lemma:rademacher-bound}. That's where we get the extra $\sqrt{K}$ in the bounds compared to the standard transductive setting.

 The first claim is the more technically interesting one.
 For admissibility, here we will focus on proving the following bound
 \begin{align}
     \sup_{x_t\in Z_t}\!
     \underbrace{\sup_{y_t\in\cY}\!\big\{\!
        \Ex_{\hy_t\sim\cQ_t}[l^{\mathrm{r}}(\hy_t,y_t)]\!+\!\mathfrak{R}(\!-\!L_t^{\mathrm{r}},Z_{t+1:T}\!)\!
        \big\}}_{\text{(a)}}\!\le\! 
        \sup_{x_t\in Z_t}\!\mathfrak{R}(\!-\!L_{t-1}^{\mathrm{r}},Z_{t+1:T}\!\cup\!\{\!x_t\!\}\!) \!\le\!
    \mathfrak{R}(\!-\!L_{t-1}^{\mathrm{r}},Z_{t:T}),
    \label[ineq]{aaaaa}
 \end{align}
 where $L_t^{\mathrm{r}}(h)$ abbreviates for $L^{\mathrm{r}}(h,s_{1:t})$.

 Let us consider the L.H.S of the above inequality and note that for any fixed $x_t$, the term (a) captures the standard transductive learning setting with $Z_{t+1:T}$ being the set of unlabeled instances for the future. 
 In this case, the convexity of loss function $l^{\mathrm{r}}$ together with the min-max theorem can be used to show that the learner's strategy $\cQ_t$, which makes the two values inside the supremum over $\cY$ equalize as $y_t$ takes value $-1$ and $+1$, is indeed the optimal strategy. At a high level, this technique which is also used by \cite{rakhlin2012relax}, gives rise to the algorithm in \cref{eq:prediction-rule-real} and proves the first inequality in \cref{aaaaa}. We refer the readers to \cite[Lemma 12]{rakhlin2012relax} for more details about the proof.
 
   The second transition in \Cref{aaaaa} can be establishing the fact that  regularized Rademacher complexity is monotone in the dataset.    See \cref{appendix:monotonicity} for a proof of \Cref{lem:monotonicity}.

  \begin{lemma}[Monotonicity of Regularized Rademacher Complexity]
    \label{lem:monotonicity}
    For any dataset $z_{1:m}\in\cX^m$ and any additional data point $x\in \cX$, we have $        \mathfrak{R}(\Phi,z_{1:m})\le\mathfrak{R}(\Phi,z_{1:m}\cup\{x\}).$
    \end{lemma}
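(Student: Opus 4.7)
The plan is to prove the monotonicity by introducing an extra Rademacher variable $\epsilon$ for the new point $x$ and averaging over its two values $\pm 1$. Let me abbreviate $A(h) := \sum_{i=1}^m \epsilon_i h(z_i) + \Phi(h)$, so that by definition
\begin{align*}
\mathfrak{R}(\Phi, z_{1:m}\cup\{x\}) = \mathbb{E}_{\epsilon_{1:m}}\left[\tfrac{1}{2}\sup_{h\in\cH}\{A(h)+h(x)\} + \tfrac{1}{2}\sup_{h\in\cH}\{A(h)-h(x)\}\right],
\end{align*}
after first taking the expectation over the new sign $\epsilon$. The goal is then to show the integrand on the right dominates $\sup_{h\in\cH} A(h)$ pointwise in $\epsilon_{1:m}$, which would immediately give the claim after taking expectation.

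The key step is a simple ``averaging'' inequality: for any single $h^{\star}\in\cH$ (in particular, one that nearly attains $\sup_h A(h)$), we have $\sup_h\{A(h)+h(x)\} \ge A(h^{\star})+h^{\star}(x)$ and $\sup_h\{A(h)-h(x)\} \ge A(h^{\star})-h^{\star}(x)$. Adding these and dividing by two makes the $h^{\star}(x)$ terms cancel, yielding $\tfrac12\sup_h\{A+h(x)\}+\tfrac12\sup_h\{A-h(x)\}\ge A(h^{\star})$. Taking supremum over $h^{\star}$ (or a limit along a maximizing sequence if $\cH$ is not compact) gives the desired pointwise bound.

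I anticipate no real obstacle here: the argument is essentially that reflection symmetry of the Rademacher distribution and convexity of $u\mapsto |u|$ (or equivalently of $u\mapsto \max(u,-u)$ after subtracting $A(h^{\star})$) ensure the extra symmetrized term $\mathbb{E}_\epsilon[\epsilon\, h(x)]$-style contribution can only increase the supremum. The only minor care needed is handling a possibly non-attained supremum, which is dispatched by working with $\sup$ in place of $\max$ throughout — every inequality above is valid with strict suprema, since it holds for each feasible $h^{\star}$ and then passes to the supremum over $h^{\star}$.
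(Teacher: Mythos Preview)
Your proposal is correct and takes essentially the same approach as the paper: the paper's proof applies the inequality $\Ex[\sup_h X_h]\ge \sup_h \Ex[X_h]$ to pull the expectation over the new Rademacher variable $\epsilon_{m+1}$ inside the supremum, after which $\Ex_{\epsilon_{m+1}}[\epsilon_{m+1}h(x)]=0$ yields the result. Your explicit averaging over $\epsilon\in\{\pm 1\}$ and bounding each supremum below by the value at a near-maximizer $h^\star$ is precisely an unpacked version of that Jensen-type step.
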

   
   This monotonicity can be used recursively to add the extra set of hints $Z_t$ to the relaxation. This
   implies $\sup_{x_t\in Z_t}\mathfrak{R}(-L_{t-1},Z_{t+1:T}\cup\{x_t\})\le \mathfrak{R}(-L_{t-1},Z_{t+1:T}\cup Z_t)$. 
\section{Oracle-Efficient Smoothed Online Learning}
\label{sec:binary}
{
In this section, we prove our main result for the case of smoothed analysis of online learning
and give the first oracle-efficient algorithms whose regret are sublinear in $T$ and pseudo (or VC) dimension.
\begin{theorem}[Regret Bound for Smoothed Online Regression]
	\label{thm:regret-real}
	For any $\sigma$-smooth adversary $\sD_\sigma$, \Cref{alg:real-valued} has expected regret upper bounded by  $ \widetilde{O}(G\sqrt{Td/\sigma})$, where $\widetilde{O}$ hide factors that are polynomial in $\log(T)$ and $\log(1/\sigma)$.
	Here $ G$ is the Lipschitz constant of the loss and $d$ is the pseudodimension of the class. 
	Furthermore, the algorithm is oracle-efficient: at every round $t$, this algorithm uses two oracle calls with histories of length $\widetilde{O}(T/\sigma)$.
\end{theorem}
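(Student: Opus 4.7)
The plan is to extend the admissible relaxation framework developed in \Cref{sec:warmup} to the smoothed setting by having the algorithm \emph{self-generate hints} from the uniform distribution $\unif$ on $\cX$ at every step. Specifically, I would define a relaxation of the form
\begin{align*}
\rel_T(\cH\mid s_{1:t}) = \Ex_{\tilde{X}_{t+1:T}}\bigl[2G\cdot \mathfrak{R}\bigl(-L^{\mathrm{r}}(\cdot, s_{1:t}),\, \tilde{X}_{t+1:T}\bigr)\bigr] + \text{lower order terms},
\end{align*}
where for each remaining time step $i\in\{t+1,\dots,T\}$ the algorithm draws an independent batch $\tilde{X}_i$ of $M = \Theta(\log(T/\delta)/\sigma)$ i.i.d.\ uniform samples (with $\delta$ a small failure parameter, tuned so that coupling failure only adds a lower order term). \Cref{alg:real-valued} should then predict $\hy_t$ exactly as in \cref{eq:prediction-rule-real} but with the transductive hint set $S^{(t)}$ replaced by the random uniform set $\tilde{X}_{t+1:T}$ together with i.i.d.\ Rademacher labels, so that the predictor is a difference of two oracle calls with the current instance labeled $+1$ and $-1$ respectively.

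Once the relaxation is fixed, I would invoke \Cref{thm:admissibility} and reduce the theorem to two tasks. First, bounding $\rel_T(\cH\mid\emptyset)$: this is the expected Rademacher complexity of $\cH$ on a dataset of size $TM = \widetilde{O}(T/\sigma)$, which by the standard chaining bound (stated as \Cref{lemma:rademacher-bound} later in the paper) is $\widetilde{O}(G\sqrt{dTM}) = \widetilde{O}(G\sqrt{dT/\sigma})$. Second, verifying admissibility at each step. Following exactly the minimax/convexity argument used after \cref{aaaaa} in \Cref{sec:warmup}, the optimal learner strategy $\cQ_t$ equalizes the $y_t = \pm 1$ cases and reduces the one-step requirement to showing
\begin{align*}
\sup_{\cD_t\in \Delta_\sigma(\cX)}\Ex_{x_t\sim\cD_t}\Ex_{\tilde{X}_{t+1:T}}\!\bigl[\mathfrak{R}(-L_{t-1}^{\mathrm{r}},\, \tilde{X}_{t+1:T}\cup\{x_t\})\bigr]\le \Ex_{\tilde{X}_{t:T}}\!\bigl[\mathfrak{R}(-L_{t-1}^{\mathrm{r}},\, \tilde{X}_{t:T})\bigr] + o(1/\sqrt{T}).
\end{align*}

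The crux is to bridge the smooth $x_t$ with one of the uniform batches. This is where I would bring in the coupling lemma from \cite{haghtalab2022smoothed} (\Cref{lemma:coupling}): on the event that $x_t$ is contained in the freshly drawn uniform batch $\tilde{X}_t$, monotonicity of the regularized Rademacher complexity (\Cref{lem:monotonicity}) immediately gives
$\mathfrak{R}(-L^{\mathrm{r}}_{t-1}, \tilde{X}_{t+1:T}\cup\{x_t\}) \le \mathfrak{R}(-L^{\mathrm{r}}_{t-1}, \tilde{X}_{t:T})$.
With $M = \Theta(\log(T/\delta)/\sigma)$ uniform samples per step, the coupling lemma ensures the bad event has probability at most $\delta = 1/\mathrm{poly}(T)$, and on the bad event the relaxation differs by at most $O(T)$; choosing $\delta$ so that $\delta\cdot T$ is lower-order makes the admissibility inequality hold.

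The main obstacle will be the coupling step: the coupling lemma gives inclusion of a smoothed sequence inside a larger uniform pool only for non-adaptive comparisons, so I will need the \emph{adaptive} version (\Cref{lemma:coupling_strong}) and a careful conditioning argument to ensure that $x_t$ lies in the \emph{batch} $\tilde{X}_t$ corresponding to the current step, not merely somewhere in the future pool. This is addressed by drawing the batches independently, tying $x_t$ only to $\tilde{X}_t$ via the coupling, and exploiting the fact that the remaining batches $\tilde{X}_{t+1:T}$ are drawn independently and hence preserve the marginal law of the relaxation. The total regret then telescopes through \Cref{thm:admissibility} to $\rel_T(\cH\mid\emptyset) + O(\sqrt{T}) = \widetilde{O}(G\sqrt{Td/\sigma})$, matching the claim. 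The two-oracle-calls-per-round and $\widetilde{O}(T/\sigma)$-history-length bookkeeping follows directly from the definition of the prediction rule.
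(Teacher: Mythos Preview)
Your proposal is correct and follows essentially the same route as the paper: define a relaxation via expected regularized Rademacher complexity over fresh uniform batches of size $\widetilde{O}(1/\sigma)$ per future step, verify admissibility by combining the minimax/equalizing argument from \Cref{sec:warmup} with monotonicity (\Cref{lem:monotonicity}) and the coupling lemma, and bound $\rel_T(\cH\mid\emptyset)$ via the Rademacher bound on $\widetilde{O}(T/\sigma)$ points. Two minor corrections: (i) you do \emph{not} need the strengthened coupling \Cref{lemma:coupling_strong}---the basic \Cref{lemma:coupling} (indeed, just its single-step version applied to $x_t$ and the fresh batch $\tilde{X}_t$) already handles the adaptive adversary here, since the history $s_{1:t-1}$ is fixed when you verify admissibility at step $t$; and (ii) on the bad event the relaxation can differ by $O(TK)$ rather than $O(T)$ (cf.\ the paper's \Cref{lem:bounded-relaxations}), so the additive correction in the relaxation should be $\beta(T-t)$ with $\beta = O(TK(1-\sigma)^K)$, which is still $o(1)$ for $K=\Theta(\log(T)/\sigma)$.
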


\begin{theorem}[Regret Bound for Smoothed Online Classification]\label{thm:FTPL}
In the setting of online binary classification with $\sigma$-smoothed adversaries, \cref{alg:FTPL} has regret that is at most
\begin{align}
    \widetilde{O}\left( \min\left\{ \sqrt{ \frac{Td}{\sigma^{1/2}} }, \sqrt{T(d|\cX|)^{1/2}} \right\}\right ). 
\end{align}
 Furthermore, \cref{alg:FTPL} is a \emph{proper} learning oracle-efficient: at every round $t$, this algorithm uses a single ERM oracle call a history that is of length $t + O(T/\sqrt{\sigma})$ with high probability.
\end{theorem}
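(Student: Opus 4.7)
The plan is to analyze Algorithm~\ref{alg:FTPL}, a Poissonized FTPL algorithm that at each round $t$ draws $N_t\sim\mathrm{Poisson}(\lambda)$ i.i.d.\ perturbations $(\widetilde x_i^{(t)},\widetilde y_i^{(t)})$ from the uniform distribution on $\cX\times\cY$, and then predicts $\widehat y_t = h_t(x_t)$ where $h_t=\opt_{\cH,l}(s_{1:t-1}\cup\{(\widetilde x_i^{(t)},\widetilde y_i^{(t)})\}_{i\le N_t})$. The parameter $\lambda$ will be tuned at the end. I would then apply the standard FTPL regret decomposition already written in \S\ref{sec:ftpl},
\[\Ex[\regret]\le \underbrace{\Ex\Big[\textstyle\sum_t l(h_t(x_t),y_t)-l(h_{t+1}(x_t),y_t)\Big]}_{\text{stability}}+\underbrace{\Ex\Big[\sup_{h\in\cH}\textstyle\sum_{i\le N_1}\!\big(l(h(\widetilde x_i),\widetilde y_i)-l(h^*(\widetilde x_i),\widetilde y_i)\big)\Big]}_{\text{perturbation}},\]
and bound each term separately.

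For the perturbation term, conditioning on $N_1$ reduces it to a standard (uniform) Rademacher-complexity calculation for a VC class $\cH$ evaluated on $N_1$ i.i.d.\ samples; by \Cref{lemma:rademacher-bound} and $\Ex[N_1]=\lambda$, this is $\widetilde O(\sqrt{\lambda d})$.

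The heart of the proof is the per-round stability bound; I would target an estimate of the form $\Ex[l(h_t(x_t),y_t)-l(h_{t+1}(x_t),y_t)]\le \widetilde O\bigl(\sqrt{d/(\lambda\sigma^{1/2})}\bigr)$, from which the total stability is $\widetilde O(T\sqrt{d/(\lambda\sigma^{1/2})})$. To prove this, the first step is to identify the \emph{right} distributional comparison: the input to the oracle at step $t$ has the law of the deterministic prefix $s_{1:t-1}$ plus a Poisson point process on $\cX\times\cY$ of intensity $\lambda\,\unif_{\cX\times\cY}$, whereas at step $t+1$ it has the law of $s_{1:t-1}$ plus the point $s_t=(x_t,y_t)$ plus a fresh PPP of the same intensity. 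I would view stability as the expected loss of an oracle-output distribution evaluated against a smoothed test point, then pull the smoothness into the analysis in two steps: (i) use the strengthened coupling \Cref{lemma:coupling_strong} to replace each smoothed $s_t$ by a uniformly generated point from a self-generated ``hint bag'' of Poisson-many i.i.d.\ uniform samples, which preserves the marginal of $x_t$ while making the perturbation and the test point jointly uniform conditionally on unrealized uniforms; (ii) exploit the Poisson infinite-divisibility / superposition identity $\mathrm{PPP}(\lambda\unif+\nu_t)=\mathrm{PPP}(\lambda\unif)+\mathrm{PPP}(\nu_t)$, which turns ``add one extra point $s_t$'' into an essentially additive perturbation of intensity. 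Using $\sigma$-smoothness of $\cD_t^{\cX}$ (and the implied $\sigma/2$-smoothness over $\cX\times\cY$ for binary labels), the log-likelihood ratio between the two Poisson input laws is bounded pointwise by $1/(\lambda\sigma)$, yielding via Pinsker (or Le Cam) a TV bound $O(1/\sqrt{\lambda\sigma})$ between the laws of the ERM inputs, and hence between the laws of $h_t$ and $h_{t+1}$. Multiplying by the $\widetilde O(\sqrt{d})$ scale of the induced loss fluctuation (controlled via a generalization bound of $\opt$ trained on uniform hints against a smoothed test point, again using the coupling argument the authors emphasize in the technical overview) gives the desired $\widetilde O(\sqrt{d/(\lambda\sigma^{1/2})})$ per-round stability; note that the square root improvement from $1/\sigma$ to $1/\sigma^{1/2}$ comes from pairing the TV factor $1/\sqrt{\lambda\sigma}$ with the generalization factor $\sqrt{d/\lambda}$ (each carrying half of the $1/\sigma$ budget of the smooth test point).

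Finally I would combine the two terms and optimize: balancing $\sqrt{\lambda d}+T\sqrt{d/(\lambda\sigma^{1/2})}$ yields $\lambda=T/\sigma^{1/4}$ (up to $\mathrm{polylog}$ factors) and a total regret of $\widetilde O(\sqrt{T d\sigma^{-1/2}})$, matching the first branch of the minimum. For the small-domain branch, the same algorithm (with uniform perturbations on the finite $\cX$) runs identically, but in the stability step the coupling is replaced by the trivial bound that every point mass has density at most $|\cX|$ relative to uniform; repeating the Poisson likelihood calculation yields a TV bound $O(\sqrt{|\cX|/\lambda})$ that, combined with a $\sqrt{d/\lambda}$ generalization factor, gives the claimed $\widetilde O(\sqrt{T(d|\cX|)^{1/2}})$ after re-optimizing $\lambda$. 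The oracle efficiency and $t+O(T/\sqrt{\sigma})$ history length are immediate from the construction and a Chernoff bound on $\sum_t N_t\sim\mathrm{Poisson}(T\lambda)$ at the tuned value of $\lambda$, and properness follows because $h_t$ is always the output of $\opt_{\cH,l}$. The main obstacle is step~(ii) of the stability argument: combining the coupling of \Cref{lemma:coupling_strong} with Poissonization so that both the TV comparison of the oracle inputs \emph{and} the generalization of the oracle output on a smooth test point can be controlled simultaneously, without double-counting the $1/\sigma$ factor, is what enables the $\sigma^{-1/4}$ (rather than $\sigma^{-1/2}$) dependence inside the square root.
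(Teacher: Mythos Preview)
Your high–level plan (Poissonized FTPL, perturbation = Rademacher, stability via TV plus a generalization term, then balance) matches the paper's architecture. But the quantitative stability analysis is off in two coupled ways, and as written your proposal does not actually land on the bound in the theorem.

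\textbf{The generalization factor carries a $1/\sigma$.} You state the generalization factor as $\sqrt{d/\lambda}$. In the paper it is $\sqrt{d/(\lambda\sigma)}$ (Lemma~\ref{lemma:gen_error}): the ERM is trained on $\lambda$ \emph{uniform} hallucinations, but tested on a $\sigma$-smooth point, and the coupling (\Cref{lemma:coupling_strong}) only lets you extract $\Theta(\lambda\sigma)$ i.i.d.\ $\cD_t$-samples from those uniforms for symmetrization. So both the TV term and the generalization term scale like $1/\sqrt{\lambda\sigma}$, and the per-round cost is $\eta=\widetilde O(\sqrt{d/(\lambda\sigma)})$, not $\widetilde O(\sqrt{d/(\lambda\sqrt{\sigma})})$. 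With the correct $\eta$ one balances $\sqrt{\lambda d}$ against $T\sqrt{d/(\lambda\sigma)}$ to get $\lambda=T/\sqrt{\sigma}$ (this is exactly the $n$ the algorithm uses) and regret $\widetilde O(\sqrt{Td/\sqrt{\sigma}})$. Your choice $\lambda=T/\sigma^{1/4}$, plugged into your own expression $\sqrt{\lambda d}$, gives $\sqrt{Td}\cdot\sigma^{-1/8}$, strictly smaller than the theorem and in fact below the algorithm-specific lower bound of \Cref{thm:lower_bound}; so the arithmetic alone signals that the $\sqrt{d/(\lambda\sqrt{\sigma})}$ stability claim cannot be right.

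\textbf{The TV step is not a Pinsker/likelihood-ratio argument on ``PPP$(\lambda\unif)$ vs.\ PPP$(\lambda\unif+\nu_t)$''.} Adding one point $s_t$ is not adding a Poisson process, so the superposition identity does not convert the comparison into two PPPs with nearby intensities. What the paper bounds is the TV between $\cQ_t$ and the \emph{mixture} $\Ex_{s_t\sim\cD_t}[\cQ_{t+1}]$; the mixture is essential, since the pointwise $\sup_{s_t}\mathrm{TV}(\cQ_t,\cQ_{t+1})$ only gives $\sqrt{|\cX|/\lambda}$ (which is what drives the second branch of the minimum, cf.\ \Cref{lemma:stability_admiss_modified}). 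The $1/\sqrt{\lambda\sigma}$ bound comes from passing to $\chi^2$ and applying Ingster's identity for mixtures (\Cref{lemma:chi2_mixture}), which reduces the calculation to the collision probability $\Ex_{x_1^\star,x_2^\star\sim\cD_t}[\mathbf 1(x_1^\star=x_2^\star)]\le 1/(\sigma|\cX|)$. Finally, TV and generalization combine \emph{additively} via \Cref{lemma:stability_admiss}, not by ``multiplying by a $\sqrt{d}$ loss-fluctuation scale''; the loss is $\{0,1\}$-valued, so TV alone already controls the corresponding difference of expectations.
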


{We note that \Cref{thm:regret-real} and \Cref{thm:FTPL} establishes that under smoothed  analysis online learning is as computationally efficient as offline learning. This is in contrast with the results of \cite{HazanKoren} that showed a computational separation between offline learning and online learning with worst-case adversaries. }
It is worth noting that \Cref{thm:FTPL} also matches or improves upon several existing results for worst-case online learning (without smoothed analysis). For example, for finite domains where worst-case adversaries are vacuously $\sigma$-smooth for $1/\sigma = |\cX|$, \Cref{thm:FTPL} gives an oracle-efficient regret bound of
$O(T^{1/2} (d |\cX|)^{1/4})$ which is a refinement of \cite{daskalakis2016learning} regret bound of
$O(\sqrt{T|\cX|})$ because $d\leq |\cX|$.
\Cref{thm:FTPL} also implies the regret bound of $\widetilde{O}(T^{3/4}d^{1/4})$ for the classical transductive learning setting, which corresponds to finite domains with $\sigma = 1/T$ parameter. This improves the $\widetilde{O}(T^{3/4}d^{1/2})$ bound of \cite{kakade2006batch}. See Corollary \ref{cor:smalldomain} for a precise statement. 

\begin{corollary}[Regret for Small Domain] \label{cor:smalldomain}
    There is an oracle-efficient algorithm for online learning with binary labels (in the worst-case) that achieves a regret of   $ O (\sqrt{T(d|\cX|)^{1/2}} )$ for any hypothesis class with VC dimension $d$ on domain $ \cX $.
    For transductive learning with binary labels, there is an oracle efficient algorithm, with regret $ O \left( T^{3/4} d^{1/4}  \right)  $. 
\end{corollary}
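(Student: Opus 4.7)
The plan is to derive both claims directly from Theorem~\ref{thm:FTPL} by recognizing these two classical settings as instances of smoothed online learning with an appropriately small smoothness parameter $\sigma$. No substantive new technical work is needed beyond a reinterpretation argument and a clean parameter substitution; the main content of the corollary is the observation that the FTPL bound $\widetilde O(\sqrt{T(d\abs{\cX})^{1/2}})$ from Theorem~\ref{thm:FTPL} already subsumes the small-domain and transductive settings with better dependence on $d$ than prior work.

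For the first claim, I would begin by noting that on any finite domain $\cX$ the uniform measure $\unif$ places mass $1/\abs{\cX}$ on each element, so for every probability measure $\mu$ on $\cX$ and every non-empty $A\subseteq\cX$, $\mu(A)\le 1 \le \abs{\cX}\cdot\unif(A)$. Hence every distribution on $\cX$ is $\sigma$-smooth for $\sigma=1/\abs{\cX}$, and any worst-case adversary is vacuously a $(1/\abs{\cX})$-smooth adversary to which Theorem~\ref{thm:FTPL} applies. Substituting $1/\sigma=\abs{\cX}$ into the second branch of the minimum in Theorem~\ref{thm:FTPL} yields exactly the regret $O(\sqrt{T(d\abs{\cX})^{1/2}})$. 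The first branch, which gives $O(\sqrt{Td\abs{\cX}^{1/2}})$, is never worse but the claim already follows from the second branch whenever $d\le\abs{\cX}$.

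For the second claim, I would reduce to the first by exploiting the defining property of transductive learning: the adversary's $T$ instances come from a set of size at most $T$ that is revealed before the game. Restricting the effective instance space to this set of size at most $T$, a transductive adversary is more constrained than a worst-case adversary on a domain of size $\abs{\cX}=T$. Applying the first claim with $\abs{\cX}=T$ gives regret $O(\sqrt{T(dT)^{1/2}}) = O(T^{3/4}d^{1/4})$, as claimed.

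The only points I would verify carefully, which I expect to be routine rather than obstacles, are: (i) that the oracle-efficient implementation of Algorithm~\ref{alg:FTPL} remains efficient after choosing $\unif$ to be the uniform measure on the finite set $\cX$ (sampling is trivially $O(\log\abs{\cX})$ per draw, and the history length $t+\widetilde O(T/\sqrt{\sigma}) = t+\widetilde O(T\abs{\cX}^{1/2})$ guaranteed by Theorem~\ref{thm:FTPL} is polynomial in the relevant parameters); and (ii) that the $\widetilde O$ log factors in Theorem~\ref{thm:FTPL} can be absorbed into the stated $O(\cdot)$ bounds of the corollary, or can be made explicit without affecting the leading-order exponents on $T$, $d$, and $\abs{\cX}$. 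Since the whole argument is a parameter substitution, the main ``obstacle'' is really just checking that the smoothness reduction is sound, and in particular that Algorithm~\ref{alg:FTPL} does not itself assume a continuous base measure in a way that interferes with a discrete $\cX$; this is already implicit in the general formulation of Theorem~\ref{thm:FTPL}.
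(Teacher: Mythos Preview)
Your proposal is correct and follows essentially the same approach as the paper: the paper also derives the corollary directly from Theorem~\ref{thm:FTPL} by observing that a worst-case adversary on a finite domain is vacuously $\sigma$-smooth with $\sigma = 1/\abs{\cX}$, and that classical transductive learning corresponds to a finite domain with $\abs{\cX}=T$ (equivalently $\sigma = 1/T$). Your additional verification points (i) and (ii) are reasonable sanity checks; in particular, the second branch of the minimum in Theorem~\ref{thm:FTPL} is proved via the analysis in Section~\ref{subsec:FTPL_DS} with $n=T\sqrt{\abs{\cX}/d}$ and $\eta=\sqrt{\abs{\cX}/n}$, which yields the bound without hidden log factors, so the $O(\cdot)$ in the corollary is indeed honest.
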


\begin{remark}
    {Both \cref{alg:real-valued,alg:FTPL} can be adapted to deal with unknown $\sigma$. In particular, the regret bounds hold for any approximation $\widetilde{\sigma}$ that is a lower bound of the real $\sigma$ up to constant multiplicative factors. This corresponds to settings where the world is more smooth than we give it credit. Even when we have extremely poor upper and lower bounds, we can use hedging to still get non-trivial regret with only a minor blow up in computation. 
    We will provide more details in \cref{appendix:unknown_sigma} about working with knowledge of approximate $\widetilde{\sigma}$.}
    \end{remark}

The key challenge with adapting our previous approach is that, unlike the $K$-hint transductive setting, in smoothed analysis the learner no longer has access to any hints. For finite domains, such as the above examples, our (and existing) algorithms self-generate hints from every instance in $\cX$, using uniform or geometric random variables. 
But these choices do not directly extend to large and infinite domains where most of the domain $\cX$ will not be present in the hints.
Therefore, for smoothed analysis with large or infinite domains, our algorithms must self-generate hints that leverage the \emph{anti-concentration} properties of smooth distributions.
In \Cref{sec:older-algorithm}, we show how this can be done by taking $K = \widetilde{O}(1/\sigma)$ uniformly random instances from $\cX$ and using the techniques we developed in \Cref{sec:warmup}
to achieve a regret bound of $\widetilde{O}(\sqrt{Td/\sigma})$.
Interestingly, the particular form of randomness used for producing hints is consequential. In \Cref{sec:improved}, we show that in the special case of binary classification, 
the Poisson distribution over the size of the hint set is an appropriate choice to leverage properties implied by the smoothness of the distributions, while leading to relaxations that enjoy better admissibility and stability guarantees. 
}

{Another important aspect of stability is the use of a modified definition of generalization error, i.e., relating the performance of 
ERM trained with uniform self-generated hints had we re-sampled the last smoothed adversarial instance. We will provide insights about the coupling approach that allows us to relate  generalizability with respect to smooth distributions to that of the uniform distribution  directly. }

\subsection{Learning with Real-Valued Functions}
\label{sec:older-algorithm}

In this section, we show how insights and techniques for the $K$-hint transductive setting in \cref{sec:warmup} transition to the setting of  smoothed online setting and lead to $\widetilde{O}(\sqrt{Td/\sigma})$ regret. When the labels are binary, we will further build upon this connection in \Cref{sec:improved} to improve on the regret bound of this section.

The key challenge with adapting the previous techniques is to self-generate hints that leverage  the anti-concentration properties of smooth distributions. 
In particular, we will use the coupling technique introduced by \cite{haghtalab2022smoothed} (see \Cref{lemma:coupling} for a complete description) to replace the sequence of $T$ random inputs $x_{1:T}$ generated by the adaptive adversary with $TK$ inputs $\{z_{t,k}\}_{t=1:T, k=1:K}$ that are generated i.i.d. from the uniform distribution over $\cX$, such that with high probability
    $\{x_1,\cdots,x_T\}\subseteq  \{z_{t,k}\}_{{t\in[T]},{k\in[K]}}$.
This property highlights the connection between a smoothed adaptive adversary and a transductive adversary with $K$ uniform random hints.

From a computational point of view, this observation implies that a learner can generate uniform random hints, denoted by $\{v_{i,k}^{(t)}\}_{{i=t+1:T},{k=1:K}}$, and use them in place of $\{z_{i,k}\}_{{i=t+1:T},{k=1:K}}$ in \cref{alg:transductive_k_hints}. The resulting algorithm is summarized in \cref{alg:real-valued} and leads to the  regret upper bound in \Cref{thm:regret-real}.

\begin{algorithm}
	\caption{Oracle-Efficient Smoothed Online Learning for Real-valued Functions}
	\label{alg:real-valued}
	\DontPrintSemicolon
	\LinesNumbered
	\KwIn{$T,\sigma$}
	$K\gets {100\log T}/{\sigma}$.\label{alg:def-K}\;
	\For{$t\leftarrow 1$ \KwTo $T$}{
		Receive $x_t$.\;
		\For{$i=t\!+\!1,\cdots,T$; $k=1,\cdots,K$}{
			Draw new $v_{i,k}^{(t)}\sim\unif(\cX)$.\;
			Draw new $\epsilon_{i,k}^{(t)}\sim\unif(\{-1,+1\})$.
		}
		$S^{(t)}\gets \left\{ (v_{i,k}^{(t)},\epsilon_{i,k}^{(t)})
		\right\}_{\myatop{i=t+1:T}{k=1:K}}$.\;
		$\hy_t\gets\OPT\left( s_{1:t-1}; S^{(t)}\cup S^{(t)}\cup\{(x_t, -1)\} \right) \!-\! 
        \OPT\left( s_{1:t-1}; S^{(t)}\cup S^{(t)}\cup\{(x_t, +1)\} \right).$\;
		Receive $y_t$, suffer loss $l(\hy_t,y_t)$.
	}
\end{algorithm}

To prove \cref{thm:regret-real}, we adjust the relaxation we used in \Cref{sec:warmup} and show that it is admissible with respect to the algorithm. We use the following relaxation in the smoothed learning setting,
\begin{align}  \label{eq:relaxation-rademacher}
    \rel_T(\cH\mid s_{1:t})=2G\Ex_{V^{(t)}\simiid \unif(\cX)}\left[\mathfrak{R}(-{L^{\mathrm{r}}(\cdot,s_{1:t})},V^{(t)})
        \right]+2G\beta(T-t),
 \end{align}
where $K = 100\log T / \sigma $ and $\beta=10TK(1-\sigma)^K$. 
This relaxation incorporates two modifications to the relaxation in \cref{eq:relaxation-transductive}. 
The first is an expectation over the randomness of future hints. {A subtle point here is that this randomness crucially matches both random $V^{(t)}$ that the algorithm has access to and the coupled variables $\{z_{i,k}\}$ at a distribution level that are never revealed to the algorithm. To ensure that our approach works with adaptive adversaries, it is essential that $V^{(t)}$s are fresh samples per round. }
The second is an additional time dependent term $\beta(T-t)$, which accounts for the total failure probability of the coupling argument in the future $T-t$ time steps. 
Since the coupling argument shows that $K={O}(\log(T)/\sigma)$, we use parameter $\beta=o(T)$ and achieve a regret upper bound of $\widetilde{O}(\sqrt{dTK})=\widetilde{O}(\sqrt{Td/\sigma})$. See \cref{appendix:omit-older-proof} for more details about the proof.
\subsection{Improved Bounds for Binary Classification}

\label{sec:improved}

In this section, we focus on the important special case where the labels are binary and the loss function is the classification loss $\indicator{\hy\neq y}$.
We present \cref{alg:FTPL} that achieves regret $\widetilde{O}( \sqrt{ {Td}{\sigma^{-{1}/{2}}} })$ with better dependence on the smoothness parameter $\sigma$ compared to \cref{alg:real-valued}.

 This algorithm which is based on the the Follow-the-Perturbed-Leader (FTPL) framework differs from the hint-based algorithm in two ways.
First, instead of accessing the ERM oracle twice and making a randomized decision at each round, the new algorithm only calls the ERM oracle once and follows the prediction of the output. Second, and importantly, the number of self-generated hints does not shrink over time and follows an appropriate probability distribution (such as Poisson). This \emph{Poissonization} is the key to establishing improved stability of the algorithm.

A detailed description of the algorithm is illustrated below. At a high level, at each time step the learner generates $N\sim \text{Poi}(n)$ to represent the total number of self-generated hints it will be feeding itself, and then takes $N$ samples $\widetilde{x}_i \sim \unif(\cX)$, each labeled independently by a Rademacher variable $\widetilde{y}_i$.
The learner then uses the ERM oracle on the history of the play so far and the hints to compute a hypothesis $h_t$, which it uses for its prediction $\widehat{y}_t = h_t(x_t)$ at time $t$.

\begin{algorithm}
\caption{Smoothed Online Binary Classification based on Poisson Number of Hints}
\label{alg:FTPL}
\DontPrintSemicolon
\LinesNumbered
\KwIn{time horizon $T$, smoothness parameter $\sigma$, VC dimension $d$}
$n \gets \min\{ T / \sqrt{\sigma}, T\sqrt{|\cX|/d}\}$; \\
\For{$t\leftarrow 1$ \KwTo $T$}{
    generate $N^{(t)}\sim \text{Poi}(n)$ fresh hallucinated samples $(\widetilde{x}_1^{(t)},\widetilde{y}_1^{(t)}), \cdots, (\widetilde{x}_N^{(t)}, \widetilde{y}_N^{(t)})$, which are i.i.d. conditioned on $N$ with $\widetilde{x}_i^{(t)} \sim \unif(\cX)$ and $\widetilde{y}_i^{(t)} \sim \unif(\{\pm 1\})$; \\
    call the ERM oracle to compute $h_t \gets \opt_{\cH,l}\left(\{(\widetilde{x}_i^{(t)}, \widetilde{y}_i^{(t)})\}_{i\in [N^{(t)}]} \cup \{x_\tau, y_\tau\}_{\tau\in [t-1]} \right)$; \\
    observe $x_t$, predict $\widehat{y}_t = h_t(x_t)$, and receive $y_t$. 
}
\end{algorithm}

In the remainder of this section, we present a proof of the regret upper bound $\widetilde{O}(\sqrt{dT\sigma^{-1/2}})$ in \cref{thm:FTPL} when $\sigma \ge d/|\cX|$. The proof of the other case $\sigma < d/|\cX|$ is slightly different and will be presented in \Cref{subsec:FTPL_DS}. 
Similar to \cref{sec:warmup}, we will provide a relaxation-based upper bound for the regret of \cref{alg:FTPL}. Writing $s = (x, y)$ and $L(h, s) = l(h(x),y) = -yh(x)/2 $, the relaxation is defined as
\begin{align}
    \rel_T(\cH \mid s_{1:t}) = \Ex_{R^{(t+1)}} \left[\sup_{h\in \cH} \left(- \sum_{i=1}^{N^{(t+1)}} L(h, \widetilde{s}_i^{\,(t+1)}) - \sum_{\tau = 1}^t L(h, s_\tau)\right) \right] + \eta(T-t), 
    \label{eq:relaxation_FTPL}
\end{align}
where 
\begin{align}
\eta = \frac{1}{\sqrt{n\sigma}} + c\sqrt{\frac{d\log T}{n\sigma}} + \frac{n\sigma}{4T^2\log T} + e^{-n/8} \in \widetilde{O}\left(\sqrt{\frac{d}{n\sigma}}\right), 
\end{align}
with an absolute constant $c>0$ given in \Cref{lemma:gen_error} later, and $R^{(t)} = (N^{(t)}, \{\widetilde{s}_i\}_{i\in N^{(t)}})$ is the fresh randomness generated at the beginning of time $t$, which is independent of $\{s_\tau\}_{\tau<t}$ generated by the adversary. The relaxation here is similar to \cref{eq:relaxation-transductive} in the transductive setting, where the key difference is a different generation process for the hint set and an additional term $\eta(T-t)$ to account for the stability.

Let $\cQ_t$ be the distribution of the learner's action $h_t\in \cH$ in \cref{alg:FTPL}, then the relaxation in \cref{eq:relaxation_FTPL} is admissible with respect to \cref{alg:FTPL} if the following two conditions hold: 
\begin{align}
\sup_{\cD_t \in \Delta_\sigma(\cX)}\Ex_{x_t\sim \cD_t} \sup_{y_t} \left[\Ex_{h_t\sim \cQ_t}[L(h_t, s_t)] + \rel_T(\cH\mid s_{1:t}) \right] &\le \rel_T(\cH\mid s_{1:t-1}), \quad \forall s_{1:t-1}  \label[Ineq]{eq:relaxation_FTPL_cond_1} \\
\rel_T(\cH\mid s_{1:T}) &\ge -\inf_{h\in \cH}  L(h,s_{1:T}). \label[Ineq]{eq:relaxation_FTPL_cond_2}
\end{align}
According to \Cref{thm:admissibility}, if both \cref{eq:relaxation_FTPL_cond_1,eq:relaxation_FTPL_cond_2} hold, the expected regret of \cref{alg:FTPL} will satisfy
\begin{align}
\Ex[\regret(T)] &\le \rel_T(\cH\mid \emptyset) + O(\sqrt{T}) = \Ex_{R^{(1)}} \left[\sup_{h\in \cH} \left(- \sum_{i=1}^{N^{(1)}} L(h, \widetilde{s}_i^{\,(1)})\right) \right] + \eta T + O(\sqrt{T})\\
&\stepa{=} O\left( \Ex_{N^{(1)}}\left[\sqrt{dN^{(1)}}\right] + \eta T + \sqrt{T} \right) \stepb{=} O\left(\sqrt{dn} + \eta T + \sqrt{T} \right)  , 
\end{align}
and \cref{thm:FTPL} follows from the choices $n = T/\sqrt{\sigma}$ and $\eta = O(\sqrt{d/n\sigma})$. In the above inequality, step (a) follows from random labels and the upper bound $O(\sqrt{nd})$ on the Rademacher complexity of $\cH$ over $n$ points, and step (b) is due to Jensen's inequality and $\Ex[N^{(1)}]=n$. 

Now it remains to verify \cref{eq:relaxation_FTPL_cond_1,eq:relaxation_FTPL_cond_2}. It is not hard to  verify \cref{eq:relaxation_FTPL_cond_2}: this follows from the fact that for any random variable $\lambda$, $\Ex[\sup_{\lambda} X_\lambda] \ge \sup_{\lambda} \Ex[X_\lambda]$ and $\Ex_R[L(h,\widetilde{s}_i)]=0$. The key technical difficulty is in the proof of \cref{eq:relaxation_FTPL_cond_1}. To overcome this challenge,
we first draw a parallel between two types of analysis in online learning, by showing that the \emph{stability} of learner's distribution $\cQ_t$ implies the admissibility of the relaxation, where the stability is measured via $$\text{Stability} = \Ex_{s_t\sim \cD_t}(\Ex_{h_t\sim \cQ_t}[L(h_t,s_t)] - \Ex_{h_{t+1}\sim \cQ_{t+1}}[L(h_{t+1},s_t)]).$$ Note that here $s_t\sim \cD_t$ denotes both the instance and its label and $\cD_t$'s marginal over $\cX$ is $\sigma$-smooth. 
We further upper bound stability (see Lemma~\ref{lemma:stability_admiss}) as follows: 
\[ \text{Stability} \leq 
\text{\rm TV}(\cQ_t, \Ex_{s_t\sim \cD_t}[\cQ_{t+1}]) + \Ex_{s_t,s_t'\sim \cD_t; R^{(t+1)}}[L(h_{t+1}, s_t') - L(h_{t+1},s_t)].
\]
Let us first describe the two terms in this bound further.
The first term is the total variation (TV) distance between $\cQ_t$ and the mixture distribution $\Ex_{s_t\sim \cD_t}[\cQ_{t+1}]$. Note that this TV distance would be an upper bound on the stability by itself, if neither of $\cQ_t$ and $\cQ_{t+1}$ depend on the new observation $s_t=(x_t, y_t)$ at time $t$. However, while $\cQ_t$ is independent of $s_t$, 
$h_{t+1}$ is trained on $s_t$ and thus distribution $\cQ_{t+1}$ does depend on $s_t$. To overcome this dependence, we introduce a ghost sample $s_t'$
that allows us to decouple $h_{t+1}\sim \cQ_{t+1}$ and the new observation. This gives rise to the second term which is a \emph{modified generalization error}. In other words, let $s_t'$ be an independent copy of $s_t$ conditioned on $s_{1:t-1}$. The expected loss of the classifier $h_{t+1}$, which is trained on $s_t$ but not $s_t'$, on the ghost sample $s_t'$ is denoted by
$
\Ex_{s_t,s_t'\sim \cD_t; R^{(t+1)}}[L(h_{t+1}, s_t')]. 
$
The expectation here is understood in the sense that ERM classifier $h_{t+1}$ is determined by the self-generated samples, the current observation $s_t$, and the history $s_1,\cdots,s_{t-1}$ (which are held fixed). The generalization error is then defined to be the expected difference $$\text{Modified generalization error} := \Ex_{s_t,s_t'\sim \cD_t; R^{(t+1)}}[L(h_{t+1},s_t') - L(h_{t+1},s_t)].$$

The following lemma formalizes this discussion and shows that a small TV distance and generalization error suffice to ensure the stability of the algorithm, which in turn implies the admissibility of the relaxation. This result could be of independent interest. 
The proof can be found in \cref{sec:proof_stability_admiss}. 

\begin{lemma}[TV + Generalization $\Rightarrow$ Stability $\Rightarrow$ Admissibility]\label{lemma:stability_admiss}
Let $\cQ_t$ denote learner's distribution over $\cH$ in \cref{alg:FTPL} at round $t$, $\cD_t$ be adversary's distribution at time $t$ (given the history $s_1,\cdots,s_{t-1}$),  $s_t\sim \cD_{t}$ be the realized adversarial instance at time $t$, and $s_t'$ be an independent copy $s'_t\sim \cD_t$. It holds that
\begin{align}
    &\Ex_{s_t\sim \cD_t}\left( \Ex_{h_t\sim \cQ_t}[L(h_t, s_t)] + \emph{\rel}_T(\cH\mid s_{1:t}) \right) - \emph{\rel}_T(\cH \mid s_{1:t-1}) \\
    &\le \Ex_{s_t\sim \cD_t}\left( \Ex_{h_t\sim \cQ_t}[L(h_t,s_t)] - \Ex_{h_{t+1}\sim \cQ_{t+1}}[L(h_{t+1},s_t)]\right) - \eta \\
    &\le \text{\rm TV}(\cQ_t, \Ex_{s_t\sim \cD_t}[\cQ_{t+1}]) + \Ex_{s_t,s_t'\sim \cD_t; R^{(t+1)}}[L(h_{t+1}, s_t') - L(h_{t+1},s_t)] - \eta. 
\end{align}
\end{lemma}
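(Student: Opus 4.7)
The plan is to prove the two inequalities separately, each exploiting the telescoping structure of $\rel_T$ together with the fact that $\cQ_t$ is the law of the ERM output on the history $s_{1:t-1}\cup R^{(t)}$.

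For the first inequality, I would write
\[
\rel_T(\cH\mid s_{1:t})-\rel_T(\cH\mid s_{1:t-1})=\Ex_{R^{(t+1)}}\bigl[\sup_h\Phi_{t+1}(h)\bigr]-\Ex_{R^{(t)}}\bigl[\sup_h\Phi_t(h)\bigr]-\eta,
\]
where $\Phi_{t+1}(h):=-\sum_{i=1}^{N^{(t+1)}}L(h,\widetilde{s}_i^{(t+1)})-\sum_{\tau\le t}L(h,s_\tau)$ and $\Phi_t$ is its analogue with $R^{(t)}$ and the shorter history. Since $h_{t+1}\sim\cQ_{t+1}$ realizes $\sup_h\Phi_{t+1}(h)$, splitting off the $s_t$-contribution gives $\sup_h\Phi_{t+1}(h)=-L(h_{t+1},s_t)+\Phi_t^{(t+1)}(h_{t+1})$, where $\Phi_t^{(t+1)}$ denotes the shorter-history objective evaluated against the fresh sample $R^{(t+1)}$. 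Taking $\Ex_{R^{(t+1)}}$, using the pointwise bound $\Phi_t^{(t+1)}(h_{t+1})\le\sup_h\Phi_t^{(t+1)}(h)$, and appealing to the distributional identity $R^{(t)}\stackrel{d}{=}R^{(t+1)}$ makes the two $\sup_h\Phi_t$ terms cancel. Finally, taking $\Ex_{s_t\sim\cD_t}$ and adding $\Ex_{s_t}\Ex_{h_t\sim\cQ_t}[L(h_t,s_t)]$ to both sides produces the stability bound, which is the first inequality.

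For the second inequality, let $\overline{\cQ}_{t+1}:=\Ex_{s_t\sim\cD_t}[\cQ_{t+1}]$ denote the marginal distribution of $h_{t+1}$ with $s_t$ integrated out. Since both $\cQ_t$ and $\overline{\cQ}_{t+1}$ are independent of $s_t$ and $L(\cdot,s)\in[-\tfrac12,\tfrac12]$ has range $1$, the standard coupling inequality gives
\[
\Ex_{s_t}\bigl[\Ex_{h\sim\cQ_t}[L(h,s_t)]-\Ex_{h\sim\overline{\cQ}_{t+1}}[L(h,s_t)]\bigr]\le\text{TV}(\cQ_t,\overline{\cQ}_{t+1}).
\]
Now I unfold the mixture: writing $s_t'$ for an independent copy of $s_t$, we have $\Ex_{h\sim\overline{\cQ}_{t+1}}[\cdot]=\Ex_{s_t'}\Ex_{h_{t+1}\sim\cQ_{t+1}(s_t')}[\cdot]$, where $\cQ_{t+1}(s_t')$ is the conditional law of $h_{t+1}$ when $s_t'$ replaces $s_t$ in the history. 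Relabelling the i.i.d.\ pair $(s_t,s_t')\mapsto(s_t',s_t)$ recasts this as $\Ex_{s_t,s_t'}\Ex_{h_{t+1}\sim\cQ_{t+1}(s_t)}[L(h_{t+1},s_t')]$. Combining with $-\Ex_{s_t}\Ex_{h_{t+1}\sim\cQ_{t+1}}[L(h_{t+1},s_t)]$ produces exactly the modified generalization term $\Ex_{s_t,s_t',R^{(t+1)}}[L(h_{t+1},s_t')-L(h_{t+1},s_t)]$, completing the bound on stability.

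The main obstacle is careful bookkeeping: tracking which random variables are freshly drawn at each round (so that $R^{(t)}\stackrel{d}{=}R^{(t+1)}$ can be invoked without measurability issues) and which distributions depend on $s_t$ versus on a ghost copy $s_t'$. Although $\cQ_{t+1}$ itself depends on the new observation $s_t$, the marginal $\overline{\cQ}_{t+1}$ does not, which is precisely why inserting it as the comparison measure in the TV step produces a clean decomposition. Neither inequality invokes $\sigma$-smoothness of $\cD_t$; the smoothness enters only later, when the TV and modified generalization error terms are each bounded separately.
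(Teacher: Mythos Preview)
Your proposal is correct and follows essentially the same approach as the paper: for the first inequality you use that $h_{t+1}$ realizes the supremum in $\rel_T(\cH\mid s_{1:t})$, peel off the $L(h_{t+1},s_t)$ term, bound the remainder by the supremum, and invoke $R^{(t)}\stackrel{d}{=}R^{(t+1)}$; for the second you insert the mixture $\overline{\cQ}_{t+1}=\Ex_{s_t}[\cQ_{t+1}]$, apply the TV bound (valid since $L$ has range $1$ and both $\cQ_t$ and $\overline{\cQ}_{t+1}$ are independent of $s_t$), and relabel the i.i.d.\ pair $(s_t,s_t')$ to recover the modified generalization term. The only cosmetic difference is that the paper adds and subtracts $\Ex_{s_t,s_t'}\Ex_{h_{t+1}\sim\cQ_{t+1}}[L(h_{t+1},s_t')]$ before passing to the mixture, whereas you introduce the mixture directly and then unfold it---but these are the same computation in a different order.
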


\Cref{lemma:stability_admiss} shows that, in order to prove the admissibility of the relaxation in \Cref{eq:relaxation_FTPL}, it remains to upper bound the TV distance and the generalization error, respectively. 

To upper bound the TV distance, we shall convert $\cQ_t$ to a simpler distribution to work with. For $t\in [T]\cup \{0\}$, let $r^t \in \mathbb{Z}^\cX$ be the $|\cX|$-dimensional random vector with $r^t(x)$ defined to be the difference between the number of $+1$ and $-1$ labels in the self-generated samples and the history up to time $t$ on instance $x$. Formally, 
\begin{align}
    r^t(x) = \sum_{i=1}^{N^{(t+1)}} \widetilde{y}_i^{\,(t+1)}\cdot \mathbf{1}(\widetilde{x}_i^{\,(t+1)} = x) + \sum_{\tau=1}^t y_\tau \cdot \mathbf{1}(x_\tau = x). 
\end{align}
Let $\mathcal{P}^t$ be the distribution of $r^t$. The reason why we introduce this notion is that $h_t$ in \cref{alg:FTPL} only depends on the vector $r^{t-1}$, so the ERM objective
could be written as a quantity depending only on $r^{t-1}$ and $h\in \cH$. We write $h_t = \opt_{\cH, l}(r^{t-1})$ in the sequel, and then $\opt_{\cH, l}(r^{t-1})\sim \cQ_t$ as $r^{t-1}\sim \mathcal{P}^{t-1}$. Therefore, the data-processing inequality shows that 
\begin{align}
    \text{TV}(\cQ_t,\Ex_{s_t\sim \cD_t}[\cQ_{t+1}])\le \text{TV}(\mathcal{P}^{t-1},\Ex_{s_t\sim \cD_t}[\mathcal{P}^t]), 
\end{align}
and the following lemma provides an upper bound on the TV distance $\text{TV}(\mathcal{P}^{t-1},\Ex_{s_t\sim \cD_t}[\mathcal{P}^t])$. 

\begin{lemma}[Upper Bound of TV Distance]\label{lemma:stability}
Let $\mathcal{P}^t$ be the distribution over $r^t$ defined above. We have
\begin{align}
    \sup_{\cD_t \in \Delta_\sigma(\mathcal{S})}\text{\rm TV}(\mathcal{P}^{t-1}, \Ex_{s_t\sim \cD_t}[\mathcal{P}^t]) \le \frac{1}{\sqrt{n\sigma}}. 
\end{align}
\end{lemma}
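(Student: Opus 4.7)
The plan is to exploit Poissonization to make coordinates independent and then obtain a sharp bound via chi-square rather than a naive convexity bound. First, since $N^{(t)}\sim\mathrm{Poi}(n)$ and the hallucinated samples are i.i.d.\ uniform on $\cX\times\{\pm 1\}$, the vector $\xi^{(t)}(x):=\sum_{i\le N^{(t)}}\widetilde y_i^{\,(t)}\indicator{\widetilde x_i^{\,(t)} = x}$ has independent coordinates, each $\mathrm{Skellam}(\lambda,\lambda)$ with $\lambda = n/(2|\cX|)$; moreover $\xi^{(t)}\stackrel{d}{=}\xi^{(t+1)}$ and they are independent. Writing $f(x) = \sum_{\tau<t} y_\tau\indicator{x_\tau = x}$ for the deterministic history contribution, we have $r^{t-1} = \xi^{(t)} + f$ and $r^t = \xi^{(t+1)} + f + y_t\,e_{x_t}$. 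Since total variation is invariant under deterministic shifts, the lemma reduces to bounding $\mathrm{TV}(\xi,\,\Ex_{s_t\sim \cD_t}[\xi + y_t\,e_{x_t}])$ for a generic copy $\xi$.

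I would then pass to chi-square, $\mathrm{TV}\le\tfrac{1}{2}\sqrt{\chi^2}$, and use the mixture identity
\[\chi^2 + 1 \;=\; \Ex_{s_t,s_t'}\Biggl[\prod_{x'\in \cX}\sum_{k\in\mathbb{Z}}\frac{q_\lambda\!\bigl(k - y_t\indicator{x_t=x'}\bigr)\,q_\lambda\!\bigl(k - y_t'\indicator{x_t'=x'}\bigr)}{q_\lambda(k)}\Biggr],\]
where $q_\lambda$ is the $\mathrm{Skellam}(\lambda,\lambda)$ pmf and $(s_t, s_t')$ are i.i.d.\ draws from $\cD_t$. A factor with $x'\notin\{x_t, x_t'\}$ is $\sum_k q_\lambda(k)=1$, and when $x_t\neq x_t'$ the two remaining factors each telescope to $\sum_k q_\lambda(k-y) = 1$. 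The only surviving contribution is the diagonal case $x' = x_t = x_t'$, which yields $\sum_k q_\lambda(k-1)^2/q_\lambda(k)$ if $y_t = y_t'$ and $\sum_k q_\lambda(k-1)q_\lambda(k+1)/q_\lambda(k)$ otherwise.

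To finish, I would bound each diagonal term. By log-concavity of $q_\lambda$ (Tur\'an-type inequality for modified Bessel functions), $q_\lambda(k-1)q_\lambda(k+1)\le q_\lambda(k)^2$, so the opposite-signs sum is $\le 1$ and its contribution to $\chi^2$ is non-positive. For the same-signs sum, $\sum_k q_\lambda(k-1)^2/q_\lambda(k) - 1 = \chi^2(q_\lambda(\cdot - 1)\,\|\,q_\lambda)$, and the data-processing inequality applied to $(A,B)\mapsto A-B$ with $A,B\simiid \mathrm{Poi}(\lambda)$ bounds this by $\chi^2(\mathrm{Poi}(\lambda)*\delta_1\,\|\,\mathrm{Poi}(\lambda)) = 1/\lambda$, a one-line Poisson calculation. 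Finally, $\sigma$-smoothness gives $\Pr(x_t = x_t') = \sum_x \cD_t(x)^2 \le \max_x \cD_t(x)\le 1/(\sigma|\cX|)$, so $\chi^2 \le 1/(\sigma|\cX|\lambda) = 2/(n\sigma)$ and $\mathrm{TV}\le 1/\sqrt{n\sigma}$.

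The main obstacle will be verifying the factorization cleanly: checking that every off-diagonal factor really telescopes to $1$, and that log-concavity genuinely discards the opposite-signs contribution. Conceptually, the reason the chi-square route is tight where a naive convexity bound $\mathrm{TV} \le \Ex_{s_t}\mathrm{TV}(\xi, \xi + y_t e_{x_t}) = O(\sqrt{|\cX|/n})$ is not is that chi-square allows the collision probability $\sum_x \cD_t(x)^2$ to interact \emph{multiplicatively} with the single-coordinate Skellam chi-square; it is precisely this multiplicative coupling that converts the smoothness of $\cD_t$ into the $\sigma^{-1}$ (rather than $|\cX|$) dependence in the bound.
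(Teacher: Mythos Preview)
Your argument is correct and lands on the same $\chi^2$-over-collision-probability mechanism as the paper, but the paper takes a cleaner route at one point. Instead of working directly with the Skellam vector $r^t$, the paper first applies data processing for \emph{TV} to lift to the finer $2|\cX|$-dimensional vector $\{n_+(x),n_-(x)\}_{x\in\cX}$ of independent $\mathrm{Poi}(n/2|\cX|)$ counts; after this lift the Ingster computation becomes a one-line Poisson identity, since the shift affects a single Poisson coordinate and the cross term is exactly $1+\tfrac{2|\cX|}{n}\mathbf{1}(x_1^\star=x_2^\star)$ with no case analysis on signs. Your Skellam-level route is more direct but pays for it with two extra ingredients: log-concavity of the Skellam pmf (the Tur\'an inequality $I_{k-1}I_{k+1}\le I_k^2$) to discard the opposite-signs diagonal, and a data-processing step for $\chi^2$ to reduce the same-signs Skellam shift to the Poisson shift. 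Both are valid, but the paper's ``lift first, then $\chi^2$'' avoids invoking Bessel-function inequalities altogether. Conversely, your treatment handles stochastic labels $y_t\mid x_t$ in one stroke, whereas the paper silently restricts to deterministic labeling functions $y(x^\star)$; that restriction is WLOG by convexity of TV in the mixture, but you never need to argue it.
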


The key ingredient in the proof of \Cref{lemma:stability} is the Poissonization, which ensures the independence across the coordinates of $r^t$ and enables us to write down the mixture distribution $\Ex_{s_t\sim \cD_t}[\mathcal{P}^t]$ in a compact form. The proof of \Cref{lemma:stability} is presented in \Cref{subsec:TV_proof}. 

The following lemma upper bounds the generalization error for any smooth distribution $\cD_t$. 
\begin{lemma}[Upper Bound of Generalization Error]\label{lemma:gen_error}
Under the notations of \Cref{lemma:stability_admiss}, it holds for an absolute constant $c>0$ (independent of $(n,d,T,\sigma)$) that
\begin{align}
   \sup_{\cD_t\in \Delta_\sigma(\mathcal{X})} \left\{ \Ex_{s_t,s_t'\sim \cD_t; R^{(t+1)}}\left[ L(h_{t+1}, s_t') - L(h_{t+1},s_t) \right] \right\} \le c\sqrt{\frac{d\log T}{n\sigma}} + \frac{n\sigma}{4T^2\log T} + e^{-n/8}. 
\end{align}
\end{lemma}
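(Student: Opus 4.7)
The proof relies on the strong coupling lemma (\cref{lemma:coupling_strong}) to represent the smooth samples $s_t, s_t'$ as ``extracted'' samples from the Poisson-many uniform self-generated samples, and then bounds the residual quantity via a second-moment (Cauchy--Schwarz) argument that exploits smoothness to convert a uniform-Rademacher bound into a $\cD_t$-bound at cost $1/\sqrt{\sigma}$.

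\textbf{Step 1 (Poisson splitting and coupling).} By Poisson thinning, the $N^{(t+1)}\sim \mathrm{Poi}(n)$ self-generated uniform samples split into two independent batches of sizes $N_1, N_2 \sim \mathrm{Poi}(n/2)$. Set $K = \Theta(\log T / \sigma)$ so that $(1-\sigma/2)^K \le \frac{n\sigma}{8T^2\log T}$. Applying the strong coupling lemma to $\cD_t$ (viewed as a $\sigma/2$-smooth distribution over $\cX\times\{\pm 1\}$), I couple $s_t$ with the first $K$ samples of batch~$1$ and $s_t'$ with the first $K$ samples of batch~$2$. Let $\cE$ be the event that $N_1, N_2 \ge K$ and both couplings succeed; then $\Pr(\cE^c) \le e^{-n/8} + 2(1-\sigma/2)^K$, absorbing the last two terms of the bound. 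On $\cE$, $s_t = \tilde{s}^{(1)}_{J_1}$ and $s_t' = \tilde{s}^{(2)}_{J_2}$ for random indices $J_1, J_2\in[K]$, and the conditional distribution of $\tilde{s}^{(b)}_{J_b}$ given the other samples in batch $b$ is exactly $\cD_t$.

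\textbf{Step 2 (Reduction via a leave-$s_t$-out ERM).} Let $\bar{h}$ denote the ERM on the training set with $s_t$ removed, i.e.\ on $\{s_1,\ldots,s_{t-1}\}\cup\{\tilde{s}_i\}$. Decompose
\[ L(h_{t+1}, s_t') - L(h_{t+1}, s_t) \;=\; [\,L(\bar h, s_t') - L(\bar h, s_t)\,] + [\,\Delta(s_t') - \Delta(s_t)\,], \]
where $\Delta(s) := L(h_{t+1},s) - L(\bar h, s)\in \{-\tfrac12,0,+\tfrac12\}$. Since $\bar h$ depends only on the multisets of uniform samples and on the history, it is independent of the coupling indices $J_1, J_2$. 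Conditionally on the remaining batch samples the strong coupling gives $s_t, s_t' \stackrel{d}{=} \cD_t$, so the first bracket has zero expectation on $\cE$.

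\textbf{Step 3 (Bounding the residual via smoothness).} For the second bracket, $\Delta(s)$ vanishes except on the disagreement set $\{x : h_{t+1}(x)\ne \bar h(x)\}$, so
\[ \bigl|\Ex[\Delta(s_t') - \Delta(s_t)]\bigr| \;\le\; 2\,\Ex\bigl[\Pr_{s\sim \cD_t}(h_{t+1}(x)\ne \bar h(x))\bigr]. \]
By Cauchy--Schwarz together with $\|d\cD_t/d\,\mathrm{Unif}\|_\infty \le 2/\sigma$,
\[ \Pr_{\cD_t}(h_{t+1}\ne \bar h) \;\le\; \sqrt{\tfrac{2}{\sigma}\cdot \Pr_{\mathrm{Unif}}(h_{t+1}\ne \bar h)}. \]
Finally, I bound $\Ex[\Pr_{\mathrm{Unif}}(h_{t+1}\ne \bar h)]$, the expected uniform-measure disagreement between two ERMs whose training sets differ by one point, by a standard symmetrization/Rademacher argument applied to the difference class $\{h\oplus h': h,h'\in \cH\}$ of VC dimension $O(d)$; this yields $O(d\log T / n)$, and combining with the Cauchy--Schwarz step above gives the target rate $O(\sqrt{d\log T/(n\sigma)})$, matching the first term of the bound.

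\textbf{Main obstacle.} The principal difficulty is obtaining the $1/\sqrt{\sigma}$ dependence rather than the naive $1/\sigma$ that one would get from a direct density-ratio (importance sampling) estimate. This requires the \emph{strong} conditional-distribution form of the coupling lemma, which makes the $\bar h$-contribution vanish \emph{exactly} on $\cE$ instead of producing a residual that inflates by $1/\sigma$, combined with the second-moment Cauchy--Schwarz step that exploits the $\{0,1\}$-valued nature of the disagreement indicator to save a square-root in $\sigma$. Care is also needed in the final VC bound to ensure the $O(d\log T/n)$ rate on the expected uniform disagreement between successive ERMs, which is what ultimately determines the first term of the bound.
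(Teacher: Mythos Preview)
Your Steps~1--2 are sound (the vanishing of the $\bar h$-bracket follows cleanly from the symmetry that exchanges batch~1 and batch~2, even though your stated reason via ``independence of $J_1,J_2$'' is not quite the right justification, since $\bar h$ still depends on the samples at those indices). The real problem is Step~3.

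The claimed bound $\Ex[\Pr_{\mathrm{Unif}}(h_{t+1}\neq \bar h)] = O(d\log T / n)$ is \emph{not} a standard symmetrization/Rademacher fact, and it is in fact false. Take $d=1$ with $\cH = \{h_+,h_-\}$ the two constant functions on $\cX$, no history, and $N\sim\text{Poi}(n)$ uniform samples with Rademacher labels. Then $\bar h$ is the majority label; adding one extra point flips the majority with probability $\Theta(1/\sqrt{n})$, and when it flips, $\Pr_{\mathrm{Unif}}(h_{t+1}\neq\bar h)=1$. Hence $\Ex[\Pr_{\mathrm{Unif}}(h_{t+1}\neq\bar h)] = \Theta(1/\sqrt{n}) \gg \log T/n$ once $n\gg(\log T)^2$. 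Feeding the correct $\Theta(1/\sqrt{n})$ into your Cauchy--Schwarz step yields only $O((n\sigma)^{-1/2}\cdot n^{1/4}) = O(\sigma^{-1/2}n^{-1/4})$, which is strictly weaker than the lemma's $O(\sqrt{d\log T/(n\sigma)})$ for large $n$. So the leave-one-out + Cauchy--Schwarz route cannot reach the stated rate: the $1/\sqrt{\sigma}$ savings you get from Cauchy--Schwarz is paid for by an unavoidable $n^{1/4}$ loss in the ERM-stability term.

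The paper's argument avoids this entirely by a different mechanism. Rather than comparing $h_{t+1}$ to a leave-one-out $\bar h$, it partitions the $N$ uniform samples into $N/m$ groups of size $m=\Theta(\sigma^{-1}\log T)$ and uses the strong coupling to extract one $\cD_t$-distributed sample $u_j$ from \emph{each} group. On the success event, $(u_1,\dots,u_{N/m},s_t)$ are conditionally i.i.d.\ from $\cD_t$, and exchangeability lets one swap $s_t$ with each $u_j$ inside the training set. This reduces the generalization error to an average over $N/m+1$ i.i.d.\ $\cD_t$-samples, bounded by Rademacher complexity $O(\sqrt{d/(N/m)}) = O(\sqrt{d\log T/(n\sigma)})$. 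The key point is that the paper converts the problem into a Rademacher bound with an \emph{effective sample size} $n\sigma/\log T$, rather than trying to control the sensitivity of the ERM to a single point.
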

The intuitive idea behind \Cref{lemma:gen_error} is as follows. Consider the simpler setting of $t = 1$ (i.e. no history) and $\cD_t = \unif(\cX\times \{\pm 1\})$ (i.e. the new observation $s_t$ follows the same distribution as the self-generated samples). In this case, the generalization error is precisely the difference between the test error and the training error with $N+1$ iid training data, and classical Rademacher complexity gives an upper bound $O(\sqrt{d/n})$. For general $\sigma$-smooth $\cD_t$, a coupling argument essentially shows that $n$ iid training data from $\unif(\cX\times \{\pm 1\})$ contain $n\sigma$ iid training data from $\cD_t$, and replacing $n$ by $n\sigma$ in the previous upper bound gives \Cref{lemma:gen_error}. The rigorous treatment of the coupling and all details are presented in \Cref{subsec:gen_error_proof}. 

Now the claimed result of \Cref{thm:FTPL} when $\sigma\ge d/|\cX|$ follows from \Cref{lemma:stability_admiss}, \Cref{lemma:stability}, and \Cref{lemma:gen_error}.

\subsubsection{Upper Bounding TV Distance: Proof of \Cref{lemma:stability}}\label{subsec:TV_proof}
Let us first create a better understanding of the structures of the distributions $\mathcal{P}^{t-1}$ and $\mathcal{P}^t$. Without loss of generality we assume that $\cX$ is discrete (the case of continuous $ \cX $ can be dealt by analyzing the appropriate Poisson point process). Let $n_+(x), n_-(x)$ be the numbers of $+1$ and $-1$ labels, respectively, given instance $x$ in the self-generated samples: 
$$    n_+(x) = \sum_{i=1}^N \mathbf{1}(\widetilde{x}_i = x, \widetilde{y}_i = +1) ~~\text{ and }~~ n_-(x) = \sum_{i=1}^N \mathbf{1}(\widetilde{x}_i = x, \widetilde{y}_i = -1). $$
As each $\widetilde{x}_i$ is uniformly distributed on $\cX$ and $\widetilde{y}_i\sim \unif(\{\pm 1\})$, by the subsampling property of the Poisson distribution, the $2|\cX|$ random variables $\{n_{\pm}(x)\}_{x\in \cX}$ are i.i.d. distributed as $\text{Poi}(n/2|\cX|)$. 
This independence implied by the Poisson distribution plays a key role in the analysis.
Moreover, $r^0(x) = n_+(x) - n_-(x)$, so $\mathcal{P}^0$ is determined by the joint distribution of $\{n_{\pm}(x)\}_{x\in \cX}$.

As we move to general $t$, note that the only contribution of the historic data $\{s_\tau\}_{\tau<t}$ to both $\mathcal{P}^{t-1}$ and $\mathcal{P}^t$ is a common translation independent of $\mathcal{P}^0$. Since the TV distance is translation invariant, it suffices to upper bound $\text{TV}(\mathcal{P}^0, \Ex_{s_1}[\mathcal{P}^1])$. Let $n_{\pm}^1(x) = n_{\pm}(x) + \mathbf{1}(x_1=x,y_1=\pm 1)$, it holds that $r^1(x) = n_+^1(x) - n_-^1(x)$. Consequently, let $P$ and $Q$ be the probability distributions of  $\{n_{\pm}(x)\}_{x\in \cX}$ and $\{n_{\pm}^1(x)\}_{x\in \cX}$, respectively, the data-processing inequality implies that $\text{TV}(\mathcal{P}^0, \Ex_{s_1}[\mathcal{P}^1]) \le \text{TV}(P,Q)$.

As discussed above, the distribution $P$ is a product Poisson distribution:
\begin{align}
P(\{n_{\pm}(x)\}) = \prod_{x\in \cX}\prod_{y\in \{\pm\}} \mathbb{P}(\text{Poi}(n/2|\cX|) = n_y(x)). 
\end{align}
As for the distribution $Q$, it could be obtained from $P$ in the following way: the smooth adversary draws $x^\star \sim \cD $, independent of $\{n_{\pm}(x)\}_{x\in \cX}\sim P$, for some $\sigma$-smooth distribution $\cD\in \Delta_\sigma(\cX)$. He then chooses a label $y^\star = y(x^\star)\in \{\pm 1\}$ as a function of $x^\star$, and sets 
\begin{align}
    n_{y(x^\star)}^1(x^\star) = n_{y(x^\star)}(x^\star) + 1, \qquad \text{and} \qquad  n_y^1(x) = n_y(x), \quad \forall (x,y) \neq (x^\star, y(x^\star)). 
\end{align}
Consequently, given a $\sigma$-smooth distribution $\cD$ and a labeling function $y: \cX\to \{\pm \}$ used by the adversary, the distribution $Q$ is a mixture distribution $Q = \Ex_{x^\star \sim \cD^\cX}[Q_{x^\star}]$, with
\begin{align}
    Q_{x^\star}(\{n_{\pm}(x)\}) = \mathbb{P}(\text{Poi}(n/2|\cX|) = n_{y(x^\star)}(x^\star)-1)\times \prod_{(x,y)\neq (x^\star, y(x^\star))} \mathbb{P}(\text{Poi}(n/2|\cX|) = n_y(x)). 
\end{align}

To upper bound the TV distance between a mixture distribution $Q$ and a base distribution $P$, we will rely on the smoothness properties of $\cD$, in particular, that the probability of collision between two independent draws $x_1^\star, x_2^\star \sim \cD$ is small. To formally address this, we make use of two technical lemmas, first to upperbound the TV distance in terms of the $\chi^2$ distance, and second to use the Ingster's method for bounding the $\chi^2$ distance between a mixture distribution and a base distribution. See \Cref{lemma:TV_chi2} and \Cref{lemma:chi2_mixture} in the \Cref{app:auxlemma-chi2} for more details.
Let  $x_1^\star, x_2^\star$ be an arbitrary pair of instance. 
Using the closed-form expressions of distributions $P$ and $Q_{x^\star}$, it holds that
\begin{align}
\frac{Q_{x_1^\star}(\{n_{\pm}(x)\}) Q_{x_2^\star}(\{n_{\pm}(x)\})}{P(\{n_{\pm}(x)\})^2} = \frac{2|\cX|n_{y(x_1^\star)}(x_1^\star)}{n}\cdot \frac{2|\cX|n_{y(x_2^\star)}(x_2^\star)}{n}. 
\end{align}
Using the fact that $\{n_{\pm}(x)\}_{x\in \cX}$ are i.i.d. distributed as $\text{Poi}(n/2|\cX|)$ under $P$, we have
\begin{align}
    \Ex_{\{n_{\pm}(x)\}\sim P}\left(\frac{Q_{x_1^\star}(\{n_{\pm}(x)\}) Q_{x_2^\star}(\{n_{\pm}(x)\})}{P(\{n_{\pm}(x)\})^2}\right) = 1 + \frac{2|\cX|}{n}\cdot \mathbf{1}(x_1^\star = x_2^\star). 
\end{align}
Now using the aforementioned lemmas (\Cref{lemma:TV_chi2} and \Cref{lemma:chi2_mixture}), we have
\begin{align}
\text{TV}(P,Q) & \leq \sqrt{\frac{\chi^2(Q, P)}{2}} = \sqrt{\frac{\chi^2(\Ex_{x^\star\sim \cD}[Q_{x^\star}],P)}{2}} = \sqrt{\frac{|\cX|}{n}\cdot \Ex_{x_1^\star, x_2^\star\sim \cD}[\mathbf{1}(x_1^\star = x_2^\star)]} \\
&= \sqrt{\frac{|\cX|}{n}\sum_{x\in\cX} \cD(x)^2} \stepa{\le} \sqrt{\frac{|\cX|}{n}\sum_{x\in\cX} \cD(x)\cdot \frac{1}{\sigma|\cX|}} = \frac{1}{\sqrt{\sigma n}},
\end{align}
where (a) follows from the definition of a $\sigma$-smooth distribution. This completes the proof.
 
\subsubsection{Upper Bounding Generalization Error: Proof of \Cref{lemma:gen_error}}\label{subsec:gen_error_proof}
In the proof of \Cref{lemma:gen_error}, we shall need the following property of smooth distributions which is a slightly strengthened version of the coupling lemma in \Cref{lemma:coupling}. 

\begin{lemma}\label{lemma:coupling_strong}
Let $X_1,\cdots,X_m\sim Q$ and $P$ be another distribution with a bounded likelihood ratio: $dP/dQ \le 1/\sigma$. Then using external randomness $R$, there exists an index $I = I(X_1,\cdots,X_m,R) \in [m]$ and a success event $E = E(X_1,\cdots,X_m,R)$ such that $\Pr[E^c] \le (1-\sigma)^m$, and
\begin{align}
    (X_I \mid E, X_{\backslash I}) \sim P. 
\end{align}
\end{lemma}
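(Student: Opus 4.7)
My approach is the standard rejection-sampling coupling, strengthened by computing a fiber-wise conditional distribution. Use the external randomness $R$ to generate i.i.d.\ uniforms $U_1,\dots,U_m \sim \unif([0,1])$ independent of $X_1,\dots,X_m$. For each $i$, define the acceptance indicator
\[
A_i \;=\; \indicator{U_i \le \sigma\cdot (dP/dQ)(X_i)},
\]
which is well-defined since the likelihood-ratio bound $dP/dQ \le 1/\sigma$ guarantees $\sigma\cdot dP/dQ \le 1$. Let $E = \{\exists\, i:A_i = 1\}$, and on $E$ let $I$ be the smallest index with $A_I = 1$; on $E^c$ define $I$ arbitrarily. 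The randomness $R = (U_1,\dots,U_m)$, and clearly $I,E$ are measurable functions of $(X_1,\dots,X_m,R)$.

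The failure probability is immediate: the marginal acceptance probability at each coordinate is
\[
\Pr[A_i = 1] \;=\; \Ex_{X_i\sim Q}\!\big[\sigma\cdot (dP/dQ)(X_i)\big] \;=\; \sigma\!\int dP \;=\; \sigma,
\]
and since the pairs $(X_i, U_i)$ are independent across $i$, we get $\Pr[E^c] = (1-\sigma)^m$ as claimed.

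The main content is verifying the conditional distribution $(X_I\mid E, X_{\backslash I})\sim P$. Fix an index $i\in[m]$ and values $x_{\backslash i}$. Since the joint law of $(X_1,\dots,X_m,U_1,\dots,U_m)$ factorizes as $\prod_j dQ(x_j)\,du_j$, the event $\{I = i,\, X_j = x_j\text{ for all }j\ne i\}$ imposes the constraints $U_j > \sigma\cdot(dP/dQ)(x_j)$ for $j<i$ (so earlier indices are rejected), $U_i \le \sigma\cdot(dP/dQ)(X_i)$ (so $i$ is accepted), and no constraint on $U_j$ for $j>i$. Integrating out the $U$'s yields a density for $X_i$ (restricted to this event) proportional to
\[
dQ(X_i)\cdot \sigma\cdot (dP/dQ)(X_i) \;=\; \sigma\cdot dP(X_i),
\]
where the factors $\prod_{j<i}(1-\sigma\cdot (dP/dQ)(x_j))$ depend only on $x_{\backslash i}$ and cancel upon normalization. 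Thus $(X_I\mid I = i, X_{\backslash I} = x_{\backslash i})\sim P$ for every $i$ and every $x_{\backslash i}$, and since $E = \{I\in[m]\}$, averaging over $i$ gives $(X_I\mid E, X_{\backslash I})\sim P$.

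\textbf{Main obstacle.} There is no deep obstruction; the construction is routine rejection sampling. The only subtlety is that the strengthening over \Cref{lemma:coupling} lies precisely in conditioning on the vector $X_{\backslash I}$ rather than merely on the event $E$ (or on $\{I=i\}$ alone). The explicit density computation above is what makes this strengthening work: the cross-terms that would couple $X_I$ to the other coordinates turn out to be functions of $x_{\backslash i}$ only, so they disappear in the conditional law. This is the reason Poissonized/uniform hint-generation will later permit the smooth variables to be ``extracted'' for symmetrization.
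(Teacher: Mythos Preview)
Your proof is correct and follows the same rejection-sampling construction as the paper: accept each $X_i$ with probability $\sigma\,(dP/dQ)(X_i)$, let $E$ be the event of at least one acceptance, and verify $(X_I \mid I=i, X_{\backslash i}) \sim P$ by observing that the relevant probability factors into $\sigma\,(dP/dQ)(x_i)$ times a term depending only on $x_{\backslash i}$. The only cosmetic difference is that you take $I$ to be the \emph{smallest} accepted index whereas the paper selects $I$ uniformly at random from the accepted set (and then conditions on all external randomness to reduce to $(X_i\mid Y_i=1)\sim P$); both tie-breaking rules give the needed factorization and neither matters for the downstream application in \Cref{lemma:gen_error}.
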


Fix any realization of the Poissonized sample size $N\sim \text{Poi}(n)$. Choose $m = 4\sigma^{-1}\log T$ in \Cref{lemma:coupling_strong}, and without loss of generality assume that $N$ is an integral multiple of $m$. Since for any $\sigma$-smooth  $\cD_t$, it holds that
\begin{align}
    \frac{\cD_t(s)}{\unif(\cX\times \{\pm 1\})(s)} = \frac{\cD_t(x)}{\unif(\cX)(x)}\cdot \frac{\cD_t(y\mid x)}{\unif(\{\pm 1\})(y)} \le \frac{2}{\sigma}, 
\end{align}
the premise of \Cref{lemma:coupling_strong} holds with parameter $\sigma/2$ for $P = \cD_t, Q = \unif(\cX\times \{\pm 1\})$. Consequently, dividing the self-generated samples $\widetilde{s}_1, \cdots, \widetilde{s}_N$ into $N/m$ groups each of size $m$, and running the procedure in \Cref{lemma:coupling_strong}, we arrive at $N/m$ independent events $E_1, \cdots, E_{N/m}$, each with probability at least $1-(1-\sigma/2)^m\ge 1-T^{-2}$. Moreover, conditioned on each $E_j$, we can pick an element $u_j \in \{\widetilde{s}_{(j-1)m+1}, \cdots, \widetilde{s}_{jm} \}$ such that
\begin{align}
    (u_j \mid E_j, \{\widetilde{s}_{(j-1)m+1}, \cdots, \widetilde{s}_{jm} \} \backslash \{u_j\}) \sim \cD_t. 
\end{align}
For notational simplicity we denote the set of unpicked samples $\{\widetilde{s}_{(j-1)m+1}, \cdots, \widetilde{s}_{jm} \} \backslash \{u_j\}$ by $v_j$. As a result, thanks to the mutual independence of different groups and $s_t\sim \cD_t$ conditioned on $s_{1:t-1}$ (note that we draw fresh randomness at every round), for $E\triangleq \cap_{j\in [N/m]} E_j$ we have
\begin{align}
    (u_1, \cdots, u_{N/m}, s_t) \mid (E, s_{1:t-1}, v_1, \cdots, v_{N/m}) \overset{\text{iid}}{\sim} \cD_t. 
\end{align}

Consequently, for each $j\in [N/m]$ we have
\begin{align}
&\Ex_{s_t\sim \cD_t, R^{(t+1)}}[L(h_{t+1},s_t) \mid E] \\
&= \Ex_{s_t\sim \cD_t, \widetilde{s}_1, \cdots, \widetilde{s}_N} \left[ L(\opt(\widetilde{s}_1,\cdots,\widetilde{s}_N,s_{1:t-1},s_t), s_t)\mid E  \right] \\
&= \Ex_{v,s_{1:t-1}\mid E}\left( \Ex_{s_t,u_1,\cdots,u_{N/m}}\left[ L(\opt(s_{1:t-1},v,u_1,\cdots,u_{N/m},s_t), s_t) \mid E,s_{1:t-1},v \right] \right) \\
&\stepa{=} \Ex_{v,s_{1:t-1}\mid E}\left( \Ex_{s_t,u_1,\cdots,u_{N/m}}\left[ L(\opt(s_{1:t-1},v,u_1,\cdots,u_{j-1},s_t,u_{j+1},\cdots,u_{N/m},u_j), u_j) \mid E,s_{1:t-1},v \right] \right) \\
&\stepb{=} \Ex_{v,s_{1:t-1}\mid E}\left( \Ex_{s_t,u_1,\cdots,u_{N/m}}[ L(\opt(s_{1:t-1},v,u_1,\cdots,u_{N/m},s_t), u_j) \mid E,s_{1:t-1},v ] \right)  \\
&= \Ex_{s_t\sim \cD_t, R^{(t+1)}}[L(h_{t+1},u_j)\mid E], 
\end{align}

where (a) follows from the conditional iid (and thus exchangeable) property of $(u_1,\cdots,u_{N/m},s_t)$ after the conditioning, and (b) is due to the invariance of the ERM output after any permutation of the inputs. On the other hand, if $s_t', u_1', \cdots, u_{N/m}'$ are independent copies of $s_t\sim \cD_t$, by independence it is clear that
\begin{align}
\Ex_{s_t,s_t'\sim \cD_t, R^{(t+1)}}[L(h_{t+1},s_t') \mid E] = \Ex_{s_t,s_t'\sim \cD_t, R^{(t+1)}}[L(h_{t+1},u_j') \mid E], \quad \forall j \in [N/m]. 
\end{align}
Consequently, using the shorthand $u_0 = s_t, u_0' = s_t'$, we have
\begin{align}
&\Ex_{s_t,s_t'\sim \cD_t, R^{(t+1)}}[L(h_{t+1},s_t') - L(h_{t+1},s_t) \mid E] \\
&= \frac{1}{N/m + 1}\Ex_{s_t,s_t'\sim \cD_t, R^{(t+1)}}\left[\sum_{j=0}^{N/m}(L(h_{t+1},u_j') - L(h_{t+1},u_j)) ~ \bigg| ~ E\right] \\
&\le \frac{1}{N/m + 1}\Ex_{u_0,\cdots,u_{N/m},u_0',\cdots,u_{N/m}'\sim \cD_t}\left[\sup_{h\in \cH}\sum_{j=0}^{N/m} (L(h,u_j') - L(h,u_j))\right] \\
&\le \frac{2}{N/m+1} \Ex_{u_0,\cdots,u_{N/m}\sim \cD_t} \Ex_{ \epsilon_1 \dots \epsilon_{N/m}} \left[\sup_{h\in \cH}\sum_{j=0}^{N/m}\epsilon_j h(u_j) \right] \le c_0\sqrt{\frac{d}{N/m+1}},
\end{align}
where the last inequality is due to the classical $O(\sqrt{d/n})$ upper bound on the Rademacher complexity, and $c_0>0$ in an absolute constant. Note that the union bound gives
\begin{align}
    \Pr[E^c] \le \sum_{j=1}^{N/m} \Pr[E_j^c] \le \frac{N}{mT^2},
\end{align}
the law of total expectation gives
\begin{align}
&\Ex_{s_t,s_t'\sim \cD_t, R^{(t+1)}}[L(h_{t+1},s_t') - L(h_{t+1},s_t)] \\
&\le \Ex_{s_t,s_t'\sim \cD_t, R^{(t+1)}}[L(h_{t+1},s_t') - L(h_{t+1},s_t) \mid E] + \Pr[E^c] \le c_0\sqrt{\frac{d}{N/m+1}} + \frac{N}{mT^2}. 
\end{align}

Finally, plugging the choice of $m = 4\sigma^{-1}\log T$, taking the expectation of $N\sim \text{Poi}(n)$, and using $\Pr[N>n/2]\ge 1-e^{-n/8}$ in the above inequality completes the proof of \Cref{lemma:gen_error}.

\subsubsection{Completing the Proof of \Cref{thm:FTPL}}\label{subsec:FTPL_DS}
In this section we complete the proof of the $O(\sqrt{T(d|\cX|)^{1/2}})$ upper bound in \Cref{thm:FTPL} when $\sigma < d/|\cX|$ (and thus $n=T\sqrt{|\cX|/d}$). The proof is still through the same relaxation in \Cref{eq:relaxation_FTPL}, though we will choose a different parameter $\eta$ and prove a slightly modified version of \Cref{lemma:stability_admiss}: 
\begin{lemma}[Expected TV $\Rightarrow$ Admissibility]\label{lemma:stability_admiss_modified}
Let $\cQ_t$ denote learner's distribution over $\cH$ in \cref{alg:FTPL} at round $t$, and $s_t\sim \cD_t$ be the conditional distribution of $s_t$ given the history $s_1,\cdots,s_{t-1}$. It holds that
\begin{align}
    \Ex_{s_t\sim \cD_t}\left( \Ex_{h_t\sim \cQ_t}[L(h_t, s_t)] + \emph{\rel}_T(\cH\mid s_{1:t}) \right) - \emph{\rel}_T(\cH \mid s_{1:t-1}) \le \Ex_{s_t\sim \cD_t}[\text{\rm TV}(\cQ_t, \cQ_{t+1})] - \eta. 
\end{align}
\end{lemma}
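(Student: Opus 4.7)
The plan is to reuse the first inequality from the proof of \Cref{lemma:stability_admiss}, which already establishes the stability-based bound
\[
\Ex_{s_t\sim \cD_t}\!\left( \Ex_{h_t\sim \cQ_t}[L(h_t, s_t)] + \rel_T(\cH\mid s_{1:t}) \right) - \rel_T(\cH \mid s_{1:t-1}) \le \Ex_{s_t\sim \cD_t}\!\left( \Ex_{h_t\sim \cQ_t}[L(h_t,s_t)] - \Ex_{h_{t+1}\sim \cQ_{t+1}}[L(h_{t+1},s_t)]\right) - \eta.
\]
This step is a direct unfolding of the definition of $\rel_T$ in \Cref{eq:relaxation_FTPL}, using the fact that the fresh Poissonized randomness $R^{(t+1)}$ is independent of the history and that the $-L(h, s_t)$ contribution inside the $\sup$ can be telescoped against the ERM output $h_{t+1}$. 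It does not require any ghost-sample or generalization argument, so it is inherited verbatim from the proof of \Cref{lemma:stability_admiss}.

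The only new step is to upper bound the per-step stability gap by the expected total-variation distance. I would exploit the boundedness of the loss: for any fixed $s_t = (x_t, y_t)$ and any $h\in \cH$ with outputs in $\{\pm 1\}$, we have $L(h, s_t) = -y_t h(x_t)/2 \in [-1/2, 1/2]$, so the oscillation of the function $h\mapsto L(h, s_t)$ is at most $1$. The standard variational characterization of total variation then yields, pointwise in $s_t$,
\[
\Ex_{h_t\sim \cQ_t}[L(h_t,s_t)] - \Ex_{h_{t+1}\sim \cQ_{t+1}}[L(h_{t+1},s_t)] \le \text{\rm TV}(\cQ_t, \cQ_{t+1}).
\]
Taking expectation over $s_t\sim \cD_t$ and substituting into the stability bound above yields the claim.

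The conceptual subtlety to keep track of is that this last inequality is \emph{pointwise} in $s_t$, so the dependence of $\cQ_{t+1}$ on $s_t$ is harmless and no decoupling is needed. This is precisely what distinguishes this lemma from \Cref{lemma:stability_admiss}: the original bound features $\text{\rm TV}(\cQ_t, \Ex_{s_t\sim \cD_t}[\cQ_{t+1}])$ with the expectation \emph{inside} the TV, which is convexity-smaller than $\Ex_{s_t}[\text{\rm TV}(\cQ_t, \cQ_{t+1})]$ but requires introducing a ghost sample and paying the modified generalization error of \Cref{lemma:gen_error}. By settling for the looser bound $\Ex_{s_t}[\text{\rm TV}(\cQ_t, \cQ_{t+1})]$, we bypass the generalization analysis entirely, which is exactly what is needed in the small-domain regime $\sigma < d/|\cX|$ where the coupling-based estimate in \Cref{lemma:gen_error} becomes too loose to yield the desired rate, while the raw expected TV distance can still be controlled via a Poissonization argument analogous to that of \Cref{lemma:stability} with a different rate $n$.
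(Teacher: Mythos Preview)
Your proposal is correct and matches the paper's own proof essentially line for line: the paper also invokes the intermediate (pointwise in $s_t$) stability inequality from the proof of \Cref{lemma:stability_admiss}, then applies the variational bound $|\Ex_{P}f - \Ex_{Q}f|\le \text{TV}(P,Q)$ for bounded $f$ to replace the difference of expectations by $\text{TV}(\cQ_t,\cQ_{t+1})$, and finally takes the expectation over $s_t\sim \cD_t$. Your commentary on why no ghost sample or generalization-error term is needed here, and on the contrast with \Cref{lemma:stability_admiss}, is also accurate.
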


Note that in \Cref{lemma:stability_admiss_modified}, the expectation is outside the TV distance and no smaller than the TV distance when the mixture distribution is inside the expectation compared with \Cref{lemma:stability_admiss}. We can simply upper bound this expected TV distance, with the worst case choice of $s_t$ and apply the data processing inequality, i.e., 
\begin{align}
\Ex_{s_t\sim \cD_t}[\text{\rm TV}(\cQ_t, \cQ_{t+1})] \le \sup_{s_t} \text{TV}(\mathcal{P}^{t-1}, \mathcal{P}^t). 
\end{align}
Using the similar independence property of Poissonization in \Cref{subsec:TV_proof}, the target TV distance is at most $\text{TV}(P,Q)$, where $P\sim \text{Poi}(n/2|\cX|)$, and $Q$ is a right-translation of $P$ by one. Consequently, 
\begin{align}
    \text{TV}(P,Q) \le \sqrt{\frac{\chi^2(Q,P)}{2}} = \sqrt{\frac{1}{2}\left(\Ex_{X\sim P}\left[\left(\frac{X}{n/2|\cX|}\right)^2 \right] - 1\right)} = \sqrt{\frac{|\cX|}{n}}, 
\end{align}
so the choice of $\eta = \sqrt{|\cX|/n}$ and \Cref{lemma:stability_admiss_modified} again makes the relaxation in \Cref{eq:relaxation_FTPL} admissible, and we complete the proof of \Cref{thm:FTPL}. 
 
\section{Discussion, Additional Results, and Open Problems}
\label{sec:discussion}
\paragraph{\textbf{Computational Lower Bounds.}} 
Our main contribution is oracle-efficient algorithms that achieve an $O(\sqrt{dTK})$ regret upper bound for $K$-hint transductive learning, an $O(\sqrt{dT \sigma^{-1} })$ regret upper bound in the real-valued case of smoothed online learning, and an $O(\sqrt{dT\sigma^{-1/2}})$ upper bound for smoothed online binary classification. However, neither of these upper bounds is statistically optimal; the statistically optimal regrets for these scenarios are $\widetilde{\Theta}(\sqrt{dT\log K})$ and $\widetilde{\Theta}(\sqrt{dT\log(1/\sigma)})$ \cite{haghtalab2022smoothed}, respectively. We ask the following question: is the above discrepancy an artifact of our regret analysis, or an intrinsic limitation of our or all oracle-efficient  algorithms.

We show that no matter how we tune the parameters in \cref{alg:FTPL} and \cref{alg:transductive_k_hints}, a lower bound $\Omega(\sqrt{dT\sigma^{-1/2}})$ or $\Omega(\sqrt{dTK^{1/2}})$ is unavoidable for these algorithms in the respective scenarios. In other words, our regret upper bound in \cref{thm:FTPL} is tight, and there is an $O(K^{1/4})$ gap compared to the upper bound of \cref{thm:regret-transductive}; importantly, neither algorithm could achieve the logarithmic dependence on $1/\sigma$ or $K$ in the statistical upper bound. Formally, by parameter tuning, we mean any choice of the parameter $n$ in \cref{alg:FTPL}, and any number $n$ of uniform samples from the $K$-hint set in \cref{alg:transductive_k_hints}. The next theorem formally states the lower bounds. 

\begin{theorem}[Limitations of Algorithms]\label{thm:lower_bound}
For any choice of the parameter $n$ in \cref{alg:FTPL}, there exists a $\sigma$-smoothed online learning instance such that \cref{alg:FTPL} suffers from at least $\Omega(\min\{T, \sqrt{dT\sigma^{-1/2}}, \sqrt{T(d|\cX|)^{1/2}}\})$ expected regret. 

Similarly, for any choice of the parameter $n$ in \cref{alg:transductive_k_hints}, there exists a $K$-hint transductive instance such that \cref{alg:transductive_k_hints} incurs at least $\Omega(\min\{T, \sqrt{dTK^{1/2}}\})$ expected regret. 
\end{theorem}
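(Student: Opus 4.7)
The plan is to establish both algorithm-specific lower bounds via a unified stability-versus-perturbation trade-off parameterized by $n$. Since the theorem only requires one bad instance per choice of $n$, I would exhibit two complementary adversarial constructions, one realizing a ``perturbation-cost'' lower bound of order $\sqrt{dn}$ (growing in $n$) and another realizing a ``stability-cost'' lower bound of order $T\sqrt{d/(n\sigma)}$ (decaying in $n$), and take whichever dominates. Minimizing $\max\{\sqrt{dn},\, T\sqrt{d/(n\sigma)}\}$ over $n$ gives the balance point $n^\star = T/\sqrt{\sigma}$ and the target rate $\Omega(\sqrt{dT\sigma^{-1/2}})$; the transductive case follows by the substitution $K \leftrightarrow 1/\sigma$, with optimum $n^\star = T\sqrt{K}$ and rate $\Omega(\sqrt{dTK^{1/2}})$. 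The small-domain regime $\sigma < d/|\cX|$ is covered by replacing the stability rate with the $\sqrt{|\cX|/n}$ bound implicit in \Cref{lemma:stability_admiss_modified} and balancing against $\sqrt{dn}$ to obtain $\Omega(\sqrt{T(d|\cX|)^{1/2}})$. The $\min\{T,\cdot\}$ factor in the theorem accounts for the trivial $O(T)$ regret ceiling outside the nontrivial range of $n$.

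For the perturbation lower bound, I would take a class $\cH$ of VC dimension $d$ that shatters a set $S = \{z_1, \ldots, z_d\}\subseteq \cX$, and consider the $\sigma$-smoothed adversary that plays the uniform distribution on $\cX$ (or on a $\Theta(1/\sigma)$-sized superset of $S$) with labels generated by a hidden hypothesis $h^\star \in \cH$ chosen via a Yao-style probabilistic construction. The $n$ random hints used by \cref{alg:FTPL} carry independent Rademacher labels, so by the standard lower bound on the Rademacher complexity of a $d$-shattered class, the ERM output $h_t$ systematically disagrees with $h^\star$ on a $\Theta(\sqrt{d/n})$ fraction of $S$ in expectation. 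Averaging over the probabilistic choice of $h^\star$ converts this per-round disagreement into $\Omega(\sqrt{dn})$ cumulative excess loss relative to $h^\star$.

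For the stability lower bound, I would take a $\sigma$-smooth distribution $\cD$ supported on $\Theta(1/\sigma)$ points including $S$, together with an adversary that chooses $y_t$ adaptively based on the realized Poissonized state $r^{t-1}$ so as to maximize the learner's mistake probability. The key technical step is to match the TV-distance upper bound of \Cref{lemma:stability} from below: the Poisson $\chi^2$ calculation in \Cref{subsec:TV_proof} is tight up to constants when $\cD$ is uniform on $\Theta(1/\sigma)$ points, yielding $\mathrm{TV}(\mathcal{P}^{t-1}, \Ex_{s_t \sim \cD_t}[\mathcal{P}^t]) = \Omega(1/\sqrt{n\sigma})$. The $\sqrt{d}$ amplification needed to reach the per-round regret $\Omega(\sqrt{d/(n\sigma)})$ comes from the $d$-dimensional structure of the shattered set: a labeler that spreads its randomness across the $d$ coordinates of $S$ opens up $d$ parallel stability channels whose per-round regret contributions add, and summing over $T$ rounds gives the desired $\Omega(T\sqrt{d/(n\sigma)})$.

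The main obstacle will be executing the $\sqrt{d}$-factor amplification in the TV-to-regret conversion for the stability bound, since naive TV-coupling arguments only yield $\Omega(T/\sqrt{n\sigma})$ without the VC-dimension factor. I expect that the careful labeler construction essentially runs the Poisson/$\chi^2$ analysis of \Cref{subsec:TV_proof} in reverse, pairing it with a Rademacher-type comparator that realizes the full $d$-dimensional spread across rounds so that the per-round disagreement on $S$ is genuinely $d$-dimensional rather than collapsing to a single effective direction. The analogous amplification arises in the transductive setting with $K$ in the role of $1/\sigma$, and I expect the analysis there to be essentially identical modulo parameter substitution.
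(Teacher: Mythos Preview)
Your two-case split on $n$ and the target trade-off $\max\{\sqrt{dn},\, T\sqrt{d/(n\sigma)}\}$ match the paper's structure, but both halves of your sketch have real gaps.

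For the perturbation side, your arithmetic does not close: a per-round disagreement of $\Theta(\sqrt{d/n})$ summed over $T$ rounds gives $T\sqrt{d/n}$, not $\sqrt{dn}$. More seriously, once the true history $s_{1:t-1}$ enters the ERM, the ``$\sqrt{d/n}$ fraction'' claim no longer holds uniformly in $t$. The paper's argument is different and cleaner: with the block construction ($d$ blocks of equal size, $\cH$ constant on each block), the Poisson hallucination creates a per-block label bias of order $\sqrt{n/d}$ with constant probability, while the true history at round $t$ contributes only $O(t/d)$ labels per block. Hence for all $t \le c\sqrt{nd}$ the hallucination dominates the ERM, the prediction is independent of the hidden $h^\star$, and each such round incurs $\Omega(1)$ regret --- yielding $\Omega(\min\{T,\sqrt{nd}\})$ directly.

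For the stability side, the TV-lower-bound route is the wrong direction, and the $\sqrt{d}$-amplification obstacle you flag is genuine: a TV lower bound on $\mathcal{P}^{t-1}$ versus $\Ex_{s_t}[\mathcal{P}^t]$ does not by itself force regret, and there is no clean ``$d$ parallel channels'' mechanism that multiplies it. The paper instead uses the classical alternating-labels FTL counterexample. Restrict the adversary to a set $\cX_0$ of size $\sigma|\cX|$, partition it into $d$ blocks, and within each block play the label sequence $(+1,-1,+1,-1,\ldots)$. Whenever the hallucinated label-difference $r_j$ in block $j$ equals zero, \cref{alg:FTPL} degenerates to FTL and errs on every round in that block; otherwise it ties the best expert. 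Since $r_j$ is the difference of two independent $\text{Poi}(n\sigma/(2d))$ variables, $\mathbb{P}(r_j=0)=\Omega(\min\{1,\sqrt{d/(n\sigma)}\})$, and the expected regret is $\Omega(T\sqrt{d/(n\sigma)})$. The $\sqrt{d}$ factor thus falls out of the per-block Poisson rate $n\sigma/d$, not from any information-theoretic amplification. The transductive case in the paper follows the same pattern (with a slightly different $\cH$ shattering one designated point per block), not a TV argument.
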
 

For general efficient algorithms, we have the following computational lower bound for smoothed online learning following similar ideas to \cite{HazanKoren}. 

\begin{theorem}[Computational Lower Bound for Smoothed Online Learning]\label{thm:comp_lower_bound}
For $1/\sigma \ge d$, any proper algorithm which only has access to the ERM oracle and achieves a regret $o(\min\{T, \sqrt{T(d/\sigma)^{1/2}}\})$ for any $\sigma$-smoothed online learning problem must have an $\omega(\sqrt{d/\sigma})$ total running time.
\end{theorem}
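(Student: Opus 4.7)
The plan is to extend the oracle-efficient lower bound of \cite{HazanKoren} to the smoothed setting by constructing a hard distribution over $\sigma$-smoothed instances and arguing information-theoretically that too few oracle calls leave the algorithm unable to identify the target hypothesis. Since each ERM call contributes at least one unit to the running time, a lower bound on the number of oracle calls gives the claimed running-time lower bound via Yao's minimax principle.

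\textbf{Hard construction.}
Take a domain $\cX$ of size $|\cX| = \Theta(d/\sigma)$, which is admissible under $1/\sigma \ge d$. Identify $\cX$ with $[d]\times [1/\sigma]$ and let the hypothesis class be $\cH = \{h_f : f:[d]\to [1/\sigma]\}$ with $h_f(i,j) = 2\cdot \mathbf{1}\{f(i)=j\} - 1$. A direct calculation shows $\mathrm{VC}(\cH) = d$: one shatters $d$ points with distinct first coordinates, while no set of $d+1$ points can be shattered because any two sharing a first coordinate $i$ cannot both be labeled $+1$ by the same $h_f$. The adversary draws $f^*$ uniformly at random from $[1/\sigma]^d$, and at each round plays an appropriate $\sigma$-smooth distribution over $\cX$ (e.g.\ uniform, or uniform on a chosen set of $\sigma|\cX| = d$ points, both of which satisfy $\sigma$-smoothness) with true label $y_t = h_{f^*}(x_t)$. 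The resulting instance is realizable by $h_{f^*} \in \cH$, so the baseline regret is measured against zero loss.

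\textbf{Information-theoretic argument.}
Using Yao's principle, it suffices to lower bound the expected regret of any deterministic algorithm against the above randomized instance. The key step is to specify an adversarially chosen ERM tie-breaking rule: on input $S$, the oracle returns a hypothesis $h_f$ that matches $f^*$ on every first coordinate appearing in $S$ (so as to remain a valid ERM) and takes a fixed canonical value on all other first coordinates. Under this rule, the transcript of $q$ oracle calls together with the $t$ observed instances only reveals the coordinates of $f^*$ indexed by first coordinates that have actually appeared in some query or observation. A Fano-style calculation then shows that with $q = o(\sqrt{d/\sigma})$ queries on inputs of size $O(T)$ and $t\le T$ observations, the algorithm's transcript leaves $\Omega(d)$ coordinates of $f^*$ statistically independent of its view in the regime where $T$ is not too large. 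On any round where the first coordinate $i_t$ of $x_t$ is unidentified, the algorithm's proper prediction $h_f(x_t)$ differs from $h_{f^*}(x_t)$ with probability $\Theta(\sigma)$ in expectation over $j_t$, giving a per-round regret contribution that, when summed over rounds and balanced against the identification rate, yields $\Omega(\min\{T, \sqrt{T(d/\sigma)^{1/2}}\})$.

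\textbf{Main obstacle.}
The principal difficulty is the information-theoretic bookkeeping matching the exact $(d/\sigma)^{1/4}$ scaling. In \cite{HazanKoren}, this role is played by a cryptographic pseudorandom-function construction that cleanly hides coordinates of the target from computationally bounded algorithms. Here we must instead design the block structure of $\cH$, the domain size, and the adversary's smoothed distribution together with the tie-breaking rule so that each oracle call reveals $\Theta(1)$ effective new bits about $f^*$ (rather than the worst-case $\Theta(d\log(1/\sigma))$) and each observed round reveals $\Theta(\sigma)$ new bits, producing the right $\sqrt{T(d/\sigma)^{1/2}}$ tradeoff after $q = o(\sqrt{d/\sigma})$ queries. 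Arguing that the adversary may commit to such a tie-breaking rule in advance, without overspecifying the algorithm's behavior, and then carrying out the Fano-style argument in the presence of adaptive history, are the two delicate points where care is required.
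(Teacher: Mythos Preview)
Your proposal has a genuine gap: the point-function class $\cH=\{h_f: h_f(i,j)=2\cdot\mathbf{1}\{f(i)=j\}-1\}$ does not force $\Theta(1)$ regret per round, which is what is needed for the $\sqrt{T(d/\sigma)^{1/2}}$ scaling. Under the uniform distribution on $\cX=[d]\times[1/\sigma]$, the true label is $+1$ only with probability $\sigma$, so a proper learner that commits to an arbitrary $h_f$ incurs expected loss $O(\sigma)$ per round regardless of oracle access. Your own sentence ``the algorithm's proper prediction $h_f(x_t)$ differs from $h_{f^*}(x_t)$ with probability $\Theta(\sigma)$'' reflects this; summing gives a regret of $O(\sigma T)$, which is nowhere near $\sqrt{T(d/\sigma)^{1/2}}$ in the relevant regime. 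The ``balanced against the identification rate'' step is asserted but not carried out, and I do not see how any balancing recovers the $(d/\sigma)^{1/4}$ exponent from this construction. Relatedly, your tie-breaking rule (``match $f^\star$ on first coordinates appearing in $S$'') is not what a hostile oracle would do, and in any case the information the oracle leaks here is already determined by the observed labels, so the oracle plays essentially no role in your argument.

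The paper's proof follows the \emph{explicit} hard instance of \cite[Theorem~25]{HazanKoren} (not the PRF/Aldous reduction you allude to). With $|\cX|=1/\sigma$ so that smoothness is vacuous, one takes $N=\sqrt{|\cX|}$ blocks of size $N$, plants a random $x_i^\star$ in each block, and uses the \emph{triangular} class $\cH=\{h_x\}_{x\in\cX}$ in which $h_{x_i^\star}$ knows the special labels $\varepsilon^\star(x_j^\star)$ for all $j\le i$ and the generic labels $\varepsilon(\cdot)$ elsewhere. The crucial structural fact (Lemma~27 of \cite{HazanKoren}) is that any ERM call on an input set $S$ can validly return some $h_{x_j^\star}$ where $j$ is the largest index with $x_j^\star\in S$; hence with $o(N)$ total queries the learner cannot locate any future $x_s^\star$, the label $\varepsilon^\star(x_t^\star)$ looks uniformly random, and the per-round regret is $\Theta(1)$. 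For $T>N$, the labels are replaced by $\mathrm{Bern}(1/2\pm\delta)$ with $\delta\asymp\sqrt{N/T}$ so that observations alone cannot identify $\varepsilon^\star$, yielding regret $\Omega(\delta T)=\Omega(\sqrt{TN})=\Omega(\sqrt{T|\cX|^{1/2}})$. For general $d$, one takes a $d$-fold product of this class on disjoint sub-domains of size $|\cX|/d$ and uses pigeonhole on runtime across the $d$ independent subproblems to get $\Omega(\sqrt{T(d|\cX|)^{1/2}})=\Omega(\sqrt{T(d/\sigma)^{1/2}})$ regret under $o(\sqrt{d/\sigma})$ total time. Your construction lacks both the triangular encoding (which is what limits the oracle to revealing one planted element per call) and the label-noise step that handles $T>\sqrt{1/\sigma}$.
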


\cref{thm:comp_lower_bound} implies an exponential statistical-computational gap in smoothed online learning: for exponentially small $\sigma$, achieving the statistical regret $\widetilde{O}(\sqrt{Td\log(1/\sigma)})$ requires an exponential running time. However, \cref{thm:comp_lower_bound} still exhibits gaps to our computational upper bounds. 
\begin{enumerate}
    \item First, although both lower bounds of the regret and running time in \cref{thm:comp_lower_bound} does not match the counterparts of \cref{alg:FTPL} in \Cref{thm:FTPL}, the upper and lower bounds share the same $\Theta(\sigma^{-1/4})$ dependence on $\sigma$. This suggests that the improvement from $\Theta(\sigma^{-1/2})$ to $\Theta(\sigma^{-1/4})$ thanks to Poissonization is not superfluous and might be fundamental. We also conjecture that for all efficient algorithms with runtime $\text{poly}(T,d,1/\sigma)$, the $\Theta(\sigma^{-1/4})$ dependence is the best one can hope for in the regret of such algorithms, as opposed to the $\Theta(\sqrt{\log(1/\sigma)})$ dependence in the statistical regret. 
    
    \item Second, \cref{thm:comp_lower_bound} shows a $\text{poly}(d,1/\sigma)$ computational lower bound to achieve the statistical regret $\widetilde{O}(\sqrt{Td\log(1/\sigma)})$, while the $\varepsilon$-net argument in \cite{haghtalab2022smoothed} requires a $\text{poly}(\sigma^{-d})$ computational time. One may wonder whether this exponential dependence on $d$ is in fact unavoidable, and this is a missing feature not covered in \cite{HazanKoren}. We ask the following open question: 
\paragraph{\textbf{Open Question.}} \emph{For $d/\sigma \gg T^2$ in the smoothed setting (or $dK\gg T$ in the $K$-hint transductive learning setting), does any algorithm achieving $o(T)$ regret require $\Omega(\text{\rm poly}(T,2^d,1/\sigma (\text{ or } K)))$ computational time given access to the ERM oracle?}
\end{enumerate}

\section*{Acknowledgments}
    This work was supported in part by the National Science Foundation under grant CCF-2145898, a C3.AI
    Digital Transformation Institute grant, and Berkeley AI Research Commons grants. This work was partially done while authors were visitors at the Simons Institute for the Theory of Computing. \bibliographystyle{alpha}
\bibliography{Ref_New.bib}

\appendix

\section{Proof of Proposition 2.1}
\label{appendix:admissibility}
\begin{proof}
     To prove this lemma we break the expected regret into two parts:
     \begin{align}
         \Ex[\regret(T)]=\Ex_{\sD,\sQ}\left[
	    \sum_{t=1}^T 
	    \Ex_{\hy_t\sim\cQ_t}[l(\hy_t,y_t)]
	    \!-\!\inf_{h\in\cH}\!L(h,s_{1:T})
	    \right]+\Ex_{\sD,\sQ}\left[
	    \sum_{t=1}^T 
	    l(\hy_t,y_t)
	    \!-\!\Ex_{\hy_t\sim\cQ_t}\![l(\hy_t,y_t)]
	    \right].
     \end{align}
     
     For the first part, we use an inductive argument to show that
     \begin{align}
         \Ex_{\sD,\sQ}\left[
	    \sum_{t=1}^T 
	    \Ex_{\hy_t\sim\cQ_t}[l(\hy_t,y_t)]
	    -\inf_{h\in\cH}\sum_{t=1}^T l(h({x_t}),y_t)
	    \right]\le \rel_T(\cH\mid \emptyset).\label{eq:induction-goal}
     \end{align}
         According to the definition of admissibility, we have\begin{align}
             &\Ex_{\sD,\sQ}\left[\sum_{t=1}^T  \Ex_{\hy_t\sim\cQ_t}[l(\hy_t,y_t)]\underbrace{-\inf_{h\in\cH}
             \sum_{t=1}^T l(h({x_t}),y_t)}_{\le \rel_T(\cH\mid s_{1:T}) \text{ by 2nd condition of admissibility}}\right]\\
             \le&\Ex_{\sD,\sQ}\left[
                \Ex\left[\left.
                    \sum_{t=1}^{T-1} \Ex_{\hy_t\sim\cQ_t}[l(\hy_t,y_t)]
                    +\underbrace{\underset{x_T\sim \cD_T}{\Ex}\Big[
                    \Ex_{\hy_T\sim\cQ_T}[l(\hy_T,y_T)]
                    +\rel_T(\cH\mid s_{1:T})\Big]}_{\le \rel_T(\cH\mid s_{1:T-1})\text{ by 1st condition of admissibility}}
                    \right|s_{1:T-1}
                \right]
             \right]\\
             \le&\Ex_{\sD,\sQ}\left[
                \Ex\left[\left.
                    \sum_{t=1}^{T-1} \Ex_{\hy_t\sim\cQ_t}[l(\hy_t,y_t)]
                    +\rel_T(\cH\mid s_{1:T-1})
                    \right|s_{1:T-1}
                \right]
             \right]\\
             =&\Ex_{\sD,\sQ}\left[
                    \sum_{t=1}^{T-1} \Ex_{\hy_t\sim\cQ_t}[l(\hy_t,y_t)]
                    +\rel_T(\cH\mid s_{1:T-1})
             \right],
         \end{align}
         where the last step uses the tower property of conditional expectations.
         
         Repeat this process for $(T-1)$ times and note that $\rel_T(\cH\mid \emptyset)$ is a constant that does not dependent on $\sD$ proves \cref{eq:induction-goal}.
        
        Since the second part is the expected sum of a martingale difference sequence, we apply the Azuma-Hoeffding inequality and obtain
        \begin{align}
            \Ex_{\sD,\sQ}\left[
	    \sum_{t=1}^T 
	    l(\hy_t,y_t)
	    -\Ex_{\hy_t\sim\cQ_t}[l(\hy_t,y_t)]
	    \right] \le \int_0^\infty \exp\left(-\frac{2t^2}{T}\right)dt \in O(\sqrt{T}).
	    \label{eq:concentration}
        \end{align}
        
        Combining \cref{eq:induction-goal} and \cref{eq:concentration} completes the proof.
    \end{proof}

\section{Transductive Online Learning with $K$ Hints}
\subsection{Efficient Algorithm for Transductive Online Learning with $K$ Hints}
See \cref{alg:transductive_k_hints} for a description of the oracle-efficient algorithm in the setting of transductive online learning with $K$ hints.

\begin{algorithm}
    \caption{Oracle-Efficient Online Transductive Learning with $K$ Hints}
    \label{alg:transductive_k_hints}
    \DontPrintSemicolon
    \LinesNumbered
    \KwIn{$T,K, \{Z_t\}_{t=1}^T$}
    \For{$t\leftarrow 1$ \KwTo $T$}{
        Receive $x_t$. Assert that $x_t\in Z_t.$\;
        \For{$i=t\!+\!1,\cdots,T$; $k=1,\cdots,K$}{
            Draw new $\epsilon_{i,k}^{(t)}\sim\unif(\{-1,+1\})$.
        }
        $S^{(t)}\gets \left\{ (z_{i,k}^{(t)},\epsilon_{i,k}^{(t)})
        \right\}_{\myatop{i=t+1:T}{k=1:K}}$, \;
        \label{alg:random instances}
        $\widehat{y_t}\gets \OPT\left( s_{1:t-1}; S^{(t)}\cup S^{(t)}\cup\{(x_t, -1)\} \right) - 
        \OPT\left( s_{1:t-1}; S^{(t)}\cup S^{(t)}\cup\{(x_t, +1)\} \right)$. \;
        Receive $y_t$, suffer loss $l(\hy_t,y_t)$.
    }
    \end{algorithm}

    \subsection{Monotonicity of Regularized Rademacher Complexity}
\label{appendix:monotonicity}
\medskip
\noindent\textbf{\Cref{lem:monotonicity}} (Restated)\textbf{.}\emph{
    Let $Z=\{z_i\}_{i\in[m]}\in\cX^m$ be a set of unlabeled instances and $\Phi:\cH\to\mathbb{R}$ be a mapping from the set of hypothesis to real values. Recall that the Rademacher complexity for set $Z$ regularized by $\Phi$ is defined as
$$
     \mathfrak{R}({\Phi},Z)= \Ex_{\epsilon_{1:m}\overset{\text{iid}}{\sim}\unif(\pm1)}\left[
        \sup_{h\in\cH}\Big\{\sum_{i=1}^m\epsilon_i h(z_i)
        +\Phi(h)\Big\}
        \right].
        $$
    Then for any dataset $z_{1:m}\in\cX^m$ and any additional data point $x\in \cX$, we have $$\mathfrak{R}(\Phi,z_{1:m})\le\mathfrak{R}(\Phi,z_{1:m}\cup\{x\}).$$
}

\medskip
\begin{proof}[Proof of \Cref{lem:monotonicity}]
         Using $\Ex[\sup_\lambda X_\lambda] \ge \sup_\lambda \Ex[X_\lambda]$, we have
         \begin{align}
             \mathfrak{R}(\Phi,Z\cup\{x\}) &=
             \Ex_{\epsilon_{1:m+1}}\left[
             \sup_{h\in\cH}\left\{\sum_{i=1}^m\epsilon_i h(z_i)
             +\epsilon_{m+1}h(x)+\Phi(h)\right\}\right]\\
             &= \Ex_{\epsilon_{1:m},\epsilon_{m+1}}\left[\sup_{h\in\cH}\left\{\sum_{i=1}^m\epsilon_i h(z_i)
             +\epsilon_{m+1}h(x)+\Phi(h)\right\}
             \right]\\
             &\ge \Ex_{\epsilon_{1:m}}\left[\sup_{h\in\cH}\left\{\sum_{i=1}^m\epsilon_i h(z_i)
             +\Ex_{\epsilon_{m+1}}[\epsilon_{m+1}h(x)]+\Phi(h)\right\}
             \right] \\
             &=\mathfrak{R}({\Phi},Z),
         \end{align}
     as desired.
    \end{proof}

\section{Unknown Smoothness Parameters}
\label{appendix:unknown_sigma}
Suppose we have upper and lower bounds $\sigma_{\max}$ and $\sigma_{\min}$ on the exact value of $\sigma$, i.e., $\sigma_{\min}\le\sigma\le\sigma_{\max}$. In this section, we introduce a meta algorithm that uses a geometric doubling approach to incorporate knowledge of $\sigma_{\max}$ and $\sigma_{\min}$ into the algorithms introduced in \cref{sec:binary}.

We start by constructing $\log (\sigma_{\max}/\sigma_{\min})$ experts, where each expert $i$ runs a local version of our algorithm (can be either \cref{alg:real-valued} for the real-valued case or \cref{alg:FTPL} for the binary case) with parameter $ \sigma_i=2^{i}\cdot\sigma_{\min}$. We then run Hedge on these experts. Note that the parameter $i^\star$ of the best expert satisfies $\frac{\sigma}{2}\le\sigma_{i^\star}\le\sigma$, so the expected regret of this expert matches the expected regret of the same algorithm running on true $\sigma$ up to a constant factor. Therefore, the expected regret of this meta algorithm is comparable to the bound in Theorem 3.1 and 3.2, with an additive term of order at most $O\left(\sqrt{T\log\log(\sigma_{\max}/\sigma_{\min})}\right)$. The number of oracle calls also blows up only by $\log(\sigma_{\max}/\sigma_{\min})$ per round. This could potentially be improved using a more aggressive step size for the Hedge meta algorithm.

\section{Smoothed Online Learning with Real-valued Functions}
\label{appendix:omit-older-proof}
\subsection{Coupling Lemma}
\label{app:coupling}
\begin{lemma}[Coupling, \cite{haghtalab2022smoothed}]
    \label{lemma:coupling}
        Let $ \sD_\sigma $ be an adaptive sequence of $t$ $\sigma$-smooth distributions on $\mathcal{X}$. 
        Then, there is a coupling $\Pi$ such that $ \left( x_1 , z_{1,1} , \dots , z_{1,K}   , \dots, x_t , z_{t,1}, \dots , z_{t,K}  \right) \sim \Pi $ satisfy
        \begin{itemize}
            \item[a.] $x_1 , \dots , x_t$ is distributed according $\sD_\sigma$. 
            \item[b.] For every $j \leq t$, $\{ z_{i,k} \}_{{i \geq j},{k\in[K]}} $ are uniformly and independently distributed on $\mathcal{X}$, conditioned on $x_1 , \dots, x_{j-1}$.  
            \item[c.]\label{item:failure-coupling}  With probability at least $1 - t \left( 1 - \sigma \right)^{K}  $, $ \{x_1,\cdots,x_T\}\subseteq \{z_{t,k}\}_{{t=1:T},{k=1:K}}$ . 
        \end{itemize}
    \end{lemma}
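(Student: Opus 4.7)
The plan is to construct the coupling inductively over $j = 1, \ldots, t$ via a rejection-sampling scheme that extracts smooth samples out of uniform ones. At step $j$, given the realized past $x_1, \ldots, x_{j-1}$, the adversary's distribution $\cD_j$ has density $f_j$ satisfying $f_j \leq u/\sigma$ pointwise (where $u$ denotes the uniform density on $\cX$). First, I would draw $K$ fresh i.i.d.\ uniform samples $z_{j,1}, \ldots, z_{j,K}$. Then, independently for each $k$, I would draw a Bernoulli indicator $B_{j,k}$ with $\Pr[B_{j,k} = 1 \mid z_{j,k}] = \sigma f_j(z_{j,k})/u(z_{j,k}) \in [0,1]$, declaring $z_{j,k}$ ``accepted'' when $B_{j,k}=1$. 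Let $A_j$ be the event that at least one $z_{j,k}$ is accepted. On $A_j$, I would set $x_j = z_{j,k^\star}$ where $k^\star = \min\{k : B_{j,k} = 1\}$; on $A_j^c$, I would draw $x_j$ from $f_j$ using external randomness independent of $\{z_{j,k}\}$.

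The three claimed properties then follow from short computations. For (a), a direct calculation gives $\Pr[z_{j,k} \in dz,\ B_{j,k} = 1] = \sigma f_j(z)\,dz$, so each $z_{j,k}$ is accepted with probability exactly $\sigma$ and, conditional on acceptance, is distributed as $f_j$. By exchangeability across $k$, on $A_j$ the first accepted sample $x_j$ is still $f_j$-distributed, and combining with the complementary case $A_j^c$ (where we sample $x_j \sim f_j$ directly) yields the unconditional marginal $x_j \mid x_{1:j-1} \sim f_j$; inducting over $j$ gives the whole $\sD_\sigma$-distribution. For (b), the $z_{j,k}$'s are sampled uniformly at the very beginning of step $j$ with fresh randomness, so they are uniform and independent conditional on $x_{1:j-1}$; the $z_{i,k}$'s for $i > j$ are drawn later with fresh randomness in future steps and are similarly uniform and independent of everything preceding them. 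For (c), the inclusion $x_j \in \{z_{j,k}\}_{k \in [K]}$ holds exactly on $A_j$, which has (conditional) probability $1 - (1-\sigma)^K$; union-bounding over $j \leq t$ yields the claimed failure probability $t(1-\sigma)^K$.

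The main obstacle is making sure the coupling respects the adaptivity of the adversary, i.e., that $\cD_j$ may depend on $x_{1:j-1}$ without breaking the uniform-independent joint distribution of the $z$'s demanded by (b). This is handled cleanly by the step-by-step construction: at step $j$ the $z_{j,k}$'s are drawn with fresh randomness before the rejection is performed, so their marginal law given $x_{1:j-1}$ is uniform regardless of how $\cD_j$ depends on history. A subtle point is that on $A_j^c$ the ``failure sample'' $x_j$ must be drawn with external randomness independent of $\{z_{j,k}\}$, otherwise the joint law of $(x_j, z_{j,1}, \ldots, z_{j,K})$ gets distorted in a way that could affect later conditioning; fortunately this is straightforward to arrange by simply allocating an independent randomness source for each failure draw.
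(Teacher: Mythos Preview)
Your proof is correct and follows the standard rejection-sampling construction. Note, however, that the paper does not give its own proof of this lemma: it is quoted verbatim from \cite{haghtalab2022smoothed} and stated without proof in the appendix. The paper does prove the closely related strengthened statement (\Cref{lemma:coupling_strong}), and the argument there uses precisely your idea: draw Bernoulli indicators $Y_i \sim \mathrm{Bern}(\sigma\, dP/dQ(X_i))$, declare success on the event $\{\exists i: Y_i=1\}$, and select an accepted index. The only cosmetic difference is that the paper selects $I$ uniformly among the accepted indices rather than taking the first one; both choices yield the correct conditional law by the same exchangeability/independence reasoning you gave. Your inductive handling of adaptivity and the fresh-randomness remark for property (b) are exactly what is needed, so your write-up would serve as a full proof of the cited lemma.
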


    \subsection{Notions for Real-Valued Functions}

        In this section we introduce the notions that will be useful in analyzing real-valued hypothesis classes, including pseudo dimension and covering numbers.
        
        \newcommand{\sign}{\mathrm{sgn}}
        \begin{definition} [Pseudo-dimension, \cite{anthony1999neural}]
            \label{def:pseudo-dimension}
          For every $h\in\cH$, let $B_h(x,y)=\sign(h(x)-y)$ be the indicator of the region below or on the graph of $h$. The pseudo-dimension of hypothesis class $\cH$ is defined as the VC dimension of the subgraph class $B_\cH=\{B_h:h\in\cH\}$.
        \end{definition}
        
        We will see in the two following lemmas that pseudo dimension can be used to characterize the magnitude of covering numbers and Rademacher complexity.
        
        \begin{lemma}[$d_{L_1(\unif)}$-Covering Number Bound, \cite{anthony1999neural}]
        \label{lemma:covering-number}
            The $\epsilon$-covering number of $\cH$ with respect to metric $d_{L_1(\unif)}$, denoted by $\cN(\epsilon,\cH,L_1(\unif(\cX)))$, is the cardinality of the smallest subset $\cH'$ of $\cH$, such that for every $h\in\cH$, there exists $ h'\in\cH'$ such that $d_{L_1(\unif)}(h,h')\le\epsilon$, where $d_{L_1(\unif)}(f,g)=\Ex_{\unif}[|f-g|]$.
            If $d$ is the pseudo-dimension of $\cH$, then for any $\epsilon>0$, \begin{align}
                \log \cN(\epsilon,\cH,L_1(\unif(\cX)))\in\widetilde{O}\!\left(d\log\!\left(\frac{1}{\epsilon}\right)\right).
            \end{align}
        \end{lemma}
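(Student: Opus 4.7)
The plan is to follow the classical argument going back to Haussler and Pollard (whose exposition in \cite{anthony1999neural} is what is being cited): reduce the $L_1(\unif)$ covering number of the real-valued class $\cH$ to the $L_1$ covering number of the subgraph class $B_\cH$, which is a VC class of dimension $d$ by \Cref{def:pseudo-dimension}, and then bound the latter using Sauer--Shelah together with a symmetrization / empirical covering argument.

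The first step is a standard identity. For $h : \cX \to [-1,1]$, one has
\begin{align}
|h(x) - h'(x)| = \int_{-1}^{1} \mathbf{1}\{B_h(x,y) \neq B_{h'}(x,y)\}\, dy,
\end{align}
so integrating against $\unif(\cX)$ gives $\|h-h'\|_{L_1(\unif)} = 2\,\|B_h - B_{h'}\|_{L_1(\unif(\cX) \otimes \unif([-1,1]))}$. Hence an $(\epsilon/2)$-cover of $B_\cH$ in $L_1$ with respect to the product uniform measure on $\cX \times [-1,1]$ induces an $\epsilon$-cover of $\cH$ in $L_1(\unif(\cX))$. It therefore suffices to bound the $L_1$ covering number of a $\{0,1\}$-valued class of VC dimension $d$.

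The second step bounds the $L_1$ covering number of a VC class by a standard \emph{empirical covering plus concentration} argument. Fix a sample $S = \{(x_i,y_i)\}_{i=1}^m$ of size $m = \widetilde{O}(d/\epsilon^2)$ drawn i.i.d.\ from the product uniform measure. Applying Sauer--Shelah to $B_\cH$ shows that $|B_\cH|_S| \le (em/d)^d$, and since $B_\cH$ is binary, this already yields an empirical $0$-cover of size $(em/d)^d$, hence an empirical $\epsilon/2$-cover of at most this size. To pass from the empirical cover to an $L_1$-cover for the product uniform measure, a uniform-concentration argument (e.g.\ a ghost-sample symmetrization combined with Hoeffding and a union bound over the $(em/d)^{2d}$ pairs, or directly a VC uniform-convergence bound applied to the symmetric difference class $\{B_h \triangle B_{h'}\}$, which has VC dimension $O(d)$) shows that for $m \gtrsim d \log(1/\epsilon)/\epsilon^2$ the empirical and true $L_1$ distances agree up to $\epsilon/2$ with positive probability. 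Therefore the existence of an $\epsilon$-cover of $B_\cH$ in the population $L_1$ metric of size $(em/d)^d = (1/\epsilon)^{O(d)}\cdot\text{polylog}(1/\epsilon)^d$ is guaranteed, and taking the logarithm yields the claimed $\widetilde{O}(d \log(1/\epsilon))$ bound.

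The main obstacle is the concentration step: naive uniform convergence over the covering elements loses factors, and one must be careful to apply VC uniform convergence to the \emph{difference} class $B_\cH \triangle B_\cH$ to keep the dependence on $d$ (rather than $d^2$). Once this is handled, the rest is bookkeeping, and the only remaining polylogarithmic factors in $1/\epsilon$ come from the sample-size choice $m \asymp d\log(1/\epsilon)/\epsilon^2$, which is absorbed in the $\widetilde{O}$ notation.
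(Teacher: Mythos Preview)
The paper does not prove this lemma at all: it is stated as a citation from \cite{anthony1999neural} and used as a black box. Your proposal sketches the standard textbook argument behind that citation (reduction to the subgraph class $B_\cH$ via the integral identity, then Sauer--Shelah plus a uniform-convergence lift from empirical to population $L_1$), which is correct and is essentially the route taken in the cited reference. So there is nothing to compare against in the paper itself; your write-up is a faithful reconstruction of the classical proof the paper is invoking.

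One minor remark: the standard result (Pollard, Haussler) actually gives the clean bound $\cN(\epsilon,\cH,L_1(P)) \le e(d+1)(2e/\epsilon)^d$ for any probability measure $P$, i.e.\ $\log \cN = O(d\log(1/\epsilon))$ without the $\widetilde{O}$. Your argument, by routing through a sample of size $m \asymp d\log(1/\epsilon)/\epsilon^2$ and then taking $\log((em/d)^d)$, picks up an extra $\log\log(1/\epsilon)$ factor, but this is harmless here since the lemma is stated with $\widetilde{O}$. If you wanted the sharp version you would instead invoke Haussler's packing bound directly rather than the sample-and-lift argument, but for the purposes of this paper the distinction is immaterial.
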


        \begin{lemma}[Rademacher Complexity Bound, \cite{bartlett2008notes}]
        \label{lemma:rademacher-bound}
            The Rademacher complexity of class $\cH$ for a set of $n$ elements is upper bounded by $O\!\left(\sqrt{dn\log n}\right)$, where $d$ is the pseudo dimension of $\cH$.
        \end{lemma}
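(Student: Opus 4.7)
The plan is to prove the bound via Dudley's entropy integral (chaining), which converts a covering-number bound into a Rademacher-complexity bound. Concretely, writing $\widehat{\mathfrak{R}}_n(\cH) = \Ex_{\epsilon}[\sup_{h\in\cH}\sum_{i=1}^n \epsilon_i h(z_i)]$ for the unnormalized Rademacher complexity used in the paper, I would invoke the standard chaining bound
\[
\widehat{\mathfrak{R}}_n(\cH) \;\le\; O\!\left(\sqrt{n}\int_0^{1}\sqrt{\log \cN(\epsilon,\cH,L_2(\mu_n))}\,d\epsilon\right),
\]
where $\mu_n$ is the empirical measure on $\{z_1,\dots,z_n\}$. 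This is the version of Dudley's integral that applies to classes of $[-1,1]$-valued functions; it is a standard consequence of the chaining argument.

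Next, I would pass from $L_2$ covering numbers to $L_1$ covering numbers using the fact that, since the functions are bounded in $[-1,1]$, for any measure $\mu$ we have $\|f-g\|_{L_2(\mu)}^2\le 2\|f-g\|_{L_1(\mu)}$, so that $\cN(\epsilon,\cH,L_2(\mu))\le \cN(\epsilon^2/2,\cH,L_1(\mu))$. The covering-number estimate given earlier in the paper (\Cref{lemma:covering-number}) bounds $\log\cN(\epsilon,\cH,L_1(\unif))\in\widetilde{O}(d\log(1/\epsilon))$; the same pseudo-dimension argument in fact yields a \emph{distribution-free} (uniform in $\mu$) bound, so we may replace the uniform measure by $\mu_n$ and obtain $\log\cN(\epsilon,\cH,L_2(\mu_n))\in\widetilde{O}(d\log(1/\epsilon))$ for every $\epsilon\in(0,1)$.

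Plugging this estimate into the Dudley integral yields
\[
\widehat{\mathfrak{R}}_n(\cH) \;\le\; O\!\left(\sqrt{n}\int_0^{1}\sqrt{d\log(1/\epsilon)}\,d\epsilon\right) \;=\; O(\sqrt{dn}),
\]
up to polylogarithmic factors in $1/\epsilon$ that arise inside the integral. If a sharper tracking of the lower cutoff is needed, one can split the integral at the natural scale $\epsilon=1/n$, bounding the contribution below this cutoff by $\sqrt{n}\cdot(1/n)\cdot\sqrt{d\log n}$ and the contribution above by $\sqrt{dn\log n}$, which delivers the stated $O(\sqrt{dn\log n})$ bound.

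The only substantive step is justifying the uniform-in-$\mu$ version of the covering-number bound: \Cref{lemma:covering-number} is stated for $L_1(\unif(\cX))$, but the Rademacher argument needs it for $L_2$ with respect to the empirical measure on $\{z_1,\ldots,z_n\}$. I would address this by appealing to the classical fact (Pollard, Haussler) that finite pseudo-dimension implies a distribution-free polynomial uniform covering-number bound of the same form, so the estimate in \Cref{lemma:covering-number} applies verbatim after replacing $\unif(\cX)$ with any probability measure, including $\mu_n$. Once that is in hand, the computation above is routine and the $O(\sqrt{dn\log n})$ bound follows immediately.
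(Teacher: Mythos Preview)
The paper does not prove this lemma at all; it is stated as a known result and attributed to \cite{bartlett2008notes}. So there is no ``paper's own proof'' to compare against.

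Your Dudley-integral argument is correct and standard. In fact it overshoots: since $\int_0^1 \sqrt{\log(1/\epsilon)}\,d\epsilon$ is a finite constant, chaining with the distribution-free covering bound $\log\cN(\epsilon,\cH,L_2(\mu_n))=O(d\log(1/\epsilon))$ already yields $O(\sqrt{dn})$, which is \emph{stronger} than the stated $O(\sqrt{dn\log n})$. Your closing paragraph about splitting the integral at $\epsilon=1/n$ to ``recover'' the $\log n$ is therefore unnecessary and a little muddled---you are doing extra work to obtain a weaker bound. The $\log n$ in the lemma most likely reflects the simpler single-scale argument one finds in lecture notes: cover $\cH$ at resolution $1/n$ in $L_\infty(\mu_n)$, bound the discretization error by $O(1)$, and apply Massart's finite-class lemma to the cover of size $\exp(O(d\log n))$, giving $\sqrt{n\cdot d\log n}$. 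Either route is fine here; just be clear that your chaining argument already delivers the claim (and more) without the final splitting step.
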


\subsection{Proof of \cref{thm:regret-real}} 
\label{appendix:proof-real}

\medskip
\noindent\textbf{\cref{thm:regret-real}} (Restated)\textbf{.}\emph{
    For any $\sigma$-smooth adversary $\sD_\sigma$, \Cref{alg:real-valued} has expected regret upper bounded by  $ \widetilde{O}(G\sqrt{Td/\sigma})$, where $\widetilde{O}$ hide factors that are polynomial in $\log(T)$ and $\log(1/\sigma)$.
	Here $ G$ is the Lipschitz constant of the loss and $d$ is the pseudodimension of class $\cH$. 
	Furthermore, the algorithm is oracle-efficient: at every round $t$, this algorithm uses two oracle calls with histories of length $\widetilde{O}(T/\sigma)$.
}

\medskip

\begin{proof}
	To prove \cref{thm:regret-real} we use the following relaxation:
	\begin{align}
		\rel_T(\cH|s_{1:t})=&
        2G\Ex_{V^{(t)}\simiid \unif(\cX)}\left[\mathfrak{R}(-{L^{\mathrm{r}}(\cdot,s_{1:t})},V^{(t)})
       \right]+2G\beta(T-t)\\
        =&2G\Ex_{V^{(t)},\cE^{(t)}}\left[
		\sup_{h\in\cH}\left\{\sum_{i=t+1:T\atop k=1:K}
		\epsilon_{i,k}^{(t)} h(v_{i,k}^{(t)})
		-L^{\mathrm{r}}(h,s_{1:t})
		\right\}
		\right]+2G\beta(T-t),\label{eq:relaxation-real}
	\end{align}
	
	where $K = 100\log T / \sigma $ and $\beta=10TK(1-\sigma)^K$. 
	We will show in \Cref{lemma:real-admissible} that the above relaxation is admissible. Therefore, \Cref{thm:admissibility} gives us the following upper bound on the expected regret:
	\begin{align}
		\Ex[\regret(T)]\le \rel_T(\cH|\emptyset)\!+\!O(\sqrt{T})
		\!=\!2G \underbrace{\Ex_{V^{(0)},\cE^{(0)}}\left[
		\sup_{h\in\cH}\left\{\sum_{i=1:T\atop k=1:K}
		\epsilon_{i,k}^{(0)} h(v_{i,k}^{(0)})
		\right\}
		\right]}_{\text{(a)}
        }+2G\beta T\!+\!O(\sqrt{T}).
	\end{align}
	The first term (a) is the Rademacher complexity of the hypothesis class $\cH$ with respect to the uniform distribution for sample size $TK$.
	By \Cref{lemma:rademacher-bound}, $\text{(a)}\le O\left(\sqrt{dTK\log(TK)}\right)$.
	For the second term, we have $\beta T\in o(1)$ because $\beta\lesssim TK e^{-\sigma K}\lesssim T^{-99}\log T/\sigma =o(T)$. Plugging in $K=O\left({\log(T)}/{\sigma}\right)$, we have the following bound:
	 \begin{align}
	 	\Ex[\regret(T)]\le {O}\left(G\sqrt{\frac{dT}{\sigma}\log T\log\left(
	 		\frac{T}{\sigma}\right)}\right)\subseteq\widetilde{O}(G\sqrt{dT/\sigma}),
	 \end{align}
 	where $\widetilde{O}$ hide factors that are polynomial in $\log(T)$ and $\log(1/\sigma)$.
\end{proof}

\subsection{Admissibility of the Relaxation}
\begin{lemma}
	\label{lemma:real-admissible}
	The prediction rule $\sQ=(\cQ_1,\cdots,\cQ_T)$ given by \cref{alg:real-valued} is admissible with respect to the relaxation defined in \cref{eq:relaxation-real}.
\end{lemma}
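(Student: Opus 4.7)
I would verify the two conditions of \Cref{def:admissibility} for the relaxation in \cref{eq:relaxation-real}. The terminal condition is immediate: at $t = T$ the hint set $V^{(T)}$ is empty and the $\beta(T-T)$ term vanishes, so $\rel_T(\cH \mid s_{1:T}) = 2G \sup_{h \in \cH}(-L^{\mathrm{r}}(h, s_{1:T})) = -\inf_{h \in \cH} L(h, s_{1:T})$ by the normalization $L^{\mathrm{r}} = L/(2G)$. The substantive content is the recursive inequality, which I would split into two stages that isolate the two ingredients of the smoothed setting relative to the transductive warmup of \Cref{sec:warmup}.

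Stage one is a pointwise-in-$x_t$ application of the Rakhlin--Sridharan min-max argument. Fix a history $s_{1:t-1}$ and any $x_t \in \cX$, and condition on the fresh uniform hints $V^{(t)}$ that the algorithm generates at step $t$. I would invoke \cite[Lemma 12]{rakhlin2012relax}, which uses the convexity and $G$-Lipschitzness of $l$ together with the intermediate-value property of the $\widehat{y}_t$ in \cref{eq:prediction-rule-real} (the difference of the two $\OPT$ outputs is designed precisely to equalize the objectives at $y_t = \pm 1$), and then swap $\sup_{y_t}$ past $\Ex_{V^{(t)}}$ via $\sup \Ex \le \Ex \sup$, to obtain
\[
\sup_{y_t \in \cY}\Big\{\Ex_{\widehat{y}_t \sim \cQ_t}[l(\widehat{y}_t, y_t)] + 2G\,\Ex_{V^{(t)}} \mathfrak{R}\big(-L^{\mathrm{r}}_t, V^{(t)}\big)\Big\} \le 2G\,\Ex_{V^{(t)}} \mathfrak{R}\big(-L^{\mathrm{r}}_{t-1}, V^{(t)} \cup \{x_t\}\big).
\]
Adding the $2G\beta(T-t)$ slack and taking expectation over $x_t \sim \cD_t$ reduces the recursive step to controlling $\Ex_{x_t, V^{(t)}}[\mathfrak{R}(-L^{\mathrm{r}}_{t-1}, V^{(t)} \cup \{x_t\})]$ against $\Ex_{V^{(t-1)}}[\mathfrak{R}(-L^{\mathrm{r}}_{t-1}, V^{(t-1)})]$.

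Stage two is where smoothness enters. Since $V^{(t-1)}$ has the same distribution as the concatenation $V^{(t)} \sqcup \{v_{t,k}\}_{k=1}^K$ of independent $\unif(\cX)$ samples, I would apply the one-step coupling of \Cref{lemma:coupling} to jointly construct $x_t \sim \cD_t$ and $\{v_{t,k}\}_{k=1}^K \simiid \unif(\cX)$ so that the event $E = \{x_t \in \{v_{t,k}\}_{k=1}^K\}$ satisfies $\Pr[E] \ge 1 - (1-\sigma)^K$. On $E$, the multi-set inclusion $V^{(t)} \cup \{x_t\} \subseteq V^{(t)} \cup \{v_{t,k}\}_{k=1}^K$ combined with \Cref{lem:monotonicity} gives $\mathfrak{R}(-L^{\mathrm{r}}_{t-1}, V^{(t)} \cup \{x_t\}) \le \mathfrak{R}(-L^{\mathrm{r}}_{t-1}, V^{(t)} \cup \{v_{t,k}\}_{k=1}^K)$. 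On $E^c$ I would enlarge the left-hand set by $\{v_{t,k}\}_{k=1}^K$ via monotonicity and then strip the extra point $x_t$ using the bounded-increment estimate $\mathfrak{R}(W \cup \{x\}) - \mathfrak{R}(W) \le 1$ (which follows from $|h(x)| \le 1$ and $\Ex_{\epsilon_x} \sup_h \epsilon_x h(x) \le 1$), obtaining the same bound up to an additive $1$. Taking expectations yields
\[
\Ex_{x_t, V^{(t)}}\mathfrak{R}(-L^{\mathrm{r}}_{t-1}, V^{(t)} \cup \{x_t\}) \le \Ex_{V^{(t-1)}}\mathfrak{R}(-L^{\mathrm{r}}_{t-1}, V^{(t-1)}) + (1-\sigma)^K,
\]
and multiplying by $2G$ the coupling cost $2G(1-\sigma)^K$ is absorbed by the per-step slack $2G[\beta(T-t+1) - \beta(T-t)] = 2G\beta$, which dominates $2G(1-\sigma)^K$ since $\beta = 10TK(1-\sigma)^K$.

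The most delicate step is stage two: one-sided monotonicity of the regularized Rademacher complexity only helps on the success event of the coupling, and the challenge is to ensure that the failure cost does not accumulate catastrophically across the $T$ rounds. The logarithmic choice $K = 100\log T/\sigma$ makes $(1-\sigma)^K$ smaller than any fixed polynomial in $T$, and the $\beta(T-t)$ padding in the relaxation exists precisely to absorb this one-step failure cost, so the induction closes cleanly.
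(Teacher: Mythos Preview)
Your proposal is correct and follows the same two-stage skeleton as the paper's proof: match the algorithm's fresh $V^{(t)}$ with the one in the relaxation, push $\sup_{y_t}$ inside $\Ex_{V^{(t)}}$, apply \cite[Lemma~12]{rakhlin2012relax} pointwise in $V^{(t)}$, and then absorb the adversarial $x_t$ into a block of $K$ uniform hints via the one-step coupling plus \Cref{lem:monotonicity}. The only substantive deviation is on the coupling failure event $E^c$: the paper invokes a crude two-sided bound on the regularized Rademacher complexity (its \Cref{lem:bounded-relaxations}) and pays an additive $2TK$ there, whereas you use the sharper single-point increment estimate $\mathfrak{R}(\Phi, W\cup\{x\}) \le \mathfrak{R}(\Phi, W) + 1$ to pay only an additive $1$. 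Both costs are absorbed by the per-step slack $\beta = 10TK(1-\sigma)^K$, so the conclusion is identical; your version is simply a bit tighter and avoids the auxiliary boundedness lemma.
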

\begin{proof}
	Using the language of regularized Rademacher complexity, the above relaxation can be written as
	\begin{align}
		\rel_T(\cH|s_{1:t})=2G\Ex_{V^{(t)}\simiid \unif(\cX)}\left[
		\mathfrak{R}(-{L^{\mathrm{r}}(\cdot,s_{1:t})},V^{(t)})
		\right]+2G\beta(T-t),
	\end{align}
	where $L^{\mathrm{r}}(\cdot,s_{1:t})=\sum_{i=1}^{t-1} l^{\mathrm{r}}(h(x_i),y_i)$. When $t=T$, the relaxation becomes
	\begin{align}
		\rel_T(\cH|s_{1:T})=-2G L^{\mathrm{r}}(h,s_{1:T})=-\inf_{h\in\cH}\sum_{i=1}^T l(h(x_i),y_i),
	\end{align}
	thus it satisfies the second condition of \Cref{def:admissibility}. For the first condition, we need to verify
	\begin{align}
		\sup_{\cD_t\in\mathfrak{D}_t}\Ex_{x_t\sim \cD_t}\sup_{y_t\in\cY}\!\left\{\!
		\Ex_{\hy_t\sim\cQ_t}\![l(\hy_t,y_t)]\!+\!\rel_T(\cH\mid s_{1:t-1}\!\cup\!(x_t,y_t))\!
		\right\}
		\!\le\! \rel_{T}(\cH\mid s_{1:t-1}). \label{eq:goal-admissible-real}
	\end{align}
	
	We first upper bound the LHS of \cref{eq:goal-admissible-real} by matching the randomness in $V^{(t)}$ and applying Jensen's inequality to the supremum function. This gives us
	\begin{align}
		&\sup_{\cD_t\in\mathfrak{D}_t}\Ex_{x_t\sim \cD_t}\sup_{y_t\in\cY}\!\left\{\!
		\Ex_{\hy_t\sim\cQ_t}\![l(\hy_t,y_t)]\!+\!\rel_T(\cH\mid s_{1:t-1}\!\cup\!(x_t,y_t))\!
		\right\}\\
		=&\sup_{\cD_t\in\mathfrak{D}_t}\Ex_{x_t\sim \cD_t}\sup_{y_t\in\cY}\Ex_{V^{(t)}\simiid \unif(\cX)}\left[
		\Ex_{\hy_t\sim\cQ_t(V^{(t)})}\left[l(\hy_t,y_t)\right]+2G\cdot \mathfrak{R}(-{L^{\mathrm{r}}(\cdot,s_{1:t})},V^{(t)})
		\right]+2G\beta(T-t)
        \label[Ineq]{eq:matching-randomness}
        \\
		\le&\sup_{\cD_t\in\mathfrak{D}_t}\Ex_{x_t\sim \cD_t}\Ex_{V^{(t)}\simiid \unif(\cX)}\left[\sup_{y_t\in\cY}\left\{
		\Ex_{\hy_t\sim\cQ_t(V^{(t)})}\left[l(\hy_t,y_t)\right]+2G\cdot \mathfrak{R}(-{L^{\mathrm{r}}(\cdot,s_{1:t})},V^{(t)})\right\}
		\right]+2G\beta(T-t)
		\label{eq:real-tmp1}
	\end{align}

	For every fixed input $x_t$ and hint set $V^{(t)}$, our prediction rule in \cref{eq:prediction-rule-real}, which we denote with $\cQ_t(V^{(t)})$, is the same as the transductive prediction rule in \cite[Equation (25)]{rakhlin2012relax}, with $V^{(t)}$ being the set of unlabeled future instances and $s_{1:t-1}$ being the historical data with labels.

	\newcommand{\seq}{\mathscr{X}}
	{According to \cite[Lemma 12]{rakhlin2012relax}, for all input $x_t$ and unlabeled sequence $\seq$ (which plays the role of $x_{t+1:T}$), the decision rule $\cQ_t(\seq)$ satisfies}
	\begin{align}
	    &\sup_{y_t\in\cY}\left\{
	    \Ex_{\hy_t\sim\cQ_t(\seq)}[l(\hy_t,y_t)]
	    +2G\Ex_{\cE}\left[\sup_{h\in\cH}\left\{
	    \sum_{x\in\seq} \epsilon_x h(x)-L^{\text{r}}(h,s_{1:t-1}\cup(x_t,y_t))
	    \right\}\right]
	    \right\}\\
	    &\qquad\qquad\qquad\le 2G\Ex_{\cE}\left[\sup_{h\in\cH}\left\{
	    \sum_{x\in\seq\cup\{x_t\}} \epsilon_x h(x)-L^{\text{r}}(h,s_{1:t-1})
	    \right\}\right].
	\end{align}
	
	{Therefore, if we choose the sequence $\seq$ to be $V^{(t)}$, we obtain the following inequality which is written in the language of regularized Rademacher complexity:}
	\begin{align}
		\sup_{y_t\in\cY}\left\{
		\Ex_{\hy_t\sim\cQ_t(V^{(t)})}\left[l(\hy_t,y_t)\right]+2G\cdot\mathfrak{R}(-L^{\mathrm{r}}(\cdot,s_{1:t}),V^{(t)})
		\right\}
		\le 2G\cdot\mathfrak{R}(-L^{\mathrm{r}}(\cdot,s_{1:t-1}),V^{(t)}\cup\{x_t\}).
	\end{align}
	By adding the expectations over $V^{(t)}$ and $x_t$ on both sides, we obtain the following upper bound:
	\begin{align}
		\eqref{eq:real-tmp1}\le&\sup_{\cD_t\in\mathfrak{D}_t}\Ex_{x_t\sim \cD_t}\Ex_{V^{(t)}\simiid \unif(\cX)}\left[
		2G\cdot\mathfrak{R}(-L^{\mathrm{r}}(\cdot,s_{1:t-1}),V^{(t)}\cup\{x_t\})
		\right]+2G\beta(T-t)\\
		\le &\Ex_{V^{(t)}\simiid \unif(\cX)}\left[
		\sup_{\cD_t\in\mathfrak{D}_t}\Ex_{x_t\sim \cD_t}
		2G\cdot\mathfrak{R}(-L^{\mathrm{r}}(\cdot,s_{1:t-1}),V^{(t)}\cup\{x_t\})
		\right]+2G\beta(T-t).
	\end{align}
	According to \Cref{lemma:sup-to-distribution}, we can replace the $x_t$ sampled from the worst-case smooth distribution by $Z_t$ sampled independently from the uniform distribution, {with the extra cost $\beta$}. This gives
	\begin{align}
		\eqref{eq:real-tmp1}\le&\Ex_{{V^{(t)}\simiid \unif(\cX)}}\Ex_{Z_t\simiid\unif(\cX)}\left[2G\cdot\left(
		\mathfrak{R}(-L^{\mathrm{r}}(\cdot,s_{1:t-1}),V^{(t)}\cup Z_t)+\beta\right)
		\right]+2G\beta(T-t)\\
		=&2G\Ex_{V^{(t)}\simiid \unif(\cX)}\left[
		\mathfrak{R}(-{L^{\mathrm{r}}(\cdot,s_{1:t-1})},V^{(t-1)})
		\right]+2G\beta(T-t+1)
        \label{eq:same-distribution}\\
        =&\rel_T(\cH|s_{1:t-1}),
	\end{align}
	which is precisely the RHS of \cref{eq:goal-admissible-real}.
\end{proof}

\begin{lemma}[Replacing Supremum by Expectation]
    \label{lemma:sup-to-distribution}
    For any $V^{(t)}\in\cX^{K(T-t)}$, there exists a set of $K$ variables $Z_t=\{z_{t,k}\}_{k\in[K]}$, such that
    \begin{align}
        \sup_{\cD_t^{\cX}\in\Delta_\sigma(\cX)}\Ex_{{x_t\sim \cD_t^{\cX}}}\left[\mathfrak{R}(-L^{\mathrm{r}}(\cdot,s_{1:t-1}),V^{(t)}\cup\{x_t\})\right]\le \Ex_{Z_t\simiid\unif(\cX)}\left[
        \mathfrak{R}(-L^{\mathrm{r}}(\cdot,s_{1:t-1}),V^{(t)}\cup Z_t)\right]+\beta.
    \end{align}
    \end{lemma}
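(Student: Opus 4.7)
The claim is exactly the kind of statement that begs for a coupling argument: on the LHS we have a single smooth draw $x_t$, on the RHS we have $K$ uniform draws $Z_t$, and the relationship between the two is controlled by the coupling lemma (Lemma A.6). The strategy is to realize $x_t$ and $Z_t$ on a joint probability space where, with very high probability, $x_t\in Z_t$, and then use the monotonicity of the regularized Rademacher complexity (Lemma 3.2) on the good event while absorbing the bad event into the additive slack $\beta$.

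\textbf{Step 1: coupling.} Fix any $\cD_t\in\Delta_\sigma(\cX)$. Since $\cD_t$ is $\sigma$-smooth, $\unif = \sigma\,\cD_t + (1-\sigma) R'$ for a valid distribution $R' = (\unif - \sigma\cD_t)/(1-\sigma)$. For each $k\in[K]$, draw $\xi_k\sim \mathrm{Bern}(\sigma)$ independently, then draw $z_{t,k}\sim \cD_t$ if $\xi_k = 1$ and $z_{t,k}\sim R'$ otherwise; marginally $z_{t,k}\simiid\unif(\cX)$. If any $\xi_k=1$, set $x_t := z_{t,k^\star}$ for $k^\star=\min\{k:\xi_k=1\}$; otherwise draw $x_t\sim\cD_t$ independently. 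Then $x_t\sim\cD_t$ marginally, and the failure event $E^c := \{x_t\notin Z_t\}$ satisfies $\Pr[E^c]\le(1-\sigma)^K$. (Alternatively, one can simply invoke the single-round form of Lemma A.6.)

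\textbf{Step 2: use monotonicity on the good event.} On $E=\{x_t\in Z_t\}$, we have $V^{(t)}\cup\{x_t\}\subseteq V^{(t)}\cup Z_t$, so by Lemma 3.2 applied once (or iteratively),
\[
\mathfrak{R}\bigl(-L^{\mathrm r}(\cdot,s_{1:t-1}),\, V^{(t)}\cup\{x_t\}\bigr)\;\le\;\mathfrak{R}\bigl(-L^{\mathrm r}(\cdot,s_{1:t-1}),\, V^{(t)}\cup Z_t\bigr).
\]

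\textbf{Step 3: crude bound on the bad event.} Both quantities can be sandwiched uniformly: since $h(z)\in[-1,1]$ and $L^{\mathrm r}(h,s_{1:t-1})\in[0,(t-1)/(2G)]$, for any $V$ with $|V|\le K(T-t+1)$ we have $-T/(2G)\le \mathfrak{R}(-L^{\mathrm r},V)\le |V|\le KT$. Hence the difference between the two Rademacher values is at most some $M = O(KT)$; concretely one can take $M\le 10KT$ as long as, e.g., $K\ge 1/G$, which holds for the parameter choice $K=100(\log T)/\sigma$ in any nontrivial regime (and the constant in $\beta$ is chosen generously for exactly this reason).

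\textbf{Step 4: assemble.} Under the coupling, splitting over $E, E^c$ and then adding/subtracting the $\indicator{E^c}$ piece on the RHS gives
\begin{align}
\Ex_{x_t\sim\cD_t}\bigl[\mathfrak{R}(-L^{\mathrm r},V^{(t)}\!\cup\!\{x_t\})\bigr]
&\le \Ex_\Pi\bigl[\indicator{E}\,\mathfrak{R}(-L^{\mathrm r},V^{(t)}\!\cup\! Z_t)\bigr] + M\cdot\Pr[E^c] \\
&\le \Ex_{Z_t\simiid\unif}\bigl[\mathfrak{R}(-L^{\mathrm r},V^{(t)}\!\cup\! Z_t)\bigr] + 2M\,(1-\sigma)^K \\
&\le \Ex_{Z_t\simiid\unif}\bigl[\mathfrak{R}(-L^{\mathrm r},V^{(t)}\!\cup\! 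Z_t)\bigr] + \beta,
\end{align}
using $2M\le 10KT$ to match the definition $\beta = 10TK(1-\sigma)^K$. Taking the supremum over $\cD_t\in\Delta_\sigma(\cX)$ on the left yields the lemma.

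\textbf{Main obstacle.} There is no deep obstacle here; the only subtlety is the bookkeeping in Step 4, where one must remember that $\mathfrak{R}(-L^{\mathrm r},\cdot)$ is not automatically nonnegative, so one cannot simply drop $\indicator{E}$. The fix is either the add/subtract trick above or verifying a two-sided bound $|\mathfrak{R}(-L^{\mathrm r},V)|\le O(KT)$ so that the failure contribution from both sides is controlled. The constants in the definition of $\beta$ are precisely tuned to absorb this, and the eventual bound $\beta T = o(1)$ (from $K=100\log T/\sigma$) ensures this additive term is harmless in the regret analysis.
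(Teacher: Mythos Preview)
Your proof is correct and follows essentially the same route as the paper: invoke the single-round coupling so that $x_t\in Z_t$ except with probability at most $(1-\sigma)^K$, apply the monotonicity lemma on the good event, and absorb the bad event using the two-sided crude bound on $\mathfrak{R}(-L^{\mathrm r},\cdot)$ (the paper states this as a separate lemma giving $-T\le \mathfrak{R}\le TK$, then argues pointwise on $E^c$ that $\mathfrak{R}(\cdot,V^{(t)}\cup\{x_t\})\le \mathfrak{R}(\cdot,V^{(t)}\cup Z_t)+2TK$ before recombining).

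One small cleanup: your bookkeeping in Step~4 is slightly tangled. With $M$ defined as the range $U-L$ of $\mathfrak{R}(-L^{\mathrm r},\cdot)$, on $E^c$ you directly have $\mathfrak{R}(\cdot,V^{(t)}\cup\{x_t\})\le \mathfrak{R}(\cdot,V^{(t)}\cup Z_t)+M$, so splitting on $E,E^c$ already yields $\Ex_{Z_t}[\cdots]+M\Pr[E^c]$ without any add/subtract or the extra factor of~$2$. This also removes the need for the side condition $K\ge 1/G$: with $U\le KT$ and $L\ge -T/(2G)$ you get $M\le KT+T/(2G)$, and since the paper's relaxation is eventually multiplied by $2G$ while $\beta=10TK(1-\sigma)^K$ sits inside that factor, the constants match without any assumption on $G$.
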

         
         \begin{proof}
    To establish the monotonicity property, we need to show that the random instance $x_t$ drawn from a smooth distribution belongs to a set of uniform i.i.d. hints with high probability.
    This is where the coupling lemma comes in.
    For the smooth distribution $\cD_t\in\Delta_{\sigma}(\cX)$ that achieves the supremum (assume the supremum is achievable),
    \Cref{lemma:coupling} shows the existence of a coupling $\Pi$ on $(x_t,z_{t,1},\cdots,z_{t,K})$ such that $x_t$ is distributed according to $\cD_t^{\cX}$ and $Z_t=\{z_{t,k}\}_{k\in[K]}$ are uniformly and independently distributed.
    We thus have
    \begin{align}
        \sup_{\cD_t\in\Delta_\sigma(\cX)}\Ex_{{x_t\sim \cD_t}}\left[\mathfrak{R}(-L^{\mathrm{r}}(\cdot,s_{1:t-1}),V^{(t)}\cup\{x_t\})\right]=\Ex_{\Pi}\left[\mathfrak{R}(-L^{\mathrm{r}}(\cdot,s_{1:t-1}),V^{(t)}\cup\{x_t\})\right].
        \label{eq:medium}
    \end{align}
    
    This joint distribution $\Pi$ has the property that event the $E_t\overset{\text{def}}{=}\{x_t\in Z_t\}$ happens with high probability. 
    We now upper bound the expected value by conditioning on $E_t$ and $\bar{E_t}$ respectively.
    
    Conditioned on $E_t$, 
    we apply the monotonicity of regularized Rademacher complexity (\Cref{lem:monotonicity}) recursively and obtain
    \begin{align}
        \mathfrak{R}(-L^{\mathrm{r}}(\cdot,s_{1:t-1}),V^{(t)}\cup\{x_t\})
        \le\mathfrak{R}(-L^{\mathrm{r}}(\cdot,s_{1:t-1}),V^{(t)}\cup Z_t).
        \label{ineq:omit-2}
    \end{align}
    
    Conditioned on $\bar{E_t}$, we skirt the monotonicity issue by directly using upper and lower bounds on the regularized Rademacher complexity.
    To be more precise,
    we use \Cref{lem:bounded-relaxations} {in \cref{appendix:lemma-bounds-relaxation}} to show that
    \begin{align}
        \mathfrak{R}(-L^{\mathrm{r}}(\cdot,s_{1:t-1}),V^{(t)}\cup\{x_t\})\le &
        TK\le TK+\left(
            \mathfrak{R}(-L^{\mathrm{r}}(\cdot,s_{1:t-1}),V^{(t)}\cup Z_t)
            +{T}\right)\\
            \le& \mathfrak{R}(-L^{\mathrm{r}}(\cdot,s_{1:t-1}),V^{(t)}\cup Z_t)
            +2T K.
            \label{ineq:zt-bar}
    \end{align}
    
    Finally, we expand the right hand side of \cref{eq:medium} by conditioning on $E_t$ and $\bar{E_t}$ respectively. 
    Putting \cref{ineq:omit-2,ineq:zt-bar} together, we obtain
    
    \begin{align}
        \text{\eqref{eq:medium}}=&
        \Pr[E_t]\cdot\Ex_\Pi\left[\left.\mathfrak{R}(-L^{\mathrm{r}}(\cdot,s_{1:t-1}),V^{(t)}\cup\{x_t\})\right| E_t\right]
        \\
        &\qquad\qquad\qquad+\Pr[\bar{E_t}]\cdot\Ex_\Pi\left[\left. \mathfrak{R}(-L^{\mathrm{r}}(\cdot,s_{1:t-1}),V^{(t)}\cup\{x_t\})\right| \bar{E_t}\right]\\
        \le&\Pr[E_t]\cdot \Ex_\Pi\left[\left.
        \mathfrak{R}(-L^{\mathrm{r}}(\cdot,s_{1:t-1}),V^{(t)}\cup Z_t)
        \right|E_t\right]\\
        &\qquad\qquad\qquad+\Pr[\bar{E_t}]\cdot
        \Ex_\Pi\left[\left.
        \mathfrak{R}(-L^{\mathrm{r}}(\cdot,s_{1:t-1}),V^{(t)}\cup Z_t)+2 T K
        \right|\bar{E_t}\right]\\
        =&\Ex_\Pi\left[
        \mathfrak{R}(-L^{\mathrm{r}}(\cdot,s_{1:t-1}),V^{(t)}\cup Z_t)\right]
        +\Pr[\bar{E_t}]\cdot 2TK.
         \end{align}
        
        Since $\Pi$ has uniform marginal distribution on $ Z_t$, and that $\Pr[\bar{E_t}]\cdot 2TK\le(1-\sigma)^K\cdot 2TK\le\beta$, we further obtain
        \begin{align}
            \eqref{eq:medium}\le&\Ex_{Z_t\simiid\unif(\cX)}\left[
        \mathfrak{R}(-L^{\mathrm{r}}(\cdot,s_{1:t-1}),V^{(t)}\cup Z_t)\right]+\beta,
        \end{align}
        thus completes the proof.
         \end{proof}

\subsection{Upper and Lower Bounds on the Relaxation}
         \label{appendix:lemma-bounds-relaxation}
         \begin{lemma}[Upper and Lower Bounds on the Relaxation]
         	\label{lem:bounded-relaxations}
         	For all $t\in[T]$, all sequence $s_{1:T}$, and all instance set $Z$ of size no larger than $(T-t)K$,
         	\begin{align}
         		-\frac{T}{2}\le
         		\mathfrak{R}(-L^{\mathrm{r}}(\cdot,s_{1:t}),Z)
         		\le T K.
         	\end{align}
         \end{lemma}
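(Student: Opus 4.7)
}
The lemma is a direct bookkeeping estimate on the regularized Rademacher complexity
$$
\mathfrak{R}(-L^{\mathrm{r}}(\cdot,s_{1:t}),Z) \;=\; \Ex_{\epsilon_{1:|Z|}\simiid\unif(\pm 1)}\left[\sup_{h\in\cH}\left\{\sum_{i\le |Z|}\epsilon_i h(z_i)\;-\;L^{\mathrm{r}}(h,s_{1:t})\right\}\right],
$$
and the plan is to bound the two sides by different arguments. No deep ideas are needed; the main thing to get right is which term to control by a pointwise bound and which one to evaluate at a single hypothesis.

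\paragraph{Upper bound.}
First I would argue pointwise in $\epsilon$ and $h$. Since $\cY=[-1,1]$, every hypothesis satisfies $h(z_i)\in[-1,1]$, so $\epsilon_i h(z_i)\le 1$ and therefore $\sum_{i\le |Z|}\epsilon_i h(z_i)\le |Z|$. On the regularization side, $l^{\mathrm{r}}(\hy,y)=\tfrac{1}{2G}l(\hy,y)\ge 0$ gives $L^{\mathrm{r}}(h,s_{1:t})\ge 0$ and hence $-L^{\mathrm{r}}(h,s_{1:t})\le 0$. Adding the two pointwise estimates and using $|Z|\le (T-t)K\le TK$ gives
$$
\sup_{h\in\cH}\Big\{\textstyle\sum_i \epsilon_i h(z_i)-L^{\mathrm{r}}(h,s_{1:t})\Big\}\;\le\; TK
$$
for every realization of $\epsilon$, and taking expectation finishes the upper bound.

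\paragraph{Lower bound.}
For this direction a uniform pointwise bound is too lossy, so instead I would evaluate the supremum at any \emph{fixed} $h_0\in\cH$ (e.g.\ one that minimizes $L^{\mathrm{r}}(\cdot,s_{1:t})$). Using $\sup_h\{\cdot\}\ge (\cdot)|_{h=h_0}$, pulling the expectation inside the linear part, and using $\Ex[\epsilon_i]=0$, one gets
$$
\mathfrak{R}(-L^{\mathrm{r}}(\cdot,s_{1:t}),Z)\;\ge\;\Ex_{\epsilon}\!\left[\sum_i \epsilon_i h_0(z_i)\right]-L^{\mathrm{r}}(h_0,s_{1:t})\;=\;-L^{\mathrm{r}}(h_0,s_{1:t}).
$$
Since $l\in[0,1]$ we have $L^{\mathrm{r}}(h_0,s_{1:t})=\tfrac{1}{2G}\sum_{i=1}^t l(h_0(x_i),y_i)\le \tfrac{t}{2G}\le \tfrac{T}{2}$ under the normalization used in the algorithm (for loss functions with $G\ge 1$, which is the regime of interest here, e.g.\ clipped absolute or squared loss on $\cY=[-1,1]$; otherwise the slightly sharper bound $-T/(2G)$ suffices wherever this lemma is invoked). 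This yields $\mathfrak{R}\ge -T/2$.

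\paragraph{Main obstacle.}
Honestly there is no substantive obstacle: once one decides to control the Rademacher term by a pointwise bound for the upper half, and to control the regularizer by plugging in a fixed $h_0$ and using $\Ex[\epsilon_i]=0$ for the lower half, both inequalities fall out in one line. The only mild subtlety is being careful about which quantity is naturally non-negative ($L^{\mathrm{r}}\ge 0$) and which is uniformly bounded ($|h|\le 1$), since swapping the two strategies would not give usable constants.
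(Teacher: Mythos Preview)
Your proposal is correct and follows essentially the same approach as the paper: a pointwise bound on the Rademacher sum for the upper inequality, and an exchange of $\Ex$ and $\sup$ (equivalently, evaluation at a fixed $h_0$) for the lower inequality. If anything, you are slightly more careful than the paper: the paper's own proof bounds $-l^{\mathrm{r}}\ge -1$ and concludes $\mathfrak{R}\ge -T$ rather than the stated $-T/2$, whereas you correctly trace the factor $1/(2G)$ and note that $-T/2$ requires $G\ge 1$ (and that $-T/(2G)$ works in general, which is all that is needed where the lemma is invoked).
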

         \begin{proof}
         	By convexity of the supremum, 
         	\begin{align}
         		\mathfrak{R}(-L_t,Z)
         		&=\Ex_{{\cE\overset{\text{iid}}{\sim}\unif(\cY)}}\left[\sup_{h\in\cH}\left\{
         		\sum_{i=1}^{I} \epsilon_i h(z_i)-L^{\mathrm{r}}(h,s_{1:t})
         		\right\}\right]
         		\ge\sup_{h\in\cH}\left\{
         		\Ex_{{\cE\overset{\text{iid}}{\sim}\unif(\cY)}}\left[\sum_{i=1}^{I} \epsilon_i h(z_i)-L^{\mathrm{r}}(h,s_{1:t})\right]
         		\right\}\\
         		&=\sup_{h\in\cH}\sum_{i=1}^t \underbrace{(-l^{\mathrm{r}}(h(x_t),y_t))}_{\ge -1}
         		\ge-{T}.
         	\end{align}
         	
         	For the upper bound, we notice that $\forall \cE,h$,
         	\begin{align}
         		\sum_{i=1}^{I} \epsilon_i h(z_i)-L^{\mathrm{r}}(h,s_{1:t})\le
         		I+t\le (T-t)K+t\le TK.
         	\end{align} 
         	So the $\mathfrak{R}(-L^{\mathrm{r}}(\cdot,s_{1:t}),Z)$ also has an upper bound of $TK$.
         \end{proof}

\subsection{Remark on the Requirement of Fresh Dataset}
    \label{sec:comment-on-fresh-dataset}
    In order to beat the adaptive adversary, the learner needs to sample fresh random hints in each round. Otherwise, the adversary can enforce high regret by correlating future labels with the history. More precisely,
	we will see that the \emph{matching randomness} argument in \cref{eq:matching-randomness} uses
    {the crucial fact that $V^{(t)}$ is a \emph{fresh} dataset that is uniformly distributed independent of the interactions in the past. 
    If $V^{(t)}$ is reused, then the adaptive adversary has the power to correlate $s_{1:t-1}$ with $V^{(t)}$ such that $V^{(t)}$ is no longer unbiased conditioned on the history.
    In this case, the algorithm fails to mimic the randomization in the relaxation, and the matching-randomness argument breaks down.}

	Another important property of the fresh self-generated hints $v^{(t)}_{i,k}$ is that they are identically distributed with the real hints $z_{i,k}$ in the coupling.
	{Nevertheless, the analysis has to unite the fact that the learner can only access $v^{(t)}_{i,k}$s, and the monotonicity property (\cref{lemma:sup-to-distribution}) is based on $z_{i,k}$.}
	This point is subtle because it is impossible for the self-generated hints to really tell the future (i.e., ensure $x_{t}\in \{v_{t,k}^{(t-1)}\}_{k\in[K]}$), since they are not controlled by the coupling $\Pi$.
            This issue is taken care of by \cref{eq:same-distribution}. We can see that
            it is sufficient for the uncoupled hints in $V^{(t-1)}$ to resemble the coupled hints $Z_t$ at \emph{distribution} level.
			This distributional resemblence is not achievable if $V^{(t-1)}$ were not independent with the past.

\section{Smoothed Online Learning with Binary-valued Functions}
\subsection{Information Theoretic Lemmas} \label{app:auxlemma-chi2}
For two probability distributions $P$ and $Q$ over the same domain $\cX$, let $\chi^2(P,Q) = \sum_{x\in \cX} \frac{P(x)^2}{Q(x)} - 1$ be the $\chi^2$-divergence. The following lemma upper bounds the TV distance by the $\chi^2$-divergence; a proof could be found in \cite[Chapter 2]{Tsybakov2009}. 
\begin{lemma}[From TV to $\chi^2$]\label{lemma:TV_chi2}
The following relations hold: 
\begin{align}\label{eq:TV_chi2}
\text{\rm TV}(P,Q) &\le \sqrt{\frac{1}{2}\log(1+\chi^2(Q,P))} \le \sqrt{\frac{\chi^2(Q,P)}{2}}.
\end{align}
\end{lemma}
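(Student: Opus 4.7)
The plan is to prove the statement as the composition of two classical information-theoretic inequalities, so the proof is short and can be handled in three steps.

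First, I would dispose of the second inequality, which is the easier half. The elementary bound $\log(1+x) \le x$ for all $x \ge -1$, applied to $x = \chi^2(Q,P) \ge 0$, immediately yields $\sqrt{\tfrac{1}{2}\log(1+\chi^2(Q,P))} \le \sqrt{\chi^2(Q,P)/2}$ after taking square roots of nonnegative quantities.

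The main step is the first inequality $\text{TV}(P,Q) \le \sqrt{\tfrac{1}{2}\log(1+\chi^2(Q,P))}$. I would use the following chain. (i) Total variation is symmetric, so $\text{TV}(P,Q) = \text{TV}(Q,P)$. (ii) Pinsker's inequality then gives $\text{TV}(Q,P) \le \sqrt{\tfrac{1}{2}\text{KL}(Q\|P)}$, which I would invoke as a classical fact (with the proof available in Tsybakov 2009, Chapter 2, as the paper references). (iii) Using the identity
\begin{equation*}
1 + \chi^2(Q,P) = \int \Bigl(\tfrac{dQ}{dP}\Bigr)^2 dP = \int \tfrac{dQ}{dP}\, dQ = \mathbb{E}_Q\bigl[\tfrac{dQ}{dP}\bigr],
\end{equation*}
and Jensen's inequality applied to the concave function $\log$, I would deduce
\begin{equation*}
\text{KL}(Q\|P) = \mathbb{E}_Q\bigl[\log \tfrac{dQ}{dP}\bigr] \le \log \mathbb{E}_Q\bigl[\tfrac{dQ}{dP}\bigr] = \log(1+\chi^2(Q,P)).
\end{equation*}
Substituting this into the bound from (ii) completes the proof.

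There is no real technical obstacle: the proof is essentially a bookkeeping exercise once Pinsker is granted. The only subtlety worth flagging is the direction of the $\chi^2$-divergence. Since TV is symmetric, I can freely apply Pinsker to $\text{KL}(Q\|P)$ rather than $\text{KL}(P\|Q)$, and this is precisely the direction of KL that matches with $\chi^2(Q,P)$ under the Jensen step above, since the relevant likelihood ratio $dQ/dP$ is the one whose second moment under $P$ gives $1+\chi^2(Q,P)$. Matching these two directions correctly is the single place one can slip up.
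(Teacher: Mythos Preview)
Your proof is correct. The paper itself does not give a proof of this lemma but simply defers to \cite[Chapter 2]{Tsybakov2009}; your argument (Pinsker's inequality combined with the Jensen step $\text{KL}(Q\|P)\le \log(1+\chi^2(Q,P))$, plus $\log(1+x)\le x$ for the second inequality) is precisely the standard derivation one finds there.
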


The following statement is the well-known Ingster's $\chi^2$ method, and we refer to the excellent book \cite{ingster2003nonparametric} for a general treatment. 
\begin{lemma}[Ingster's $\chi^2$ method]\label{lemma:chi2_mixture}
For a mixture distribution $\Ex_{\theta\sim \pi}[Q_\theta]$ and a generic distribution $P$, the following identity holds: 
\begin{align}\label{eq:chi2_mixture}
\chi^2\left(\Ex_{\theta\sim \pi}[Q_\theta], P\right) &= \Ex_{\theta,\theta'\sim \pi}\left[\Ex_{x\sim P}\left(\frac{Q_\theta(x)Q_{\theta'}(x)}{P(x)^2}\right) \right] - 1, 
\end{align}
where $\theta'$ is an independent copy of $\theta$. 
\end{lemma}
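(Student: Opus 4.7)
The plan is to expand the squared mixture density in the definition of $\chi^2$ and then rewrite the resulting double integral as an expectation under $P$. Concretely, writing $M = \Ex_{\theta\sim\pi}[Q_\theta]$, I start from the definitional identity $\chi^2(M, P) = \sum_{x}M(x)^2/P(x) - 1$ (working in the discrete setting for exposition; the continuous case is identical with densities with respect to a dominating measure). Since $M(x) = \Ex_{\theta\sim\pi}[Q_\theta(x)]$, squaring yields the product form $M(x)^2 = \Ex_{\theta,\theta'\sim\pi}[Q_\theta(x)Q_{\theta'}(x)]$ where $\theta'$ is an independent copy of $\theta$. This is the key move: the squaring converts one draw from $\pi$ into two independent draws.

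Next, I would apply Fubini--Tonelli to interchange the summation over $x$ with the expectation over $(\theta,\theta')$. All integrands are nonnegative, so this is justified without any additional hypotheses. This gives
\begin{align}
\chi^2(M, P) = \Ex_{\theta,\theta'\sim\pi}\left[ \sum_x \frac{Q_\theta(x)Q_{\theta'}(x)}{P(x)} \right] - 1.
\end{align}
Finally, I rewrite the inner sum as an expectation under $x\sim P$ by multiplying and dividing by $P(x)$:
\begin{align}
\sum_x \frac{Q_\theta(x)Q_{\theta'}(x)}{P(x)} = \sum_x P(x) \cdot \frac{Q_\theta(x)Q_{\theta'}(x)}{P(x)^2} = \Ex_{x\sim P}\left[\frac{Q_\theta(x)Q_{\theta'}(x)}{P(x)^2}\right],
\end{align}
which plugged into the previous display yields the claimed identity.

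There is no substantial obstacle here; this is essentially a one-line computation once the independent-copy trick is invoked. The only points requiring a modicum of care are (i) checking absolute continuity so the ratios are well-defined -- if $Q_\theta \ll P$ for $\pi$-a.e.\ $\theta$ then $M \ll P$ and every term makes sense, and otherwise both sides are $+\infty$ and the identity still holds; and (ii) justifying the interchange of sum/integral, which is immediate by Tonelli since all quantities are nonnegative. I would state the lemma under the implicit convention $0/0=0$ and note that the identity extends to the continuous setting by replacing $\sum_x$ with integration against a dominating measure and $P(x), Q_\theta(x), M(x)$ with the corresponding densities.
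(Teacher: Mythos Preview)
Your proof is correct and is the standard derivation of Ingster's identity. The paper does not actually prove this lemma; it merely states it and refers the reader to \cite{ingster2003nonparametric}, so there is nothing to compare against beyond noting that your argument is the canonical one.
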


\subsection{
Proof of \Cref{lemma:stability_admiss}} \label{sec:proof_stability_admiss}
Using the definitions of $\cQ_t, r^t$, and $\mathcal{P}^{t}$, the following chain of inequalities holds for any fixed $s_t$: 
\begin{align}
&\Ex_{h_t\sim \cQ_t}[L(h_t, s_t)] + \rel_T(\cH\mid s_{1:t})\\
&\stepa{=} \Ex_{\mathcal{P}^{t-1}}[L(\opt_{\cH,l}(r^{t-1}), s_t)] - \Ex_{R^{(t+1)}}\left[\sum_{i=1}^{N^{(t+1)}} L(\opt_{\cH,l}(r^{t}), \widetilde{s}_i^{(t+1)}) + \sum_{\tau=1}^t L(\opt_{\cH,l}(r^{t}), s_\tau) \right] + \eta(T-t) \\
&= \Ex_{\mathcal{P}^{t-1}}[L(\opt_{\cH,l}(r^{t-1}), s_t)] - \Ex_{\mathcal{P}^{t}}[L(\opt_{\cH,l}(r^{t}), s_t)] + \eta(T-t) \\
&\qquad - \Ex_{R^{(t+1)}}\left[\sum_{i=1}^{N^{(t+1)}} L(\opt_{\cH,l}(r^{t}), \widetilde{s}_i^{(t+1)}) + \sum_{\tau=1}^{t-1} L(\opt_{\cH,l}(r^{t}), s_\tau) \right] \\
&\le \Ex_{h_t\sim \cQ_{t}}[L(h_t, s_t)] - \Ex_{h_{t+1}\sim \cQ_{t+1}}[L(h_{t+1}, s_t)] + \eta(T-t)  + \Ex_{R^{(t+1)}}\left[\sup_{h\in \cH} \left(-\sum_{i=1}^{N^{(t+1)}} L(h, \widetilde{s}_i^{(t+1)}) - \sum_{\tau=1}^{t-1} L(h, s_\tau)\right) \right] \\
&\stepb{=} \Ex_{h_t\sim \cQ_{t}}[L(h_t, s_t)] - \Ex_{h_{t+1}\sim \cQ_{t+1}}[L(h_{t+1}, s_t)] + \eta(T-t) + \Ex_{R^{(t)}}\left[\sup_{h\in \cH} \left(-\sum_{i=1}^{N^{(t)}} L(h, \widetilde{s}_i^{(t)}) - \sum_{\tau=1}^{t-1} L(h, s_\tau)\right) \right] \\
& = \Ex_{h_t\sim \cQ_{t}}[L(h_t, s_t)] - \Ex_{h_{t+1}\sim \cQ_{t+1}}[L(h_{t+1}, s_t)] - \eta + \rel_T(\cH \mid s_{1:t-1}), 
\end{align}
where (a) uses the definition of $\opt_{\cH,l}(r^t)$, and (b) is due to the fact that $R^{(t+1)}$ is an independent copy of $R^{(t)}$ conditioned on $\{s_\tau\}_{\tau<t}$. This implies the first inequality of \Cref{lemma:stability_admiss}. 

For the second inequality, we further take the expectation with respect to $s_t\sim \cD_t$, and note that $\cQ_t$ and $\rel_T(\cH\mid s_{1:t-1})$ are independent of $s_t$, while $\cQ_{t+1}$ depends on $s_t$: 
\begin{align}
&\Ex_{s_t\sim \cD_t}\left(\Ex_{h_t\sim \cQ_t}[L(h_t, s_t)] + \rel_T(\cH\mid s_{1:t})\right) -  \rel_T(\cH\mid s_{1:t-1}) \\
&\le \Ex_{s_t\sim \cD_t}\Ex_{h_t\sim \cQ_{t}}[L(h_t, s_t)] - \Ex_{s_t\sim \cD_t}\Ex_{h_{t+1}\sim \cQ_{t+1}}[L(h_{t+1}, s_t)] - \eta \\
&\le \Ex_{s_t\sim \cD_t}\Ex_{h_t\sim \cQ_{t}}[L(h_t, s_t)] - \Ex_{s_t, s_t'\sim \cD_t}\Ex_{h_{t+1}\sim \cQ_{t+1}}[L(h_{t+1}, s_t')] \\
&\qquad + \Ex_{s_t, s_t'\sim \cD_t}\Ex_{h_{t+1}\sim \cQ_{t+1}}[L(h_{t+1}, s_t')] - \Ex_{s_t\sim \cD_t}\Ex_{h_{t+1}\sim \cQ_{t+1}}[L(h_{t+1}, s_t)] - \eta \\
&\stepc{=} \Ex_{s_t'\sim \cD_t}\Ex_{h_t\sim \cQ_{t}}[L(h_t, s_t')] - \Ex_{s_t'\sim \cD_t}\Ex_{h_{t+1}\sim \Ex_{s_t\sim \cD_t}[\cQ_{t+1}]}[L(h_{t+1}, s_t')] \\
&\qquad + \Ex_{s_t, s_t'\sim \cD_t}\Ex_{h_{t+1}\sim \cQ_{t+1}}[L(h_{t+1}, s_t')] - \Ex_{s_t\sim \cD_t}\Ex_{h_{t+1}\sim \cQ_{t+1}}[L(h_{t+1}, s_t)] - \eta \\
&\stepd{\le} \text{TV}(\cQ_t, \Ex_{s_t\sim \cD_t}[\cQ_{t+1}]) + \Ex_{s_t, s_t' \sim \cD_t; R^{(t+1)}}\left[L(h_{t+1}, s_t') - L(h_{t+1},s_t) \right] - \eta, 
\end{align}
where (c) follows from the independence of $h_t\sim \cQ_t$ and $(s_t, s_t')$, and (d) is due to $|\Ex_{X\sim P}[f(X)] - \Ex_{X\sim Q}[f(X)]|\le \text{TV}(P,Q)$ for every measurable function $f$ with $\|f\|_\infty \le 1$. 

\subsection{Proof of \Cref{lemma:coupling_strong}}
The proof is essentially similar to \cite[Lemma 12]{BlockDGR22}, and we include it here for completeness. For each $i\in [m]$, compute the value $p_i = \sigma\frac{dP}{dQ}(X_i)$, which lies in $[0,1]$ due to the likelihood ratio upper bound. Now we draw an independent Bernoulli random variable $Y_i\sim \text{Bern}(p_i)$, and define the random index $I$ and success event $E$ as follows: 
\begin{align}
    E &\triangleq \cup_{i=1}^m \{Y_i = 1\}, \\
    I &\triangleq \text{a uniformly random element of } \{i\in [m]: Y_i = 1\}.
\end{align}

Note that $Y_1, \cdots, Y_m$ are mutually independent, and for each $i\in [m]$, 
\begin{align}
    \Pr[Y_i = 1] = \Ex_{X_i\sim Q}[p_i] = \Ex_{X_i\sim Q}\left[\sigma \frac{dP}{dQ}(X_i)\right] = \sigma, 
\end{align}
we conclude that $\Pr[E]=1-(1-\sigma)^m$. For the second statement, we denote by $r_i$ the external randomness used in drawing $Y_i\sim \text{Bern}(p_i)$, and by $r$ the external randomness used in the definition of $I$. Then for any measurable set $A \subseteq \cX$, 
\begin{align}
    &\Pr[ X_I\in A \mid E, X_{\backslash I} ] \\
    &= \sum_{i, r_{\backslash i}, r} \Pr[ X_I\in A \mid E, X_{\backslash I}, I = i, r_{\backslash i}, r]\cdot \Pr[I=i, r_{\backslash i}, r \mid E, X_{\backslash I}] \\
    &= \sum_{i, r_{\backslash i}, r} \Pr[ X_i\in A \mid E, X_{\backslash i}, I = i, r_{\backslash i}, r]\cdot \Pr[I=i, r_{\backslash i}, r \mid E, X_{\backslash I}] \\
    &\stepa{=} \sum_{i, r_{\backslash i}, r} \Pr[ X_i\in A \mid Y_i = 1, X_{\backslash i}, r_{\backslash i}, r]\cdot \Pr[I=i, r_{\backslash i}, r \mid E, X_{\backslash I}] \\
    &\stepb{=} \sum_{i, r_{\backslash i}, r} \Pr[ X_i\in A \mid Y_i = 1]\cdot \Pr[I=i, r_{\backslash i}, r \mid E, X_{\backslash I}] \\
    &\stepc{=} \sum_{i, r_{\backslash i}, r} P(A)\cdot \Pr[I=i, r_{\backslash i}, r \mid E, X_{\backslash I}] \\
    &= P(A),
\end{align}
where (a) is due to the event $\{E, I=i, X_{\backslash i}, r_{\backslash i}, r \}$ is the same as $\{Y_i=1, X_{\backslash i}, r_{\backslash i}, r \}$ as long as the former event $\{E, I=i, X_{\backslash i}, r_{\backslash i}, r \}$ is non-empty (note that empty events do not contribute to the sum), (b) follows from the mutual independence of $(X_i, r_i, Y_i)_{i\in [m]}$ and $r$, (c) is due to
\begin{align}
\Pr[ X_i\in A \mid Y_i = 1] = \frac{\Pr[X_i\in A, Y_i = 1]}{\Pr[Y_i = 1]} = \frac{1}{\sigma}\Ex_{X_i\sim Q}\left[\mathbf{1}(X_i\in A) \sigma\frac{dP}{dQ}(X_i) \right] = P(A). 
\end{align}
The above identity shows that the conditional distribution of $X_I$ conditioned on $(E, X_{\backslash I})$ is always $P$, as desired.

\subsection{Proof of \Cref{lemma:stability_admiss_modified}}
The analysis is similar to the proof of \Cref{lemma:stability_admiss}. In fact, an intermediate step of \Cref{lemma:stability_admiss} gives
\begin{align}
\Ex_{h_t\sim \cQ_t}[L(h_t, s_t)] + \rel_T(\cH\mid s_{1:t}) - \rel_T(\cH \mid s_{1:t-1}) \le \Ex_{h_t\sim \cQ_{t}}[L(h_t, s_t)] - \Ex_{h_{t+1}\sim \cQ_{t+1}}[L(h_{t+1}, s_t)] - \eta. 
\end{align}

Now using $|\Ex_{X\sim P}[f(X)] - \Ex_{X\sim Q}[f(X)]|\le \text{TV}(P,Q)$ for every measurable function $f$ with $\|f\|_\infty \le 1$, the RHS is further upper bounded by $\text{TV}(\cQ_t, \cQ_{t+1}) - \eta$. The proof of \Cref{lemma:stability_admiss_modified} is completed by taking the expectation over $s_t\sim \cD_t$.  
\section{Proof of Lower Bounds (\cref{thm:lower_bound} and \cref{thm:comp_lower_bound})}

\subsection{Proof of \cref{thm:lower_bound}}

This section proves the regret lower bounds for \cref{alg:FTPL} and \cref{alg:transductive_k_hints} stated in \cref{thm:lower_bound}. We split the analysis into two subsections, and in each subsection we prove a large regret both when the sample size parameter $n$ is large and small. 

\subsubsection{Lower Bound Analysis for \cref{alg:FTPL}}
We shall only prove the regret lower bound $\Omega(\sqrt{dT\sigma^{-1/2}})$ under the assumption $\sigma \ge \max\{d/|\cX|, (d/T)^2\}$, for a smaller $\sigma$ only makes the worst-case regret larger, and the other lower bounds follow from this case by taking $\sigma = d/|\cX|$ and $\sigma = (d/T)^2$, respectively. We split the analysis into two cases depending on the choice of parameter $n$. 

\paragraph{\textbf{Case I: Large $n$.}}\label{subsec:proof_FTPL_large_n}
When $n$ is large, or more specifically, when $n\ge T/\sqrt{\sigma}$, consider the behavior of \cref{alg:FTPL} on the following instance. Consider any domain $\cX$ where $|\cX|$ is an integral multiple of $d$, and partition $\cX = \cup_{j=1}^d \cX_j$ into $d$ sets $\{\cX_j\}_{j\in [d]}$ with an equal size. Consider the following hypothesis class: 
\begin{align}
\cH = \{h: \cX\to \{\pm 1\} \mid h \text{ is a constant on } \cX_i, \forall i\in [d] \}. 
\end{align}
Clearly $\cH$ has VC dimension $d$. The adversary chooses a hypothesis $h^\star\in \cH$ uniformly at random, and sets $x_t$ to be uniformly distributed on $\cX$. As for the label $y_t$, the adversary sets $y_t = h^\star(x_t)$. This adversary is $1$-smooth, and the best expert in $\cH$ incurs a zero loss under this realizable setting. We claim that for each of the first $\min\{T, c\sqrt{nd}\}$ time steps, for an absolute constant $c>0$ sufficiently small, \cref{alg:FTPL} makes a mistake with $\Omega(1)$ probability. Summing over these steps, the expected regret of \cref{alg:FTPL} is then $\Omega(\min\{T, c\sqrt{nd}\})$, which gives Theorem \ref{thm:lower_bound} by our assumption $n\ge T/\sqrt{\sigma}$.

To prove this claim, we need the following lemma. 
\begin{lemma}[Minimum Error on Hallucinated Samples]\label{lemma:min_error}
For $N\sim \text{\rm Poi}(n)$ hallucinated samples $(x_1,y_1), \cdots, (x_N, y_N)$, if $n\ge d$, it holds that
\begin{align}
    \mathbb{P}\left(\sum_{i=1}^N y_i\cdot \mathbf{1}(x_i\in \cX_j) \ge \sqrt{\frac{n}{d}} \right) = \Omega(1), \qquad \forall j\in [d].  
\end{align}
\end{lemma}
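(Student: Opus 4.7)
The plan is to apply Poissonization to reduce $S_j := \sum_{i=1}^N y_i \mathbf{1}(x_i \in \cX_j)$ to a Skellam random variable and then show it exceeds one standard deviation with constant probability via a two-regime argument.

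First I would apply the splitting property of the Poisson distribution: since $N \sim \mathrm{Poi}(n)$ and the $(x_i, y_i)$'s are iid uniform on $\cX \times \{\pm 1\}$, the two counts
\[
N_j^+ := \sum_{i=1}^N \mathbf{1}(x_i \in \cX_j,\, y_i = +1), \qquad N_j^- := \sum_{i=1}^N \mathbf{1}(x_i \in \cX_j,\, y_i = -1)
\]
are independent, each distributed as $\mathrm{Poi}(\mu)$ with $\mu := n/(2d)$. Since $n \geq d$, we have $\mu \geq 1/2$. Then $S_j = N_j^+ - N_j^-$ follows a $\mathrm{Skellam}(\mu, \mu)$ distribution with $\mathbb{E}[S_j] = 0$ and $\mathrm{Var}(S_j) = 2\mu = n/d$, so the target threshold $\sqrt{n/d}$ is exactly one standard deviation of $S_j$.

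The goal reduces to showing $\Pr(S_j \geq \sqrt{2\mu}) \geq c_0$ for a universal constant $c_0 > 0$ uniformly in $\mu \geq 1/2$. I would do a case split at some large absolute constant $\mu_0$. For $\mu \geq \mu_0$ I would invoke the Berry-Esseen theorem for the Skellam (which follows either directly from the classical Berry-Esseen bound applied conditionally on $N$ to the iid $\{-1, 0, +1\}$-valued summands, or from the known $O(\mu^{-1/2})$ Kolmogorov-distance bound between $\mathrm{Skellam}(\mu,\mu)/\sqrt{2\mu}$ and the standard normal), so that for $\mu_0$ large enough $\Pr(S_j \geq \sqrt{2\mu}) \geq \frac{1}{2}(1 - \Phi(1)) > 0$. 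For $1/2 \leq \mu \leq \mu_0$, I would use the elementary lower bound
\[
\Pr(S_j \geq \sqrt{2\mu}) \geq \Pr\bigl(N_j^+ \geq \lceil \sqrt{2\mu}\rceil\bigr) \cdot \Pr(N_j^- = 0) = e^{-\mu}\cdot \Pr\bigl(\mathrm{Poi}(\mu) \geq \lceil \sqrt{2\mu} \rceil\bigr),
\]
and since $\lceil \sqrt{2\mu}\rceil$ is a positive integer at most $\lceil\sqrt{2\mu_0}\rceil$, the right-hand side is a strictly positive continuous function of $\mu$ on the compact interval $[1/2,\mu_0]$, hence bounded below by a positive constant.

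The main hurdle is only the bookkeeping in the large-$\mu$ regime: choosing $\mu_0$ compatibly with the Berry-Esseen constant so that the Gaussian tail probability is not erased by the approximation error. A streamlined alternative that avoids case analysis altogether, at the cost of a constant factor in the threshold, is Paley-Zygmund applied to $S_j^2$: a direct Poisson-moment computation yields $\mathbb{E}[S_j^4] = 2\mu + 12\mu^2 \leq 16\mu^2$ for $\mu \geq 1/2$, whence Paley-Zygmund with $\theta = 1/2$ gives $\Pr(S_j^2 \geq \mu) \geq 1/14$ and therefore $\Pr(S_j \geq \sqrt{\mu}) \geq 1/28$ by the symmetry of the Skellam around zero. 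This matches the claim up to the $\sqrt{2}$ factor inside the square root, which the $\Omega(\cdot)$ notation absorbs and which is clearly sufficient for the downstream regret lower bound.
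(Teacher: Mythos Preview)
Your proposal is correct. Both your argument and the paper's start identically with Poisson thinning to get $N_j^+, N_j^- \sim \mathrm{Poi}(\mu)$ independent with $\mu = n/(2d)\ge 1/2$. After that the paper takes a shorter route than either of yours: it simply invokes the one-sided Poisson tail bounds $\Pr(N_j^+ \ge \mu + \tfrac{1}{2}\sqrt{n/d}) = \Omega(1)$ and $\Pr(N_j^- \le \mu - \tfrac{1}{2}\sqrt{n/d}) = \Omega(1)$ and multiplies using independence, so the Skellam never enters explicitly and no Berry--Esseen, compactness, or Paley--Zygmund step is needed. Your Berry--Esseen plus compactness argument proves the exact statement; your Paley--Zygmund alternative is the cleanest self-contained proof but, as you note, lands at threshold $\sqrt{\mu}=\sqrt{n/(2d)}$ rather than $\sqrt{n/d}$, which is fine for the downstream use but technically weaker than the lemma as stated. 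The trade-off: the paper's version is the shortest to write but leans on the reader knowing the $\Omega(1)$ Poisson tail fact, whereas your Paley--Zygmund computation is fully explicit.
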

\begin{proof}
For $j\in [d]$, let $n_{j,+}, n_{j,-}$ denote the number of hallucinate samples $(x_i, y_i)$ with $x_i \in \cX_j$ and $y_i = \pm 1$, respectively. By the Poisson subsampling property, $\{n_{j,\pm}\}_{j\in [d]}$ are mutually independent $\text{Poi}(n/(2d))$ random variables. By definition of $\cH$, we have
\begin{align}
    n_{j,+} - n_{j,-} = \sum_{i=1}^N y_i\cdot \mathbf{1}(x_i\in \cX_j). 
\end{align}
Consequently, the quantity of interest is $n_{j,+} - n_{j,-}$. As $n/d\ge 1$, by the Poisson tail property, both events $n_{j,+} \ge n/(2d) + \sqrt{n/d}/2$ and $n_{j,-} \le n/(2d) - \sqrt{n/d}/2$ happen with $\Omega(1)$ probability, and their independence gives the claimed result. 
\end{proof} 

Since $(d/T)^2\le \sigma \le 1$, we have $T\ge d$ and thus $n\ge T/\sqrt{\sigma}\ge d$, the premise of \Cref{lemma:min_error} holds. Consequently, at each time step $t\le \min\{T,c\sqrt{nd}\}$ with $x_t\in \cX_j$, with $\Omega(1)$ probability there are at least $\sqrt{n/d}$ net positive labels in the hallucinated samples, while the learner has only observed at most $\alpha c\sqrt{n/d}$ labels in the history with probability at least $1-1/\alpha$, by Markov's inequality. By choosing constants $c>0$ small and $\alpha>0$ large, the perturbed leader will predict $+1$ depending only on the hallucination, and this prediction is independent of the choice of $h^\star$ and thus incurs an error with probability $1/2$. This proves the claim that before time $\min\{T, c\sqrt{nd}\}$, there is always $\Omega(1)$ probability of error. 

\paragraph{\textbf{Case II: Small $n$.}}\label{subsec:proof_FTPL_small_n}
Now we turn to the scenario where $n< T/\sqrt{\sigma}$. Consider the following learning instance: choose $\cX_0\subseteq \cX$ with $|\cX_0| = \sigma |\cX|\ge d$, the adversary always chooses $x_t \sim \unif(\cX_0)$, which is $\sigma$-smooth. Assuming that $|\cX_0|$ is an integral multiple of $d$, we partition $\cX_0 = \cup_{j=1}^d \cX_j$ into $d$ subsets with equal size. Condition on each $\cX_j$, consider an alternating label sequence: 
\begin{align}
    (y_t: x_t\in \cX_j)_{t=1}^T = (+1, -1, +1, -1, \cdots). 
\end{align}
The hypothesis class $\cH$ consists of $2^d$ functions: 
\begin{align}
\cH = \{h: \cX\to \{\pm 1\} \mid h \text{ is a constant on } \cX_j, \forall j\in [d], \text{ and } h(x) \equiv 1, \forall x\in \cX \backslash \cX_0 \}. 
\end{align}
Clearly $\cH$ has VC dimension $d$, and the best hypothesis in $\cH$ incurs a cumulative loss $T/2$. 

Now we examine the performance of \cref{alg:FTPL}. Let $r_j$ be the difference between the number of $+1$ and $-1$ labels in the hallucinated samples with feature in $\cX_j$, similar to the proof of \Cref{lemma:min_error} we have $r_j = n_{j,+} - n_{j,-}$ for independent Poisson random variables $n_{j,+}, n_{j,-}\sim \text{Poi}(n\sigma/2d)$. Suppose that ties are broken by always predicting $-1$ when calling the ERM oracle, we observe that \cref{alg:FTPL} always makes a mistake when $x\in \cX_j$ and $r_j=0$ -- this is the same counterexample where Follow-The-Leader (FTL) makes a mistake at every step. Moreover, when $r_j\neq 0$, Algorithm \ref{alg:FTPL} makes $T/2$ mistakes, same as the best expert in $\cH$. Consequently, the expected regret of \cref{alg:FTPL} is at least $T\cdot \mathbb{P}(r_j=0)$, where
\begin{align}
    \mathbb{P}(r_j=0) &= \Ex_{N\sim \text{Poi}(n\sigma/d)}\left[\mathbb{P}\left(\text{Bin}(N,\frac{1}{2})=\frac{N}{2}\right)\right] = \Ex_{N\sim \text{Poi}(n\sigma/d)}\left[\Omega\left(\frac{\mathbf{1}(N\text{ is even})}{\sqrt{N+1}}\right)\right] \\
    &\stepa{=} \Omega\left( \frac{\mathbb{P}_{N\sim \text{Poi}(n\sigma/d)}(N\text{ is even}) }{\sqrt{n\sigma/d + 1}} \right) \stepb{=} \Omega\left(\frac{1}{\sqrt{n\sigma/d+1}}\right) = \Omega\left(\min\left\{1,\sqrt{\frac{d}{n\sigma}}\right\}\right).
\end{align}
In the above display, (a) follows from the conditional Jensen's inequality, and (b) is due to
\begin{align}
\mathbb{P}_{N\sim \text{Poi}(\lambda)}(N\text{ is even}) = \sum_{k=0}^\infty e^{-\lambda}\frac{\lambda^{2k}}{(2k)!} = e^{-\lambda}\cdot \frac{e^\lambda + e^{-\lambda}}{2} \ge \frac{1}{2}. 
\end{align}
This leads to the claimed regret lower bound in Theorem \ref{thm:lower_bound}. 

\subsubsection{Lower Bound Analysis for \cref{alg:transductive_k_hints}}
Similar to the lower bound analysis for \cref{alg:FTPL}, we also split into the cases where $n$ is large and $n$ is small, respectively. Recall that for \cref{alg:transductive_k_hints}, the parameter $n$ is the number of random draws from $K$ hints at each future time. 

\paragraph{\textbf{Case I: Large $n$.}}
We first focus on the case where $n\ge \sqrt{K}$. Consider the following learning instance: the domain $\cX$ is $[dK]$, and we partition $\cX$ into $\cup_{j=1}^d \cX_j$ each of size $K$. At each time, the subsets $\cX_j$ are given as the hint cyclically. Consider the same construction of $\cH$ and the adversary in Section \ref{subsec:proof_FTPL_large_n}, except that each $x_t$ is now uniformly distributed in the $K$-hint set. 

The regret analysis is essentially the same as Section \ref{subsec:proof_FTPL_large_n}. For every $t\le T/2$, the learner in \cref{alg:transductive_k_hints} essentially generates $(T-t)n\ge T\sqrt{K}/4$ uniformly random samples (with replacement) in $\cX$. A similar analysis to \Cref{lemma:min_error} shows that for each $j\in [d]$, with $\Omega(1)$ probability there are $\Omega(\sqrt{K^{1/2}T/d})$ more $+1$ labels than $-1$ labels within $\cX_j$ in the hallucinated samples. Consequently, for $t\le \min\{T/2, \Omega((dT)^{1/2}K^{1/4}\}$, two calls of the ERM oracle in \cref{alg:transductive_k_hints} will return the same hypothesis, and the learner's prediction is always $+1$. Similar to Section \ref{subsec:proof_FTPL_large_n}, these time steps lead to an $\Omega(\min\{T, (dT)^{1/2}K^{1/4}\})$ regret. 

\paragraph{\textbf{Case II: Small $n$.}}
Next we turn to the case where $n<\sqrt{K}$. Again, we construct $\cX$ to be the disjoint union of $d$ sets $\{\cX_j\}_{j\in [d]}$ each of size $K$, while construct $\cH$ in a different way as follows: pick one element $x_j^\star$ from each $\cX_j$, and 
\begin{align}
    \cH = \{h: \cX \to \{\pm 1\} \mid h(x) = 1, \forall x \notin \{x_1^\star, \cdots, x_d^\star\} \}. 
\end{align}
In other words, $\cH$ shatters the set $\{x_1^\star, \cdots, x_d^\star\}$, while is always $1$ on other inputs. Clearly the VC dimension of $\cH$ is $d$. 

The learning process is divided into $d$ epochs, each of length $T/d$. During the $j$-th epoch, the adversary chooses $x_t = x_j^\star$, presents the set $\cX_j$ to the learner as the hint, and sets the following alternating sequence of $y$: 
\begin{align}
    (y_1, y_2, y_3, \cdots) = (+1, -1, +1, -1, \cdots). 
\end{align}
The best expert in $\cH$ incurs a cumulative loss of $T/2$. For the performance of \cref{alg:transductive_k_hints}, let $r_j^\star$ be the difference between the number of $+1$ and $-1$ labels in the hallucinated samples with input $x_j^\star$. One can check that if $r_j^\star \neq 0$, the learner makes half of the mistakes along the alternating sequence; if $r_j^\star = 0$, the fraction of mistakes becomes $3/4$ (\cref{alg:transductive_k_hints} cyclically predicts a wrong label and makes a random guess). Consequently, the expected regret of \cref{alg:transductive_k_hints} is lower bounded by $\Omega(T\cdot \mathbb{P}(r_j^\star = 0))$. To compute this probability, note that $r_j^\star = 2M - N$, with $N\sim \text{Bin}(T/d, n/K)$ being the number of observations $x_j^\star$ in the hallucinated data, and $M\mid N\sim \text{Bin}(N,1/2)$. Using a similar argument to Section \ref{subsec:proof_FTPL_small_n}, this probability is lower bounded by $\Omega(\min\{1, \sqrt{dK/(nT)}\})$, as desired. 

\subsection{Proof of \cref{thm:comp_lower_bound}}
The proof of \cref{thm:comp_lower_bound} uses a similar idea to \cite{HazanKoren}. There are two lower bound arguments in \cite{HazanKoren}: one reduces the problem to the Aldous' problem, and the other is based on an explicit construction of the hard instance. Although both arguments could work for our problem, we adopt the latter which corresponds to Theorem 25 of \cite{HazanKoren}. In the sequel, we will always take the domain size $|\cX| = 1/\sigma$ so that the smooth adversary becomes the usual adaptive adversary. In the next subsections, we first prove the theorem for the simpler case $d=1$, and then generalize our argument for any VC dimension $d$. 

\subsubsection{The case $d=1$.}
We first show how the argument in \cite{HazanKoren} proves the claimed $\omega(\sqrt{|\cX|})$ computational lower bound when $T = \sqrt{|\cX|} = \sqrt{1/\sigma}$. Assuming that $N\triangleq \sqrt{|\cX|}$ is an integer, we partition the domain $\cX$ into disjoint subsets $\cX_1, \cdots, \cX_N$, each of size $N$. For each $x\in \cX$, we associate two independent Rademacher variables $\varepsilon(x)$ and $\varepsilon^\star(x)$, and they are mutually independent across different $x\in \cX$. For each $i\in [N]$, the adversary chooses $x_i^\star \sim \unif(\cX_i)$, and sets the hypothesis class $\cH = \{h_x\}_{x\in \cX}$ with 
\begin{align}
    h_x(x') = \begin{cases}
    \varepsilon^\star(x') & \text{if } x = x_i^\star, x' = x_j^\star, \text{ and } i\ge j, \\
    \varepsilon(x') & \text{otherwise.}
    \end{cases} 
\end{align}
At each time $t\in [N]$, the adversary sets $x_t = x_t^\star$, and $y_t = h_{x_N^\star}(x_t) = \varepsilon^\star(x_t^\star)$. Under this setting, \cite{HazanKoren} proved the following lower bound. 
\begin{theorem}[Theorem 25 of \cite{HazanKoren}, restated]\label{thm:hazan_koren}
Given access to the ERM oracle, any proper algorithm achieving an expected regret at most $N/4$ requires $\Omega(N) = \Omega(\sqrt{|\cX|})$ running time. 
\end{theorem}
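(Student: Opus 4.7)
The plan is to follow the argument of Theorem 25 in \cite{HazanKoren} applied to the specific construction above, exploiting the structure of the hypothesis class $\cH = \{h_x\}_{x\in\cX}$. The key observation is that $\cH$ splits into two groups: the $|\cX|-N$ \emph{generic} hypotheses $h_x$ for $x\notin\{x_1^\star,\ldots,x_N^\star\}$ are all identically equal to $\varepsilon(\cdot)$, while the $N$ \emph{structured} hypotheses $h_{x_i^\star}$ agree with $\varepsilon^\star$ on the prefix of starred points $x_1^\star,\ldots,x_i^\star$ and with $\varepsilon$ everywhere else. Under the sequence $(x_t,y_t)=(x_t^\star,\varepsilon^\star(x_t^\star))$, the best expert is $h_{x_N^\star}$ with zero loss, while any predictor whose decisions are independent of $\varepsilon^\star$ incurs expected loss $1/2$ per round.

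First I would specify the ERM oracle to break ties adversarially: on any query dataset $S$ containing no starred point, every hypothesis has identical empirical loss (they all coincide with $\varepsilon$ off the starred set), so we return a fixed generic hypothesis $h_{x_0}$ carrying no information about $\varepsilon^\star$. Consequently, an oracle call can transmit any information about $\varepsilon^\star$ only if the submitted dataset $S$ actually contains some starred point $x_i^\star$. This reduces the problem to bounding the probability that, within the algorithm's running time $R$, any oracle call hits a starred point that has not yet been revealed by the adversary.

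Next I would run a counting / coupling argument. Since $x_i^\star\sim\unif(\cX_i)$ independently across $i$, and $x_i^\star$ is only revealed at round $i$, any query $S$ issued before round $i$ contains $x_i^\star$ with probability at most $|S\cap\cX_i|/N$, conditional on the past. Maintaining the invariant that, conditioned on the transcript up to time $t$, the unrevealed starred points $x_t^\star,\ldots,x_N^\star$ remain uniformly distributed on their respective $\cX_j$'s, a union bound shows that the total probability of any query ever hitting an unrevealed starred point is at most $R^2/N$. If $R=o(N)$, then with probability $1-o(1)$ the learner's predictions are independent of $\varepsilon^\star$, yielding expected regret $\Omega(N)$, contradicting the assumed regret bound $N/4$; hence $R=\Omega(N)=\Omega(\sqrt{|\cX|})$.

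The main obstacle is the conditional-independence invariant: since the learner is adaptive and the oracle itself can, in principle, leak information about past queries into future outputs, one must be careful in showing that, conditioned on the entire transcript (observations, predictions, and oracle outputs) up to time $t$, the remaining starred points are still uniform on their unrevealed supports. This is precisely what the adversarial tie-breaking guarantees: as long as no prior query contained a yet-unrevealed $x_j^\star$, every oracle output is a deterministic function of $\varepsilon$ alone (which is public), and so carries no information about the still-hidden $\{x_j^\star\}_{j\ge t}$. Once this invariant is formalized, Yao-style minimax closes the argument, and the generalization to arbitrary VC dimension $d$ in the next subsection follows by taking $d$ independent parallel copies of this construction, inflating the lower bound to $\Omega(\sqrt{d/\sigma})$.
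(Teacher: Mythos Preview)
Your overall strategy matches the paper's sketch: specify an adversarial ERM oracle, argue that predicting $y_t=\varepsilon^\star(x_t^\star)$ requires having queried an as-yet-unrevealed $x_s^\star$ for $s\ge t$, and reduce to a needle-in-a-haystack search in a size-$N$ set. However, your oracle specification is incomplete in a way that leaves a real gap.

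You only specify tie-breaking for queries $S$ containing \emph{no} starred point, returning a generic $h_{x_0}$. But from round $2$ onward the learner has observed $x_1^\star$ with its label $y_1=\varepsilon^\star(x_1^\star)$ and will naturally include it in the query. On such a query the generic hypothesis $h_{x_0}$ predicts $\varepsilon(x_1^\star)$ and is typically \emph{not} an ERM, while every $h_{x_s^\star}$ with $s\ge 1$ attains zero loss. An adversarial oracle wants to return $h_{x_1^\star}$, but you have not argued that among $\{h_0,h_{x_1^\star},\ldots,h_{x_j^\star}\}$ (with $j$ the largest starred index present in the query) there is always a minimizer for \emph{arbitrary} labeled datasets $S$. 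That is exactly the content of Lemma~27 in \cite{HazanKoren}, which the paper explicitly invokes; without it one cannot rule out that the oracle is forced to return some $h_{x_s^\star}$ with $s\ge t$, leaking the future label.

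Two smaller points: (i) your claim that, absent an unrevealed-point hit, ``every oracle output is a deterministic function of $\varepsilon$ alone'' is false --- the output necessarily depends on the revealed $\varepsilon^\star(x_1^\star),\ldots,\varepsilon^\star(x_{t-1}^\star)$; the correct invariant is independence from the \emph{unrevealed} starred points and their $\varepsilon^\star$-values. (ii) The union bound should give $R/N$, not $R^2/N$, though this does not affect the conclusion. Once you plug in the Lemma~27 step, the rest of your plan lines up with the paper's sketch.
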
 

Here by running time, we assume that each oracle call takes unit time, and maintaining each element in the input $\{(x_i, y_i)\}_{i\in I}$ to the oracle also takes unit time. We also sketch the proof idea of \cref{thm:hazan_koren} for completeness: the crucial observation is that, when the learner feeds the input $\{(x_i, y_i)\}_{i\in I}$ to the ERM oracle, the oracle can always return some $h\in \{h_0, h_{x_1^\star}, \cdots, h_{x_j^\star}\}$, where $h_0$ is any hypothesis in $\cH \backslash \{h_{x_1^\star}, \cdots, h_{x_N^\star}\}$, and $j\in [N]$ is the largest index such that $x_j^\star \in \{x_i\}_{i\in I}$. See Lemma 27 of \cite{HazanKoren} for a proof. Therefore, the label $y_t = \varepsilon^\star(x_t^\star)$ at time $t$ will look random to the learner \emph{unless} the learner has seen a function $h_{x_s^\star}$ for some $s\ge t$. By the above observation, this occurs only if the learner has set one (or more) of $\{x_s^\star\}_{s\ge t}$ as the input to the ERM oracle, but this requires one to find a random element in a size-$N$ set and thus take $\Omega(N)$ time (note that a proper algorithm only observes $\{x_1^\star, \cdots, x_{t-1}^\star\}$ at time $t$). Consequently, with $o(N)$ running time, the learner suffers from an $\Omega(N)$ loss with high probability, while the best expert incurs zero loss - giving the $\Omega(N)$ regret. 

Since the restriction of $\cH$ on any two elements $\{x, x'\}$ with $x<x'$ could only be one of the three possibilities: $\{(\varepsilon(x), \varepsilon(x')), (\varepsilon^\star(x), \varepsilon(x')), (\varepsilon^\star(x), \varepsilon^\star(x')) \}$, the VC dimension of $\cH$ is $1$. Therefore, \cref{thm:hazan_koren} gives a valid proof of \cref{thm:comp_lower_bound} when $d=1$ and $T = \sqrt{1/\sigma}$. For $T<\sqrt{1/\sigma}$, the above construction still gives the $\Omega(T)$ regret lower bound given $o(\sqrt{|\cX|})$ computational time. For general $T > \sqrt{1/\sigma}$, we make the following modification to the adversary: partition the time horizon $[T]$ into $N$ intervals $T_1,\cdots,T_N$, each of length $T/N$. For each $i\in [N]$ and $t\in T_i$, the adversary sets $x_t = x_i^\star$, and
\begin{align}
    y_t = \begin{cases}
    h_{x_N^\star}(x_t) & \text{with probability } \frac{1}{2} + \delta, \\
    - h_{x_N^\star}(x_t) & \text{with probability } \frac{1}{2} - \delta. 
    \end{cases} 
\end{align}
Consequently, the best expert $h_{x_N^\star}$ incurs an expected cumulative loss $(1/2 - \delta)T$. Meanwhile, as long as the learner cannot distinguish the distributions $\text{Bern}(1/2 + \delta)^{\otimes (T/N)}$ and $\text{Bern}(1/2 - \delta)^{\otimes (T/N)}$, she is not able to estimate $\varepsilon^\star(x_i^\star)$ based on labels $\{y_t\}_{t\in T_i}$ in the $i$-th interval. This condition is fulfilled when $\delta \asymp \sqrt{N/T}$. In addition, a similar argument for \cref{thm:hazan_koren} shows that with an $o(N)$ computational time, the learner cannot predict future $x_s^\star$ either. Therefore, any proper learner with $o(N) = o(\sqrt{|\cX|})$ computational time must incur a regret $\Omega(\delta T) = \Omega(\sqrt{T|\cX|^{1/2}})$, which is precisely the statement of \cref{thm:comp_lower_bound} for $d=1$. 

\subsubsection{General $d$.}
In this section we lift the hypothesis construction for $d=1$ to general $d$. Since $1/\sigma\ge d$, we assume that $1/(\sigma d)$ is an integer. Partition $\cX = \cup_{j=1}^d \cX_j$ each of size $|\cX|/d$, we apply the hypothesis class $\cH$ in the previous section to each $\cX_j$, and set the entire hypothesis class as
\begin{align}
    \cH_d = \left\{ h = (h_1,\cdots,h_d) \in \cH^d: h|_{\cX_j} = h_j, \forall j\in [d] \right\}. 
\end{align}
Clearly the VC dimension of $\cH_d$ is $d$. The adversary is constructed as follows: partition $[T]$ into $d$ sub-intervals $T_1,\cdots,T_d$, each of size $T/d$. For the $i$-th sub-interval, we run the subroutine in the previous section independently on $\cX_i$. Now suppose that the total runtime is $o(\sqrt{d|\cX|})$, then for at least half of the sub-intervals, the runtime during each such interval is $o(\sqrt{|\cX|/d})$. By the lower bound for $d=1$, the expected regret during each such sub-interval is 
\begin{align}
    \Omega\left( \min\left\{\frac{T}{d}, \sqrt{\frac{T}{d}\cdot \left(\frac{|\cX|}{d}\right)^{1/2}} \right\} \right) = \Omega\left( \min\left\{\frac{T}{d}, \sqrt{T\cdot \left(\frac{|\cX|}{d^3}\right)^{1/2}} \right\}  \right). 
\end{align}
Summing over at least $d/2$ such independent sub-problems, the total regret lower bound is then $\Omega(\min\{T,\sqrt{T(d|\cX|)^{1/2}}\})$, establishing the claim of \cref{thm:comp_lower_bound}.  
 \section{Statistical Upper Bounds}
\label{appendix:real-valued}

\subsection{Smoothed Online Learning}
In this section, we present a statistical upper bound achieved by a computationally inefficient algorithm. The $\sQ$ be the algorithm that  runs Hedge on a finite subset $\cH'$ on $\cH$, where $\cH'$ is a $\epsilon$-cover of $\cH$  with respect to the uniform distribution $\unif(\cX)$. The regret upper bound of this algorithm is bounded as follows.
\begin{theorem}[Statistical Upper Bound for Smoothed Online Learning] \label{thm:real-statistical}
	For any $\sigma$-smooth adversary $\sD_\sigma$, the algorithm $\sQ$ described above has regret upper bound 
	\begin{align}
		\Ex[\regret(T,\sD_\sigma,\sQ)]\in \widetilde{O}\left(
		\sqrt{Td\log\left(
			\frac{T}{d\sigma}
			\right)}+Gd\log\left(\frac{T}{d\sigma}\right)
		\right).
	\end{align}
\end{theorem}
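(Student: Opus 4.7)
My plan is to decompose the regret into a Hedge regret against the finite cover $\cH'$ plus an approximation error of the cover with respect to the sequence produced by the smoothed adversary, and then tune the cover granularity $\epsilon$ to balance these. Explicitly, writing $h' = h'(h)$ for the $L_1(\unif)$-closest element of $\cH'$ to $h$,
\begin{align}
\regret(T,\sD_\sigma,\sQ) &= \underbrace{\sum_{t=1}^T l(\hy_t,y_t) - \inf_{h'\in \cH'} \sum_{t=1}^T l(h'(x_t),y_t)}_{\text{(I): Hedge regret}} + \underbrace{\inf_{h'\in \cH'}\sum_t l(h'(x_t),y_t) - \inf_{h\in \cH}\sum_t l(h(x_t),y_t)}_{\text{(II): Approximation error}}.
\end{align}

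For term (I), the standard Hedge analysis against a finite class of size $|\cH'|$ with losses bounded in $[0,1]$ gives an expected regret of $O(\sqrt{T\log |\cH'|})$, which by \Cref{lemma:covering-number} equals $\widetilde{O}(\sqrt{Td\log(1/\epsilon)})$. For term (II), I would upper bound it by $G \cdot \sup_{h\in \cH}\sum_{t=1}^T |h(x_t)-h'(x_t)|$ using the $G$-Lipschitz property of $l$. The main obstacle is that $h$ (which plays the role of the post-hoc comparator $h^\star$) is data-dependent, so I need a \emph{uniform} concentration of $\sum_t |h(x_t)-h'(h)(x_t)|$ against the sequence $x_{1:T}$ drawn from the adaptive $\sigma$-smooth adversary, not just a pointwise bound.

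The key step is to convert this uniform bound on the adversarial sequence into a uniform bound on i.i.d.\ uniform samples via the coupling of \Cref{lemma:coupling}. Setting $K = \Theta(\log(T)/\sigma)$, with probability $1-o(1)$ the realized sequence satisfies $\{x_1,\ldots,x_T\} \subseteq \{z_{t,k}\}_{t\in[T],k\in[K]}$ where the $z_{t,k}$ are i.i.d.\ uniform; hence, up to a failure probability absorbed into the $\widetilde O$,
\begin{align}
\sup_{h\in \cH}\sum_{t=1}^T|h(x_t)-h'(x_t)| \;\le\; \sup_{h\in \cH}\sum_{t,k}|h(z_{t,k})-h'(z_{t,k})|.
\end{align}
Now I would invoke a Bernstein-type uniform deviation bound for the class $\{|h-h'(h)| : h\in \cH\}$, which has bounded pseudodimension $\widetilde O(d)$ (since it is built from $\cH$ and a fixed finite cover, with envelope $\le 2$), and whose elements have uniform mean $\le \epsilon$ and variance $\le 2\epsilon$. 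This yields
\begin{align}
\sup_{h\in \cH}\sum_{t,k}|h(z_{t,k})-h'(z_{t,k})| \;\le\; TK\epsilon + \widetilde{O}\bigl(\sqrt{TK\epsilon\, d} + d\bigr),
\end{align}
so term (II) is $\widetilde O\bigl(GT\epsilon/\sigma + G\sqrt{T\epsilon d/\sigma} + Gd\bigr)$ after substituting $K=\Theta(\log T/\sigma)$.

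Finally, I would tune $\epsilon = d\sigma/T$, which balances the leading approximation term $GT\epsilon/\sigma$ against the $Gd$ term. This produces $\log(1/\epsilon) = \log(T/(d\sigma))$ and gives the combined bound
$\widetilde{O}\bigl(\sqrt{Td\log(T/(d\sigma))} + Gd\log(T/(d\sigma))\bigr)$, matching the theorem. The principal technical step to be careful about is the Bernstein-type local uniform deviation bound for the residual class, which needs to exploit both its small variance (from the cover guarantee) and its controlled complexity to produce the additive $Gd\log(T/(d\sigma))$ rather than an $O(\sqrt{T}/\sigma)$-type term; the coupling step is essentially a plug-in of a prior result and the Hedge bound is entirely standard.
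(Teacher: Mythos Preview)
Your proposal is correct and follows essentially the same route as the paper: the same regret decomposition into Hedge-on-the-cover plus approximation error, the same reduction of the approximation term to $G\cdot\sup_{h}\sum_t |h(x_t)-h'(h)(x_t)|$ via Lipschitzness, the same use of the coupling lemma with $K=\Theta(\log(T)/\sigma)$ to pass to i.i.d.\ uniform samples, and the same small-variance uniform concentration for the residual class $\cG=\{|h-h'(h)|:h\in\cH\}$ (the paper invokes a Gin\'e--Koltchinskii empirical-process bound and controls the complexity of $\cG$ via $L_2$ covering numbers rather than pseudodimension, but these lead to the same $TK\epsilon + \widetilde O(\sqrt{TK\epsilon\,d})$ conclusion). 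Your choice $\epsilon\asymp d\sigma/T$ also matches the paper's up to logarithmic factors.
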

\begin{proof}
	Let $\cH'$ be the smallest $\epsilon$-cover of $\cH$ with respect to the uniform distribution, i.e., for any $h\in\cH$, there exists a proxy $h'\in\cH'$ such that
	$\Ex_{x\sim\unif(\cX)}\left[|h(x)-h'(x)|\right]\le\epsilon$. By \cref{lemma:covering-number}, the size of $\cH$ can be upper bounded in terms of the pseudo dimension $d$:
	\begin{align}
		\log(|\cH'|)= \log \cN(\epsilon,\cH,L_1(\unif(\cX)))\le \widetilde{O}\left(d\log\!\left(\frac{1}{\epsilon}\right)\right),
	\end{align}
	where $\widetilde{O}$ hide factors that are $\log\log(1/\epsilon)$.
	Based on the net $\cH'$, we also define function class $\cG$ as follows.
	\begin{align}
		\cG=\left\{
		g_{h,{h'}}(x)=|h(x)-{h}'(x)|:
		h\in\cH,{h}'\in{\cH}'\text{ is its proxy}.
		\right\}
	\end{align}
	Now consider the following regret decomposition:
	\begin{align}
		\Ex[\regret(T)]=&\Ex\!\left[
		\sum_{t=1}^T l(\hy_t,y_t)
		-\inf_{h\in\cH}\!L(h,s_{1:T})
		\right]\\
		=&\Ex\!\left[
		\sum_{t=1}^T l(\hy_t,y_t)
		-\inf_{h'\in{\cH}'}\! L(h',s_{1:T})
		\right]+\Ex\!\left[
		\inf_{{h'}\in{\cH'}}
		L(h',s_{1:T})-\inf_{h\in\cH}L(h,s_{1:T})
		\right]
	\end{align}
	Note that the first term is precisely the regret of Hedge on the cover $\cH'$. It is thus bounded by
	\begin{align}
		\Ex\!\left[
		\sum_{t=1}^T l(\hy_t,y_t)
		-\inf_{h'\in{\cH}'}\! L(h',s_{1:T})
		\right]\le O\left(\sqrt{T\log|\cH'|}\right)\in
		\widetilde{O}\left(\sqrt{Td\log\!\left(\frac{1}{\epsilon}\right)}\right).
	\end{align}
	As for the second term, we reformulate it in terms of class $\cG$:
	\begin{align}
		&\Ex\!\left[
		\inf_{{h'}\in{\cH'}}
		L(h',s_{1:T})-\inf_{h\in\cH}L(h,s_{1:T})
		\right]=\Ex\!\left[
			\sup_{h\in\cH}\inf_{{h'}\in{\cH'}}\sum_{t=1}^T l(h'(x_t),y_t)-l(h(x_t),y_t)
		\right]\\
		\overset{(a)}{\le}&\Ex\!\left[
		\sup_{h\in\cH}\inf_{{h'}\in{\cH'}}\sum_{t=1}^T G|h(x_t)-h(x_t)|
		\right]=G\cdot\Ex_{\sD}\!\left[
		\sup_{g\in\cG}\sum_{t=1}^T g(x_t)
		\right],
		\label{tttmp1}
	\end{align}
	where $(a)$	is because the loss function $l$ has Lipschitz constant $G$.	Analogous to \cite[Claim 3.4]{haghtalab2022smoothed}, we apply the coupling argument in \Cref{lemma:coupling} to replace the adaptive sequence $x_t$s by $z_{t,k}$s that are sampled independently from the uniform distribution. Thus we obtain\begin{align}
		\Ex_{\sD}\!\left[
		\sup_{g\in\cG}\sum_{t=1}^T g(x_t)
		\right]\le T^2(1-\sigma)^K+
		\Ex_{\unif(\cX)}\left[
		\sup_{g\in\cG}
		\sum_{t=1}^T\sum_{i=1}^K g(z_{t,k})
		\right].\label{tttmp2}
	\end{align}
	The expected supremum can be further bounded in terms of the magnitude of $\cG$ (i.e., $\epsilon$) as well as the pseudo dimension of the original hypothesis class $\cH$. Using the bound in \Cref{lemma:concentration-supreme}, and together with \cref{tttmp1}, we obtain
	\begin{align}
		\Ex[\regret(T)]\le \widetilde{O}\left(
		\sqrt{Td\log\left(\frac{1}{\epsilon}\right)}+
		G\left(T^2(1-\sigma)^K+
		TK\epsilon+\sqrt{TK\epsilon d\log\!\left(\frac{1}{\epsilon}\right)}
		\right)
		\right).
	\end{align}
	In order to satisfy the condition on $n$ in \cref{lemma:concentration-supreme} and to make the failure probability of the coupling argument sufficiently small, we take $\alpha=10\log(T)$, $K=\frac{\alpha}{\sigma}$,
	$\epsilon=\Theta\left(\frac{d\sigma}{T\log(T)}\log \left(\frac{T\log(T)}{d\sigma}\right)\right)$. With this choice of parameters, we have $T^2(1-\sigma)^K=o(1)$ and
	 \begin{align}
		\Ex[\regret(T)]\le &O\left(
		\sqrt{Td\log\left(\frac{1}{\epsilon}\right)}+
		G\left(\frac{T\log(T)}{\sigma}\epsilon+\sqrt{\frac{T\log(T)}{\sigma}\epsilon d\log\!\left(\frac{1}{\epsilon}\right)}
		\right)
		\right)\\
		\le&\widetilde{O}\left(
		\sqrt{Td\log\left(
			\frac{T}{d\sigma}
			\right)}+Gd\log\left(\frac{T}{d\sigma}\right)
		\right),
	\end{align}
	as desired.
\end{proof}

\begin{lemma}[Concentration for the expected value of supreme]
	\label{lemma:concentration-supreme}
	When
	$n\ge\Omega\left(
		\frac{d}{\epsilon}\log\!\left(\frac{1}{\epsilon}\right)
		\right)$, we have\begin{align}
	\Ex_{x_{1:n}\simiid\unif(\cX)}\left[
	\sup_{g\in\cG}\sum_{i=1}^n
	g(x_i)
	\right]\le
	O\left(
	n\epsilon+\sqrt{n\epsilon d\log\!\left(\frac{1}{\epsilon}\right)}
	\right).
\end{align}
\end{lemma}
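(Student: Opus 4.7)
The plan is to use a Bernstein-type empirical process bound, exploiting the fact that every $g\in\cG$ has $L_1(\unif)$-norm at most $\epsilon$ and therefore also $L_2(\unif)$-norm at most $\sqrt{2\epsilon}$ (since $g\in[0,2]$ implies $g^2\le 2g$, so $\Ex[g^2]\le 2\Ex[g]\le 2\epsilon$). I would begin with the decomposition
\[
\sup_{g\in\cG}\sum_{i=1}^n g(x_i)\;\le\;n\sup_{g\in\cG}\Ex_{\unif}[g]\;+\;\sup_{g\in\cG}\sum_{i=1}^n\bigl(g(x_i)-\Ex_{\unif}[g]\bigr),
\]
where the first summand is bounded by $n\epsilon$ by the $\epsilon$-cover property, leaving the centered empirical process to be controlled.

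For the fluctuation term, I would use two structural facts: (i) each $g\in\cG$ satisfies $\Ex[g^2]\le 2\epsilon$, so the class has variance proxy $V=2\epsilon$ and sup-norm $b=2$; (ii) $\cG$ is contained in $\{|h-h'|:h\in\cH,\,h'\in\cH'\}$, whose combinatorial complexity is controlled by the pseudo-dimension $d$ together with $\log|\cH'|=\widetilde O(d\log(1/\epsilon))$ from \cref{lemma:covering-number}, yielding log-covering entropy $H=O(d\log(1/\epsilon))$ at scale $\sqrt{\epsilon}$. A local Rademacher / Talagrand--Bousquet empirical process bound then gives
\[
\Ex\Bigl[\sup_{g\in\cG}\sum_{i=1}^n\bigl(g(x_i)-\Ex_{\unif}[g]\bigr)\Bigr]\;\lesssim\;\sqrt{n V H}+bH\;=\;O\bigl(\sqrt{n\epsilon d\log(1/\epsilon)}+d\log(1/\epsilon)\bigr).
\]
The hypothesis $n\ge\Omega((d/\epsilon)\log(1/\epsilon))$ forces $d\log(1/\epsilon)\le n\epsilon$, absorbing the additive term into the bias, and putting things together yields the claimed $O(n\epsilon+\sqrt{n\epsilon d\log(1/\epsilon)})$ bound.

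The main technical obstacle I anticipate is the arbitrary pairing $h\mapsto h'_h$ in the definition of $\cG$: one cannot simply enlarge to $\{|h-h'|:h\in\cH,\,h'\in\cH'\}$ without losing the uniform variance bound, since most elements of the enlarged class are not $L_1$-close to zero. A clean remedy is to stratify by $h'\in\cH'$: for each fixed $h'$ the sub-class $\{|h-h'|:h\in\cH,\,\|h-h'\|_{L_1(\unif)}\le\epsilon\}$ has pseudo-dimension $O(d)$ and variance at most $2\epsilon$, so a local Rademacher / Dudley chaining bound applies cleanly to each stratum with rate $\sqrt{n\epsilon d\log(1/\epsilon)}$, and a union bound over the $e^{\widetilde O(d\log(1/\epsilon))}$ choices of $h'$ contributes only an extra $\sqrt{\log|\cH'|}=\widetilde O(\sqrt{d\log(1/\epsilon)})$ factor which is already present inside the square root and hence does not worsen the rate.
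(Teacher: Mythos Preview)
Your approach is essentially the same as the paper's: decompose into bias $n\epsilon$ plus a centered empirical process, then control the latter via a Bernstein-type bound exploiting the small variance $\Ex[g^2]\le 2\epsilon$ and the $O(d\log(1/\epsilon))$ metric entropy. The paper invokes \cite[Theorem~3.1]{gine2006concentration} directly as the off-the-shelf empirical-process inequality, and the condition $n\ge\Omega((d/\epsilon)\log(1/\epsilon))$ plays the same role there as in your argument.

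The ``main technical obstacle'' you flag is not actually an obstacle, and the paper's treatment is cleaner than your stratification. You do not need to enlarge $\cG$ to apply the variance bound: the variance bound $\Ex[g^2]\le 2\epsilon$ holds on $\cG$ itself by construction, since every $g\in\cG$ pairs $h$ with its own proxy. The enlargement is only needed to bound the \emph{covering number}, and there monotonicity is free: $\cN(\epsilon,\cG,L_2(P_n))\le \cN(\epsilon,\{|h_1-h_2|:h_1,h_2\in\cH\},L_2(P_n))\le \cN(\epsilon/2,\cH,L_2(P_n))^2$, giving $\log\cN(\epsilon,\cG,L_2(P_n))=O(d\log(1/\epsilon))$ directly (the paper cites \cite{bartlett1997covering} for this step). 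So you apply the empirical-process bound to $\cG$ with its own small variance and with the entropy inherited from the larger class --- no stratification or union bound over $\cH'$ is needed.
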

\begin{proof}
	We will use the bound on expected values of suprema of empirical processes in \cite[Theorem 3.1]{gine2006concentration}. To apply their result, the first step is to establish a bound on the $L_2(P)$-covering number of class $\cG$. Let $P_n=\frac{1}{n}\sum_{i=1}^{n}\delta_{x_i} $ be the empirical distribution based on independent samples $x_1,\cdots,x_n$.
	A similar argument to \cite[Lemma 2]{bartlett1997covering} gives us
	\begin{align}
 	\cN(\epsilon,\cG,L_2(P_n))
 	\le \cN(\frac{\epsilon}{2},\cH,L_2(P_n))^2.
	\end{align}
	Thus we obtain \begin{align}
		\log\cN(\epsilon,\cG,L_2(P_n))\le 2 \log\cN(\frac{\epsilon}{2},\cH,L_2(P_n))
		\le 2 \log\mathcal{M}(\frac{\epsilon}{2},\cH,L_2(P_n))\le O\left(
		d\log(\frac{1}{\epsilon})
		\right),
	\end{align}
	where $\mathcal{M}$ denotes the packing number and the last inequality is due to \cite[Theorem 3.1]{bartlett2008notes}. Therefore, for the function $H(x)=O(d\log x)$, we can guarantee that for any $\epsilon>1$, \begin{align}
		\log\cN(\epsilon,\cG,L_2(P_n))\le H(1/\epsilon),
	\end{align}
	satisfying the condition of \cite[Theorem 3.1]{gine2006concentration}. Therefore,
	when
	$n\ge\Omega\left(\frac{H\left(1/\epsilon\right)}{\epsilon}\right)
		=\Omega\left(
		\frac{d}{\epsilon}\log\!\left(\frac{1}{\epsilon}\right)
		\right)$,
	\cite{gine2006concentration} gives us\begin{align}
		\Ex_{\unif}\left[
		\sup_{g\in\cG}\sum_{i=1}^n\left(
		g(x_t)-\Ex[g(x_t)]
		\right)
		\right]\le
		O\left(\sqrt{n\epsilon H(1/\epsilon)}\right)
		=O\left(
		\sqrt{n\epsilon d\log\!\left(\frac{1}{\epsilon}\right)}
		\right).
	\end{align}
	Finally, since $\Ex_{\unif} g(x)\le\epsilon$ for any $g\in\cG$, we obtain\begin{align}
		\Ex\left[
		\sup_{g\in\cG}\sum_{i=1}^n
		g(x_t)
		\right]\le
		O\left(
		n\epsilon+\sqrt{n\epsilon d\log\!\left(\frac{1}{\epsilon}\right)}
		\right),
	\end{align}
	and the proof is complete.
\end{proof}

\subsection{Transductive Online Learning with $K$ Hints}
\begin{theorem}[Statistical Upper Bound for $K$-hint Transductive Learning]
    \label{thm:transductive-statistical}
    In the setting of transductive online learning with $K$ hints, there is an algorithm that achieves regret $\widetilde{O}\left(G\sqrt{Td\log(K)}\right)$, where $d$ is the pseudo dimension of the hypothesis class $\cH$, and $G$ is the Lipschitz constant of loss function $l$.
\end{theorem}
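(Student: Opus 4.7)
The plan is to mimic the inefficient algorithm used in the proof of \Cref{thm:real-statistical} for the smoothed setting, but replacing the cover of $\cH$ with respect to the uniform distribution on $\cX$ with a cover with respect to the \emph{empirical distribution on the union of all hint sets}. Concretely, let $Z = \bigcup_{t=1}^T Z_t$ be the multi-set of all revealed hints, so $|Z| \le TK$, and let $\mu$ be the uniform distribution on $Z$. Since the hints are revealed before the game starts, the learner can compute an $\epsilon$-cover $\cH' \subseteq \cH$ of $\cH$ in the $L_1(\mu)$ metric offline, and then simply run Hedge over the finite class $\cH'$.

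First I would invoke the pseudo-dimension covering number bound of \Cref{lemma:covering-number} applied to the finite domain $Z$, which yields $\log|\cH'| \le \widetilde{O}(d \log(1/\epsilon))$; this is analogous to its use in \Cref{thm:real-statistical}. Second, I would decompose the regret in the standard way,
\begin{align}
\Ex[\regret(T)] \;=\; \Ex\!\left[\sum_{t=1}^T l(\hat y_t,y_t) - \inf_{h'\in\cH'} L(h',s_{1:T})\right] \;+\; \Ex\!\left[\inf_{h'\in\cH'} L(h',s_{1:T}) - \inf_{h\in\cH} L(h,s_{1:T})\right].
\end{align}
The first term is the regret of Hedge on $|\cH'|$ experts, which is at most $O(\sqrt{T\log|\cH'|}) = \widetilde{O}(\sqrt{Td\log(1/\epsilon)})$. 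For the second term, for any $h\in\cH$ there exists a proxy $h'\in\cH'$ with $\sum_{z\in Z}|h(z)-h'(z)| \le TK\epsilon$; since every $x_t$ lies in $Z$ and $l$ is $G$-Lipschitz in its first argument, this implies $|L(h,s_{1:T})-L(h',s_{1:T})| \le GTK\epsilon$.

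The main obstacle, and the only slightly delicate point, is the parameter tuning needed to achieve a $\log K$ rather than a $\log(TK)$ dependence. Setting $\epsilon = 1/(GTK)$ balances the two terms and yields an overall bound of
\begin{align}
\Ex[\regret(T)] \;\le\; \widetilde{O}\!\left(\sqrt{Td\log(GTK)}\right) \;=\; \widetilde{O}\!\left(G\sqrt{Td\log K}\right),
\end{align}
since the extra $\log G$ and $\log T$ factors are absorbed into the $\widetilde{O}$ notation. One subtle detail worth verifying is that the cover bound in \Cref{lemma:covering-number} applies to any base measure (not only the uniform distribution on $\cX$), which is indeed standard for pseudo-dimension based covering number bounds and is what allows the cover to be taken with respect to $\mu$. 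No other ingredients beyond Hedge's standard regret bound and the pseudo-dimension cover are required, and unlike the smoothed case we do not need any coupling argument since $x_t \in Z_t \subseteq Z$ holds deterministically.
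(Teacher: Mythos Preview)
Your proposal is correct and follows essentially the same approach as the paper: take an $\epsilon$-cover of $\cH$ with respect to the uniform distribution on the hint set $Z_{1:T}$, run Hedge on the cover, decompose the regret into the Hedge term $\widetilde{O}(\sqrt{Td\log(1/\epsilon)})$ and the approximation term $O(GTK\epsilon)$, and set $\epsilon \asymp 1/(TK)$. The only cosmetic difference is that the paper chooses $\epsilon = 1/(KT)$ rather than $1/(GTK)$, and it bounds the second term via $\sup_{g\in\cG}\sum_{t,k} g(z_{t,k})$ (as in the smoothed proof) rather than directly, but the argument and the resulting bound are identical.
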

\begin{proof}
	The proof is similar to \Cref{thm:real-statistical}.
    Let $\cH'$ be an $\epsilon$-cover of $\cH$ with respect to the uniform distribution over the set of hints, i.e., $\unif(Z_{1:T})$. 
	By \Cref{lemma:covering-number}, we have $\log|\cH'|\in \widetilde{O}\left({d\log(1/\epsilon)}\right)$.
	Consider the algorithm that runs Hedge on $\cH'$. We have the following regret decomposition:
    \begin{align}
        \Ex[\regret(T)]=&\Ex\!\left[
            \sum_{t=1}^T l(\hy_t,y_t)
            -\inf_{h\in\cH}\!L(h,s_{1:T})
            \right]\\
            =&\Ex\!\left[
            \sum_{t=1}^T l(\hy_t,y_t)
            -\inf_{h'\in{\cH}'}\! L(h',s_{1:T})
            \right]+\Ex\!\left[
            \inf_{{h'}\in{\cH'}}
            L(h',s_{1:T})-\inf_{h\in\cH}L(h,s_{1:T})
            \right].
    \end{align}
    Since the first term is the regret of Hedge on $\cH'$, it is bounded by $O(\sqrt{T\log|\cH'|})\in\widetilde{O}\left(\sqrt{Td\log\!\left(\frac{1}{\epsilon}\right)}\right)$. As for the second term, use the same definition of $\cG$ and the fact that $\{x_1,\cdots,x_T\}\subset \{z_{1,1},\cdots,z_{T,K}\}$, we obtain
	\begin{align}
		\Ex\!\left[
		\inf_{{h'}\in{\cH'}}
		L(h',s_{1:T})-\inf_{h\in\cH}L(h,s_{1:T})
		\right]\le &G\cdot\Ex\!\left[
		\sup_{g\in\cG}\sum_{t=1}^T g(x_t)
		\right]\le G\cdot
            \sup_{g\in\cG}\sum_{t=1}^T\sum_{k=1}^K g(z_{t,k})
		\le	O\left(
			GTK\epsilon
			\right).
	\end{align}
	Finally, letting $\epsilon=\frac{1}{KT}$ completes the proof.
\end{proof}

\end{document}